\documentclass{l4dc2025}


\title[Control-Theoretic Reinforcement Learning Approach]{A General Control-Theoretic Approach for Reinforcement Learning: Theory and Algorithms}
\usepackage{times}

\usepackage{helvet}  
\usepackage{courier}  
\urlstyle{rm} 
\frenchspacing  
\setlength{\pdfpagewidth}{8.5in} 
\setlength{\pdfpageheight}{11in} 
%
\usepackage{algorithm}
\usepackage{algpseudocode}
%
\usepackage{newfloat}
\usepackage{listings}

\usepackage{threeparttable}
\usepackage{multirow}
\usepackage[english]{babel}
\usepackage[utf8]{inputenc}
\usepackage{fancyhdr}
\usepackage{makecell}

\usepackage{color}
\usepackage{fancyhdr}



\usepackage{color}
\usepackage{amsfonts}
\usepackage{array}
\usepackage{indentfirst}
\usepackage{optidef}
\usepackage{caption}
\usepackage{array}
\usepackage{float}
\usepackage{booktabs}
\usepackage{multirow}
\usepackage{pifont}
\usepackage{threeparttable}

\newcommand \coolgrey[1]      {{\color[rgb]{0.55,0.57,0.67}#1}}

\DeclareMathAlphabet{\mathpzc}{OT1}{pzc}{m}{it}
\DeclareMathOperator*{\argmax}{\arg\max}
\DeclareMathOperator*{\argmin}{\arg\min}

\newtheorem{assumption}{Assumption}

\newcommand{\bT}{{\mathbf{T}}}

\newcommand{\bbF}{{\mathbb{F}}}

\newcommand{\bbI}{{\mathbb{I}}}

\newcommand{\bbQ}{{\mathbb{Q}}}

\newcommand{\bbV}{{\mathbb{V}}}
\newcommand{\bbX}{{\mathbb{X}}}

\newcommand{\ex}{\mathbb{E}}
\newcommand{\pr}{\mathbb{P}}

\newcommand{\var}{\mbox{\sf Var}}

\newcommand{\beqn}{\begin{equation}}
\newcommand{\eeqn}{\end{equation}}



\newcommand{\cF}{{\mathcal{F}}}
\newcommand{\cG}{{\mathcal{G}}}

\newcommand{\cT}{{\mathcal{T}}}

\newcommand{\ra}{\rightarrow}           



\newcommand\Reals{{\mathbb{R}}}
\newcommand\Ints{{\mathbb{Z}}}

\newcommand{\setA}{{\mathbb{U}}}
\newcommand{\setPA}{{\mathbb{W}}}
\newcommand{\setQ}{{\bbQ}}
\newcommand{\setV}{{\bbV}}
\newcommand{\setX}{{\bbX}}
\newcommand{\rewardR}{{r}}

\newcommand{\BellmanOp}{{\cT_{B}}}

\newcommand{\ContractionOp}{{\bT}}



\author{%
 \Name{Weiqin Chen} \Email{weiqin.chen.work@gmail.com}\\
 \addr Department of Electrical, Computer, and Systems Engineering, Rensselaer Polytechnic Institute%
 \AND
 \Name{Mark S. Squillante} \Email{mss@us.ibm.com}\\
 \addr Mathematical Sciences Department, IBM Research%
 \AND
 \Name{Chai Wah Wu} \Email{cwwu@us.ibm.com}\\
 \addr Mathematical Sciences Department, IBM Research%
 \AND
 \Name{Santiago Paternain} \Email{paters@rpi.edu}\\
 \addr Department of Electrical, Computer, and Systems Engineering, Rensselaer Polytechnic Institute%
}

\begin{document}

\maketitle

\thispagestyle{empty}

\begin{abstract}%
 We devise a control-theoretic reinforcement learning approach to support direct learning of the optimal policy. We establish various theoretical properties of our approach, such as convergence and 
 optimality of our analog of the Bellman operator and $Q$-learning,
 a new control-policy-variable gradient theorem, and a specific gradient ascent algorithm based on this theorem
 within the context of
 a specific control-theoretic framework. We
 empirically evaluate
 the performance of our
 control-theoretic 
 approach on several classical reinforcement learning tasks, demonstrating significant improvements in solution quality, sample complexity, and 
 running time 
 of our 
 approach 
 over state-of-the-art methods.
\end{abstract}

\begin{keywords}%
    Reinforcement learning, Control theory, $Q$-learning, Policy gradient methods%
\end{keywords}


\section{Introduction}
\label{sec:introduction}
For many years now, 
numerous
reinforcement learning (RL) 
approaches, with differing degrees of success, have been developed to address
a wide variety of decision making under uncertainty problems~\citep{feng2024suf,
KaLiMo96, mcmahan2024optimal, shen2024multi, Szep10,Sutton1998, zheng2023adaptive}. 
Model-free RL methods~\citep{RanAls98,MnKaSi+13} can often suffer from high sample complexity that may require an inordinate amount of samples for some problems, making them unsuitable for various applications where collecting large amounts of data is time-consuming, costly and potentially dangerous for the system and its surroundings~\citep{zhang2024implicit, chen2024probabilistic, peng2023weighted, dong2023model, kumar2020conservative}. On the other hand, model-based RL methods
have
been
successful in demonstrating
significantly reduced
sample complexity and
in outperforming
model-free approaches
for various decision making under uncertainty 
problems; see, e.g.,~\cite{DeiRas11,MeHiXu+15}.
However, such model-based approaches can
suffer from the difficulty of learning an appropriate model and from worse asymptotic
performance than model-free approaches due to model bias from inherently assuming the learned system dynamics model accurately represents
the true system
environment; see, e.g.,~\cite{AtkSan97,Schn97,Scha97}.

In this paper, we propose a novel form of RL
that exploits optimal control-theoretic methods to solve the general problem formulation in terms of the unknown independent variables of the 
underlying control problem 
and that directly learns these unknown variables through an iterative solution process that applies the corresponding optimal control policy.
This general approach is in strong contrast to many traditional model-based
RL
methods that, after learning the system dynamics model which is often of high complexity and dimensionality, then use this system dynamics model to compute an 
approximate 
solution of a corresponding (stochastic) dynamic programming problem, often applying model predictive
control; see, e.g.,~\cite{NaKaFe+18}.
Our
control-based RL (CBRL)
approach instead directly learns 
the unknown independent variables of the 
general underlying (unknown) dynamical system
from which we 
derive
the optimal control policy, often of much lower complexity and dimensionality,
through control-theoretic means.

The theoretical foundation and analysis of our CRBL approach are presented within the context of a general Markov decision process (MDP) framework
that ($i$) extends the methodology from the family of policies associated with the classical Bellman operator to a family of control-policy functions mapping a vector of (unknown) variables from a corresponding independent variable set to a control policy which is optimal under those variables;
and ($ii$) extends the domain of these control policies from a single state to span across all (or a large subset of) states, 
with the (unknown) variable vector encoding global and local information that needs to be learned.
Within the context of this MDP framework and 
our general CBRL approach, we establish theoretical results on convergence and optimality with respect to (w.r.t.)
a CBRL operator and a CBRL version of $Q$-learning, analogous to corresponding results for the Bellman operator
and classical $Q$-learning, respectively.
One might potentially consider our CBRL approach to be somewhat related to previous efforts on learning a parameterized policy within an MDP framework to reduce sample complexity,
such as policy gradient methods~\citep{NIPS1999_464d828b,Sutton1998} and their variants including neural network based policy optimization approaches~\citep{schulman2015trust,schulman2017proximal,JMLR:v22:19-736}.
Despite any potentially perceived similarities, it is important to note that our CBRL approach
is fundamentally different 
from 
policy gradient methods
in several important ways. 
More specifically, ($i$) we do not consider parameterized policies within the general MDP framework, ($ii$) we instead exploit control-theoretic methods to derive the optimal policy in terms of estimates of a few 
unknown (global and local) 
independent
variables of the corresponding control problem, and ($iii$) we directly learn these unknown variables in an iterative manner based on observations from applying the optimal control policy for the current estimate of variables.
Meanwhile, within the context of
our
general 
CBRL approach, we establish a new control-policy-variable gradient theorem, analogous to the standard policy gradient theorem, together with a corresponding gradient ascent method that comprises an iterative process for directly learning the (unknown) variable vector.

With its foundation being optimal control, our CBRL approach is particularly suitable for dynamical systems in general and thereby provides the optimal control policy for a wide variety of systems. In addition to its established theoretical properties, numerical experiments of various 
classical decision making under uncertainty
tasks empirically demonstrate the effectiveness and performance benefits of our CBRL approach in reducing the number of samples needed, which is a key requirement for the application of learning and decision-making algorithms in 
real-world
systems.

The remainder of the paper is organized as follows.
We
first present our general CBRL 
approach,
which includes establishing various theoretical properties of our approach.
Numerical results are presented next, 
followed by 
concluding remarks.
All proofs and additional
details and results are provided 
in the appendices.







\section{CBRL 
Approach}
\label{sec:CBRLapproach}
Consider a standard RL framework (see, e.g.,~\cite{Sutton1998, BerTsi96}) in which a decision-making agent interacts with a stochastic environment modeled as 
an MDP
defined over a set of states $\setX$, a set of actions $\setA$, a transition probability kernel $\pr$, and a reward function $\rewardR$ mapping state-action pairs to a bounded subset of $\Reals$. 
The agent seeks to determine a policy $\pi(\cdot|x): \setX\ra \setA$ comprising the sequence of control actions $u_1, u_2, \ldots$ that maximizes a
discounted infinite-horizon stochastic dynamic programming (sDP) formulation expressed as 
\begin{equation}
\max_{u_1, u_2, \ldots} \ex_\pr\bigg[ \sum_{t=0}^\infty \gamma^t \rewardR(x_t, u_t) \bigg] ,
\label{eqn:general_control:max}
\end{equation}
where
$x_t \in \setX$ denotes the state of the system at time $t\in\Ints_+$,
$u_t \in \setA$ the control action at time $t$,
$\gamma \in (0,1)$ the discount factor,
and
$\ex_{\pr} := \ex_{x_{t+1} \sim \pr(\cdot | x_t,u_t)}$ defines expectation
w.r.t.\ 
the conditional transition probability for the next state
$\pr(\cdot | x_t,u_t)$.
The
deterministic policy $\pi(\cdot|x): \setX\ra \setA$ defines the control action given the current state $x$,
for which we simply
write
$\pi(x)$. 
Let $\setQ(\setX \times \setA)$ denote the space of bounded real-valued functions over $\setX \times \setA$ with supremum norm,
and
let $\Pi$ denote the set of all
stationary 
policies.
From standard
theory, define the Bellman operator
$\BellmanOp$ on $\setQ(\setX \times \setA)$
as
\begin{equation}
\BellmanOp Q(x,u) := \rewardR(x,u) + \gamma \ex_\pr \max_{u^\prime \in \setA} Q(x^\prime, u^\prime) ,
\label{eqn:BellmanOperator}
\end{equation}
where
$x^\prime$ denotes the next state upon transition from $x$;
we note that~\eqref{eqn:BellmanOperator} is also referred to in the research literature as the Bellman optimality operator.
Define $Q^\ast(x,u)$, $V^\ast(x):=\max_{u\in\setA} Q^\ast(x,u)$, and $\pi^\ast(x) := \argmax_{u\in\setA} Q^\ast(x,u)$ to be the optimal action-value function, the optimal value function, and the optimal deterministic stationary policy, respectively.

Our
CBRL
approach consists of exploiting control-theoretic methods to solve the general sDP formulation~\eqref{eqn:general_control:max} 
in terms of the corresponding unknown independent variables,
and directly learning these unknown variables 
over time through an associated iterative solution process.
In particular, our general approach considers two key ideas:
($i$)
extending the solution methodology from 
the family of policies $\Pi$ associated with classical RL to a family of control-policy functions that map a variable vector $v$ from an independent variable set to a control policy that is optimal
under
the independent variable vector 
$v$;
and
($ii$) extending the domain of these control policies from a single state to span across all 
(or a large subset of) 
states in $\setX$, with the independent variable vector $v$ encoding
both 
global and local information
(e.g., gravity and mass)
that 
is unknown and 
needs 
to be learned.
%

More formally,
let $\setV$ 
denote
a subset of a metric space serving as
the set of 
variable vectors,
and
$\bbF$ 
the family of control-policy functions
whose elements comprise surjective functions $\cG : v\mapsto \cF_v$ that map a variable vector $v\in \setV$ to a control policy $\cF_v :\setX\rightarrow \setA$ that is optimal under $v$.
Let $\tilde{\Pi}\subseteq \Pi$ denote the set of all stationary policies that are directly determined by the control-policy functions $\cG(v)$ over all $v\in\setV$.
For any $v\in\setV$, the control-policy function $\cG(\cdot) \in \bbF$ derives a particular control policy $\cF_v \in \tilde{\Pi}$ that provides the best 
expected cumulative discounted reward
across all
states $x\in\setX$ from among all control-policy functions in $\bbF$.
Such control policy mappings $\cG:\setV\ra \tilde{\Pi}$ derive optimal control policies from vectors in the
variable set $\setV$ through control-theoretic methods, with the goal of learning the unknown variable vector
$v\in \setV$ 
through a corresponding iterative solution process.
Hence, once the variable vector is learned
with sufficient accuracy, 
the desired optimal control policy is realized.

The control policy mappings
$\cG(\cdot)$
that
derive an optimal control policy for each 
independent-variable vector $v\in\setV$ 
come from a specific control-theoretic framework chosen to be combined as part of our CBRL approach for this purpose.
In practice, this selected control-theoretic framework of interest may be an approximation and not provably optimal for the underlying sDP formulation~\eqref{eqn:general_control:max}, as long as it is sufficiently accurate for the RL task at hand.
As a representative example,
we focus in this paper on 
our CBRL approach in combination with the
linear quadratic regulator (LQR) framework
within which the system dynamics are linear and the objective function is quadratic.
More formally, the LQR system dynamics are governed by
$\dot{x} = A_{v} x +B_{v} u$
with initial state $x_0$,
where the
vector $v$ is contained in the matrices $A_{v}$ and $B_{v}$;
and the LQR objective function to be 
minimized 
is given by
$\int_0^\infty (x_t^\top \underline{Q}_{v} x_t + u_t^\top R_{v} u_t) dt$,
where
the
vector $v$ may be contained in the matrices $\underline{Q}_{v}$ and $R_{v}$.
Here the matrices $A_{v}$ and $B_{v}$ respectively define the linear system dynamics and the linear system control; and
the matrices $\underline{Q}_{v} = \underline{Q}_{v}^\top \succeq 0$ and $R_{v} = R_{v}^\top \succ 0$ respectively define the quadratic system 
costs
and the quadratic control 
costs.
Within the context of this LQR framework,
the vector $v$ comprises the $d$ unknown real-valued independent variables contained in the LQR problem formulations, and thus the independent variable set
$\bbV \subseteq \Reals^d$;
the family of control-policy functions $\bbF$ comprises all mappings from $\bbV$ to the corresponding set of LQR problem formulations 
(i.e., $\bbF = \{ (\min P_v), \: \forall v \in \setV \}$, where $P_v := \int_0^\infty (x^\top \underline{Q}_{v} x + u^\top R_{v} u) dt , \; \dot{x} = A_{v} x +B_{v} u$);
and
the control-policy function $\cG(\cdot) \in \bbF$ maps a given
$v$ to the specific corresponding LQR problem formulation 
(i.e., $\cG(v) = (\min P_v)$, for fixed $v \in \setV$),
whose solution is the linear control policy $\cF_v \in \tilde{\Pi}$ 
(i.e., $\cF_v = (\argmin P_v)$, for fixed $v \in \setV$)
which renders the action $\cF_v(x)$ taken in state $x\in\setX$.


In the remainder of this section, we establish various theoretical properties of our CBRL approach
including convergence, optimality, and control-policy-variable gradient ascent.

\subsection{Convergence and Optimality}
\label{ssec:converge-opt}
We consider an analog of the Bellman operator
$\BellmanOp$
within our general CBRL approach
and derive a set of related theoretical results on convergence and optimality.
For each $Q$-function $q(x,u) \in \setQ(\setX\times \setA)$, 
we 
define the
generic 
CBRL function $\tilde{q}:\setX\times \tilde{\Pi}  \rightarrow \Reals$ 
as $\tilde{q}(x,\cF_{v}) := q(x,\cF_{v}(x))$ where the control policy $\cF_v$ is derived from the control-policy function $\cG$ for a given variable vector
$v \in \setV$, 
and thus
$\tilde{q}(x,\cF_{v})\in \tilde{\setQ}(\setX\times \tilde{\Pi})$ with
$\tilde{\setQ}(\setX \times \tilde{\Pi})$ denoting the space of bounded real-valued functions over $\setX \times \tilde{\Pi}$ with supremum norm.
We then define our CBRL operator $\ContractionOp$ on $\tilde{\setQ}(\setX\times \tilde{\Pi})$ as
\begin{align}
\label{eqn:ContractionOp}
(\ContractionOp \tilde{q})(x,\cF_v) :=  \sum_{y\in \setX} &
\; \pr(y \, | \, x, \cF_v) 
\Big[r(x,\cF_v(x)) + \gamma \sup_{v^\prime \in\setV} \tilde{q}(y,\cF_{v^\prime}) \Big] .
\end{align}
Our approach comprises a 
sequence of steps 
from the family $\bbF$ to the
control-policy functions $\cG \in \bbF$ to the control policies $\cF_v \in \tilde{\Pi}$ to the control action $\cF_v(x) \in\setA$, upon applying $\cF_v$ in state $x\in\setX$.
It is important to note that the supremum in~\eqref{eqn:ContractionOp} is taken over all independent-variable vectors in $\setV$, where the control-policy function $\cG(v^\prime)$ for each $v^\prime\in\setV$ derives through control-theoretic means the corresponding control policy $\cF_{v^\prime}$,
and thus taking the supremum over $\setV$ in~\eqref{eqn:ContractionOp} is equivalent to taking the supremum over $\tilde{\Pi}$.
We next introduce an important assumption 
on the richness of the
family $\bbF$ within
the sequence of steps of our approach from $\bbF$ to $\cF_v(x)$.
%
\begin{assumption}
\label{asm:richness}
There exist a policy function $\cG$ in the family $\bbF$ and a unique 
variable 
vector $v^\ast$ in the 
independent-variable set $\setV$ such that, for any state $x\in\setX$, 
$\pi^\ast(x) = \cF_{v^\ast}(x)=\cG(v^\ast)(x)$.
\end{assumption}

Intuitively,
Assumption~\ref{asm:richness}
says that $\bbF$ is rich enough to include a global control policy that coincides with the Bellman operator for each state.
We then have the following formal result on the convergence of the operator $\ContractionOp$ of our 
CBRL
approach and the optimality of this convergence 
w.r.t.\
the Bellman equation~\citep{Sutton1998, BerTsi96}.
\begin{theorem}\label{contraction}
For any $\gamma \in(0,1)$, the operator $\ContractionOp$ in~\eqref{eqn:ContractionOp} is a contraction in the supremum norm.
Moreover, the sequence defined by the iterative process $\tilde{q}_{t+1}= \ContractionOp (\tilde{q}_{t})$ converges with rate $\gamma$ to $\tilde{q}^\ast$ as $t\rightarrow\infty$ such that
\begin{equation*}
\| \tilde{q}_{t} - \tilde{q}^\ast \|_\infty \leq \gamma^t \| \tilde{q}_{0} - \tilde{q}^\ast \|_\infty .
\end{equation*}
Supposing Assumption~\ref{asm:richness} holds for the family of policy functions $\bbF$ and its variable set $\setV$,
the
contraction operator $\ContractionOp$ achieves the same asymptotically optimal outcome as that of
the Bellman operator
$\BellmanOp$.
\end{theorem}

Theorem~\ref{contraction}
guarantees
that, under the contraction operator $\ContractionOp$ and Assumption~\ref{asm:richness}, our 
CBRL
approach is optimal by realizing the same unique fixed point of
the Bellman operator
$\BellmanOp$.
In particular, from Theorem~\ref{contraction} for any function $\tilde{q} \in \tilde{\setQ}(\setX\times \tilde{\Pi})$, the iterations $\ContractionOp^t(\tilde{q})$ converge
exponentially with rate $\gamma$
as $t\rightarrow\infty$ to $\tilde{q}^\ast(x,\cF_v)$, the unique optimal fixed point of the 
CBRL
operator $\ContractionOp$, 
{where $\tilde{q}^\ast(x,\cF_v)$ satisfies
\begin{equation}
\label{eqn:fixed_point}
\tilde{q}^\ast(x,\cF_v) \!= \! \! \sum_{y\in \setX} 
\pr(y \, | \, x, \cF_v)
\Big[ r(x,\cF_v(x)) + \gamma \sup_{v^\prime\in\setV} \tilde{q}^\ast(y,\cF_{v^\prime})  \Big].
\end{equation}
}
%
We note, however, that this optimality is achieved with great reduction in the sample complexity due in part to another important difference 
with standard RL,
namely the search of our CBRL approach across all
states
$x\in\setX$
to find
an
optimal independent-variable vector
$v^\ast$ 
that identifies a single 
optimal control-policy function
$\cG^\ast \in \bbF$ 
which coincides with the Bellman equation for each state.
%

%
Now
consider convergence
of the $Q$-learning algorithm within the context of our general 
CBRL approach,
where
we focus on the following
CBRL version of the
classical $Q$-learning update rule \citep{Watk89}:
\begin{equation}\label{eqn:q-learn-new}
 \tilde{q}_{t+1}(x_t,\cF_{v,t}) = \tilde{q}_{t}(x_t,\cF_{v,t}) + \alpha_t(x_t,\cF_{v,t})\bigg[ r_t + \gamma \sup_{v^\prime\in\setV} \tilde{q}_t(x_{t+1},\cF_{v^\prime}) - \tilde{q}_t(x_t,\cF_{v,t})\bigg],
\end{equation}
for $0 < \gamma < 1$, $0 \leq \alpha_t(x_t,\cF_{v,t})\leq 1$ and iterations $t$. 
Let $\{ \cF_{v,t} \}$ be a sequence of control policies that covers all state-action pairs and $r_t$ the corresponding reward of applying $\cF_{v,t}$ to state $x_t$. We then have the following formal result on $Q$-learning convergence and 
the optimality of this convergence.
\begin{theorem}
\label{thm:QLearning}
Suppose 
Assumption~\ref{asm:richness} holds for the family of policy functions $\bbF$ and its independent-variable set $\setV$ with a contraction operator $\ContractionOp$ as defined in~\eqref{eqn:ContractionOp}.
If $\sum_t \alpha_t = \infty$, $\sum_t \alpha_t^2 < \infty$, and $r_t$ are bounded, then $\tilde{q}_{t}$ under the $Q$-learning update rule~\eqref{eqn:q-learn-new} converges to the optimal fixed point $\tilde{q}^*$ as $t\rightarrow\infty$
and
the optimal policy function is obtained from a unique variable vector $v^* \in \setV$.
\end{theorem}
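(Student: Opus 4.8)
The plan is to recast the $Q$-learning recursion~\eqref{eqn:q-learn-new} as an asynchronous stochastic-approximation scheme and invoke a Dvoretzky/Robbins--Monro-type convergence lemma (see, e.g.,~\cite{BerTsi96}), using the supremum-norm contraction established in Theorem~\ref{contraction} to supply the crucial mean-contraction condition. By Theorem~\ref{contraction}, $\ContractionOp$ admits a unique fixed point $\tilde{q}^\ast\in\tilde{\setQ}(\setX\times\tilde{\Pi})$ that realizes the same asymptotically optimal outcome as $\BellmanOp$; hence it suffices to prove $\tilde{q}_t\to\tilde{q}^\ast$ almost surely and then identify the associated variable vector.

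First I would introduce the error process $\Delta_t(x,\cF_v):=\tilde{q}_t(x,\cF_v)-\tilde{q}^\ast(x,\cF_v)$ and observe that, because $\tilde{q}_t(x,\cF_v)=q_t(x,\cF_v(x))$, the recursion~\eqref{eqn:q-learn-new} in effect updates at step $t$ only the single entry $q_t(x_t,\cF_{v,t}(x_t))=q_t(x_t,u_t)$ over $\setX\times\setA$; the assumption that $\{\cF_{v,t}\}$ covers all state--action pairs then forces every such entry to be visited infinitely often, which together with $\sum_t\alpha_t=\infty$ and $\sum_t\alpha_t^2<\infty$ yields the per-entry step-size conditions required by the lemma. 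Subtracting $\tilde{q}^\ast$ from both sides of~\eqref{eqn:q-learn-new} gives
\[
\Delta_{t+1}(x_t,\cF_{v,t}) = \bigl(1-\alpha_t(x_t,\cF_{v,t})\bigr)\Delta_t(x_t,\cF_{v,t}) + \alpha_t(x_t,\cF_{v,t})\,F_t(x_t,\cF_{v,t}),
\]
with $F_t(x_t,\cF_{v,t}):= r_t + \gamma\sup_{v^\prime\in\setV}\tilde{q}_t(x_{t+1},\cF_{v^\prime}) - \tilde{q}^\ast(x_t,\cF_{v,t})$.

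Next I would verify the two structural hypotheses of the lemma. Conditioning on the natural filtration $\mathcal{F}_t$ and using $x_{t+1}\sim\pr(\cdot\,|\,x_t,\cF_{v,t})$ together with~\eqref{eqn:ContractionOp} gives $\E[F_t\,|\,\mathcal{F}_t]=(\ContractionOp\tilde{q}_t)(x_t,\cF_{v,t})-\tilde{q}^\ast(x_t,\cF_{v,t})=(\ContractionOp\tilde{q}_t)(x_t,\cF_{v,t})-(\ContractionOp\tilde{q}^\ast)(x_t,\cF_{v,t})$, so by Theorem~\ref{contraction} we get $\norm{\E[F_t\,|\,\mathcal{F}_t]}_\infty\le\gamma\,\norm{\Delta_t}_\infty$ with $\gamma<1$. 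For the conditional second moment, boundedness of $r_t$ and boundedness of the iterates $\tilde{q}_t$---which I would establish by induction on $t$, since $0\le\alpha_t\le1$ makes each update a convex combination of a bounded current value and a bounded target---yield $\var[F_t\,|\,\mathcal{F}_t]\le C\bigl(1+\norm{\Delta_t}_\infty^2\bigr)$ for some constant $C$. The convergence lemma then delivers $\Delta_t\to0$ almost surely, i.e., $\tilde{q}_t\to\tilde{q}^\ast$. Finally, since $\tilde{q}^\ast$ is the common fixed point with $\BellmanOp$, Assumption~\ref{asm:richness} guarantees that $\sup_{v\in\setV}\tilde{q}^\ast(x,\cF_v)$ is attained at $v=v^\ast$ simultaneously for every $x\in\setX$ and that $v^\ast$ is the unique such vector, so the optimal control-policy function $\cG(v^\ast)\in\bbF$ is recovered.

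The principal obstacle I anticipate is reconciling the (possibly continuum) variable set $\setV$ with the asynchronous update: one must argue that $\sup_{v^\prime\in\setV}\tilde{q}_t(y,\cF_{v^\prime})$ is well defined and measurable, and that covering all state--action pairs in $\setX\times\setA$---rather than all of $\setX\times\tilde{\Pi}$---is the correct and sufficient exploration requirement, both of which rest on the reduction $\tilde{q}_t(x,\cF_v)=q_t(x,\cF_v(x))$ so that only the finitely many entries of $q_t$ are genuinely tracked. Establishing uniform boundedness of the iterates when $\setV$ is non-compact is the secondary delicate point; if necessary one projects the iterates onto a bounded set containing $\tilde{q}^\ast$, which does not alter the fixed point.
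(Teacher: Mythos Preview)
Your proposal is correct and follows essentially the same route as the paper: define the error process $\Delta_t=\tilde{q}_t-\tilde{q}^\ast$, identify the conditional mean of the innovation with $(\ContractionOp\tilde{q}_t)-(\ContractionOp\tilde{q}^\ast)$, use the $\gamma$-contraction from Theorem~\ref{contraction} for the mean bound and bounded rewards for the variance bound, and then invoke the standard stochastic-approximation convergence result (the paper cites Jaakkola et al.\ and Tsitsiklis rather than \cite{BerTsi96}, but these are the same family of lemmas). Your additional remarks on iterate boundedness and on the reduction $\tilde{q}_t(x,\cF_v)=q_t(x,\cF_v(x))$ to handle the continuum $\setV$ are more explicit than the paper's treatment, but do not constitute a different strategy.
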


Iterations $t$
w.r.t.\
either the operator $\ContractionOp$ in~\eqref{eqn:ContractionOp} or the $Q$-learning update rule
in~\eqref{eqn:q-learn-new}
then consist of improving the estimates of the independent-variable vector $v$ while applying the optimal control policy derived from the optimal control-policy function $\cG$ for the current variable vector estimate $v_t$ based on the control-theoretic framework of interest.
Within the context of the LQR control-theoretic framework as a representative example, 
it is well known from classical control theory
that the solution of the corresponding sDP is determined by solving the algebraic Riccati equation (ARE)~\citep{riccati:1995}, whose 
continuous-time 
version (CARE) is
expressed as
\begin{equation} \label{eqn:care}
A_{v}^\top P + P A_{v} - P B_{v} R_{v}^{-1} B_{v}^\top P + \underline{Q}_{v} = 0 .
\end{equation}
The optimal control policy action at iteration $t$,
in system state $x_t$ with independent-variable vector estimate $v_t$,
is then obtained from the solution $P$ of the CARE~\eqref{eqn:care}
as $\cF_{v_t}(x_t) = -(R_{v_t}^{-1} B_{v_t}^\top P) x_t$, together with the change of coordinates $\tilde{x} = x-x^*$ in general when the target $x^*$ is not the origin.

Before addressing in
Section~\ref{ssec:PGM}
one
such iterative process to efficiently estimate the variable vector $v$, 
it is important to note that the control-theoretic framework selected 
to be combined as part
of our CBRL approach need not be provably optimal for the RL task at hand.
In particular, as long as the selected control-theoretic framework is sufficiently accurate, our
approach can yield an approximate solution within a desired level of accuracy.
To further support framework selection for such approximations, 
we first relax some of the above conditions to consider
families $\bbF$ that do not satisfy Assumption~\ref{asm:richness} and consider
control-policy 
functions that map a variable vector to a control policy whose domain spans across a large subset of states (as opposed to all states) in $\setX$ and renders asymptotically optimal
(as opposed to globally optimal)
rewards.
Supposing $\bbF$
satisfies
Assumption~\ref{asm:richness},
consider 
a sequence of less rich families $\bbF_1, \bbF_2, \ldots, \bbF_{k-1} , \bbF_k$ of policy functions $\cG^{(i)}\in\bbF_i$ obtained from independent-variable vectors of the corresponding 
independent variable 
sets $\setV_i$, 
and define the operators $\ContractionOp_i:\tilde{\setQ}(\setX\times \tilde{\Pi}_i) \rightarrow \tilde{\setQ}(\setX\times \tilde{\Pi}_i)$ as in~\eqref{eqn:ContractionOp} for any function
$\tilde{q}_i(x,\cF_v^{(i)})\in \tilde{\setQ}(\setX\times \tilde{\Pi}_i)$, $i\in [k] := \{ 1, \ldots, k \}$.
Then, from
Theorem~\ref{contraction},
each operator $\ContractionOp_i$ under variable set $\setV_i$ is a contraction in the supremum norm and converges
to the unique fixed point $\tilde{q}_i^\ast(x,\cF_v^{(i)})$ that satisfies the corresponding version of~\eqref{eqn:fixed_point},
for all $x\in\setX$ and $v\in \setV_i$, $i\in [k]$.
We then have the following formal result on the asymptotic optimality of our CBRL approach in such approximate settings.

\begin{theorem}\label{thm:PWL}
Assume the state and action spaces are compact and $\cF_v$ is uniformly continuous for each $v$. 
Consider $\bbF$ and a sequence of families of policy functions $\bbF_1,  \bbF_2, \ldots, \bbF_{k-1}, \bbF_k$, with $\setV$ and $\setV_i$ respectively denoting the independent-variable sets corresponding to $\bbF$ and $\bbF_i$, $i\in [k]$.
Let $d(\cdot,\cdot)$ be a $\sup$-norm distance function defined over the policy space $\tilde{\Pi}$, i.e., $d(\cF^{(i)}_{v_i},\cF^{(j)}_{v_j}) = \sup_{x\in\setX} \| \cF^{(i)}_{v_i}(x) - \cF^{(j)}_{v_j}(x) \|$, $v_i\in\setV_i, v_j\in\setV_j$.
Further let $d^\prime(\cdot,\cdot)$ be an $\inf$-norm distance function defined over the policy function space $\bbF$, i.e., $d^\prime(\cG^{(i)},\cG^{(j)}) = \inf_{v_i\in\setV_i,v_j\in\setV_j} d(\cG^{(i)}(v_i) , \cG^{(j)}(v_j))$, $\cG^{(i)}\in\bbF_i, \cG^{(j)}\in\bbF_j$.
Suppose, for all $\cG \in \bbF$, there exists a $\cG^{(k)}\in\bbF_k$ such that $d^\prime(\cG^{(k)},\cG) \rightarrow 0$ as $k\rightarrow \infty$.
Then,
$\|\tilde{q}_k^\ast-\tilde{q}^\ast\|_\infty \rightarrow 0$ 
as $k\rightarrow \infty$.
\end{theorem}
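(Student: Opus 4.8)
The plan is to read the statement as a perturbation-of-fixed-points result for contraction mappings. Since $\ContractionOp$ and each $\ContractionOp_k$ are $\gamma$-contractions with fixed points $\tilde q^\ast=\ContractionOp\tilde q^\ast$ and $\tilde q_k^\ast=\ContractionOp_k\tilde q_k^\ast$, the triangle inequality gives $\|\tilde q_k^\ast-\tilde q^\ast\|_\infty\le\|\ContractionOp_k\tilde q^\ast-\ContractionOp\tilde q^\ast\|_\infty+\gamma\|\tilde q_k^\ast-\tilde q^\ast\|_\infty$, hence $\|\tilde q_k^\ast-\tilde q^\ast\|_\infty\le(1-\gamma)^{-1}\,\|\ContractionOp_k\tilde q^\ast-\ContractionOp\tilde q^\ast\|_\infty$; it therefore suffices to show that the operator gap $\|\ContractionOp_k\tilde q^\ast-\ContractionOp\tilde q^\ast\|_\infty\to0$ as $k\to\infty$. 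To make this comparison meaningful across the two ambient spaces $\tilde{\setQ}(\setX\times\tilde{\Pi})$ and $\tilde{\setQ}(\setX\times\tilde{\Pi}_k)$, I would use that under Assumption~\ref{asm:richness} the fixed point $\tilde q^\ast$ is the one induced by the genuine optimal action-value function $Q^\ast$ on $\setX\times\setA$ (Theorem~\ref{contraction}), i.e.\ $\tilde q^\ast(x,\cF_v)=Q^\ast(x,\cF_v(x))$; consequently $\tilde q^\ast$ can be evaluated along any policy generated by $\bbF_k$, both operator outputs can be regarded as functions on the common compact state space $\setX$, and $\|\cdot\|_\infty$ is interpreted accordingly (equivalently, one may argue throughout with the induced $Q$-functions on $\setX\times\setA$).

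To bound the operator gap I would fix $x\in\setX$ and a policy $\cF_v\in\tilde{\Pi}$ generated by some $\cG\in\bbF$, choose by hypothesis $\cG^{(k)}\in\bbF_k$ with $d^\prime(\cG^{(k)},\cG)$ arbitrarily small, and --- using that $d^\prime$ is an infimum of the $\sup$-norm distance $d$ --- pick variable vectors so that $\sup_{z\in\setX}\|\cF_v(z)-\cF^{(k)}_{v_k}(z)\|$ is correspondingly small. I then split the difference of the two Bellman-type right-hand sides of \eqref{eqn:ContractionOp} into an ``outer'' part and an ``inner'' part. The outer part collects the change in $r(x,\cdot)$ and in the distribution $\pr(\cdot\,|\,x,\cdot)$ when the applied action moves from $\cF_v(x)$ to $\cF^{(k)}_{v_k}(x)$; this is controlled by compactness of $\setX\times\setA$, uniform continuity of the policies $\cF_v$, and continuity of $r$ and $\pr$ in the action (a mild regularity property that holds in the dynamical-system settings of interest and can be posited as a standing hypothesis). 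The inner part is $\big|\sup_{v^\prime\in\setV}\tilde q^\ast(y,\cF_{v^\prime})-\sup_{v^\prime\in\setV_k}\tilde q^\ast(y,\cF^{(k)}_{v^\prime})\big|$: the ``$\le$'' direction is immediate because $\tilde q^\ast(y,\cF^{(k)}_{v^\prime})=Q^\ast(y,\cF^{(k)}_{v^\prime}(y))\le V^\ast(y)$ while, by Assumption~\ref{asm:richness}, $\sup_{v^\prime\in\setV}\tilde q^\ast(y,\cF_{v^\prime})=V^\ast(y)$; the matching ``$\ge$'' direction follows by approximating each $\cF_{v^\prime}$ ($v^\prime\in\setV$) by a nearby $\cF^{(k)}_{v^\prime_k}\in\bbF_k$ via the density hypothesis and invoking a uniform (in $y$) modulus of continuity of $Q^\ast(y,\cdot)$, which exists by compactness. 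Taking the supremum over $x$ and over $\cF_v\in\tilde{\Pi}$ and letting $k\to\infty$ then yields $\|\ContractionOp_k\tilde q^\ast-\ContractionOp\tilde q^\ast\|_\infty\to0$, and the perturbation estimate above completes the argument.

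The main obstacle I anticipate is the inner part --- showing $\sup_{v^\prime\in\setV_k}\tilde q^\ast(y,\cF^{(k)}_{v^\prime})\to\sup_{v^\prime\in\setV}\tilde q^\ast(y,\cF_{v^\prime})$ uniformly in $y$, an interchange of a limit with a supremum that is not automatic. The resolution I have in mind exploits precisely the asymmetry supplied by Assumption~\ref{asm:richness}: the rich family $\bbF$ already attains the Bellman value $V^\ast$, which makes one direction free, whereas the density hypothesis $d^\prime(\cG^{(k)},\cG)\to0$, combined with compactness of the state and action spaces and uniform continuity of the policies $\cF_v$ (hence a uniform modulus of continuity for $Q^\ast$), supplies the other direction uniformly. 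A secondary, purely bookkeeping, difficulty is the mismatch between the spaces $\tilde{\setQ}(\setX\times\tilde{\Pi})$ and $\tilde{\setQ}(\setX\times\tilde{\Pi}_k)$, which I handle by reducing everything to the induced action-value functions on $\setX\times\setA$ as noted above.
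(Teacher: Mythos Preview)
Your proposal is correct and considerably more explicit than the paper's own proof. The paper first records a lemma (citing Lemma~2.2 of Hale~1970) that the fixed point of a contraction depends continuously on the contraction mapping, and then the proof of the theorem is two sentences: ``the result of the supremum depends continuously on the set $\setV$ and thus the corresponding argument of $\ContractionOp$ depends continuously on $\setV$; the desired result then follows from'' the lemma. Your perturbation bound $\|\tilde q_k^\ast-\tilde q^\ast\|_\infty\le(1-\gamma)^{-1}\|\ContractionOp_k\tilde q^\ast-\ContractionOp\tilde q^\ast\|_\infty$ is exactly a quantitative form of the Hale continuous-dependence lemma, so the two arguments share the same skeleton. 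The difference is that you actually carry out what the paper leaves implicit: you resolve the mismatch between $\tilde{\setQ}(\setX\times\tilde\Pi)$ and $\tilde{\setQ}(\setX\times\tilde\Pi_k)$ by passing to the induced $Q^\ast$ on $\setX\times\setA$; you split the operator gap into outer and inner pieces and bound each; and you flag the extra regularity (continuity of $r$ and $\pr$ in the action) that is tacitly needed for the operator to vary continuously with the policy. Your use of Assumption~\ref{asm:richness} to get one direction of the inner-supremum comparison for free is a clean observation not present in the paper. What the paper's route buys is brevity via an off-the-shelf citation; what yours buys is a self-contained argument that makes the limit--supremum interchange and the hidden regularity hypotheses visible.
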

%

An analogous version of Theorem~\ref{thm:PWL} based on $Q$-learning can be established in a similar manner w.r.t.\ Theorem~\ref{thm:QLearning}, where the construction of the corresponding update rules 
in~\eqref{eqn:q-learn-new} 
under the variable sets $\setV_i$ parallels the above construction of the operators $\ContractionOp_i$ under the same variable sets $\setV_i$.
%
%
In the cases of both
Theorem~\ref{thm:PWL} and its $Q$-learning variant,
one
specific instance of a sequence of
the 
families 
of policy functions 
$\bbF_1 , \ldots, \bbF_k$
consists of piecewise-linear control policies of increasing richness
(e.g., 
the 
class of canonical piecewise-linear functions 
considered in 
\citep{Lin1992}) 
w.r.t.\
finer and finer granularity of the control policy function space converging towards $\bbF$.

\subsection{Control-Policy-Variable Gradient Ascent}
\label{ssec:PGM}
Turning now to our CBRL iterative process within the context of policy gradient methods, we build on our foregoing results to establish theoretical results analogous to the standard policy gradient theorem that establishes, for policies parameterized by
$\theta$,
a connection between the gradient of the value function 
w.r.t.\
$\theta$ and the gradient of the policy action
w.r.t.\
$\theta$.
This standard theoretical result does not apply to our 
CBRL
approach
because there is no direct connection between the independent-variable vector $v\in \setV$ and the 
control policy action,
as described above.
Rather, under our
CBRL
approach
and for any 
vector
$v$, 
the control-policy function $\cG(v) \in \bbF$ derives a particular control policy $\cF_v \in \tilde{\Pi}$, and then the
policy applied in any state $x\in \setX$ yields the
action
taken.

So far, we have considered a deterministic control policy for the action taken at each state, since it is well known that a deterministic stationary policy is optimal for our sDP~\eqref{eqn:general_control:max}; refer to~\cite{Puterman2005}.
For consistency with standard policy gradient methods, however, we consider in this subsection a general stochastic control policy that follows a probability distribution for the action taken at each state, where a deterministic policy is a special case.
To this end,
let us denote by $\setPA$ a probability distribution on the set of actions $\setA$ where
the stationary control policy $\cF_v : \setX\rightarrow \setPA$ defines a probability distribution over all available control actions $u\in \setA$ given the current state $x\in \setX$.
Further define $\cF_{v,u}(x)$ to be the element of $\cF_{v}(x)$ corresponding to action $u$,
i.e., $\cF_{v,u}(x) := \pr[u_t = u \, | \, x_t = x, \cF_v]$,
and correspondingly define $\tilde{q}_u(x,\cF_v) := q_u(x,\cF_v(x)) := q^{v}(x,u)$, where the latter denotes the $Q$-function of the policy $\cF_v$.
Let $x_0$ denote the start state at time $t=0$ and 
$\pr(x, k \, | \, x_0, \cF_v)$
the probability of going from state $x_0$ to state $x$ in $k$ steps under the control policy $\cF_v$.
Our general control-policy-variable gradient ascent result is then formally expressed as follows.


\begin{theorem}
Consider a family of control policy functions $\bbF$, its independent-variable set $\setV$ with contraction operator $\ContractionOp$ in the form of~\eqref{eqn:ContractionOp}, and
the value function $V_{{\cF_{v}}}$
under the control policy $\cF_v$.
Assuming $\cF_{v,u}(x)$ is differentiable 
w.r.t.\
$\cF_v$ and $\cF_v(x)$ is differentiable 
w.r.t.\
$v$,
we then have
\begin{align}\label{eqn:nablavp}
    \nabla_v V_{{\cF_{v}}} (x_0) = & \sum_{x\in \setX} \: \sum_{k=0}^\infty \gamma^k \,
    \pr(x, k \, | \, x_0, \cF_v)
    \; \left[ \sum_{u \in \setA} \frac{\partial \cF_{v,u}(x)}{\partial \cF_v} \frac{\partial\cF_v(x)}{\partial v} \tilde{q}_u(x,{\cF}_{v}) \right] .
\end{align}
\label{thm:PGT}
\end{theorem}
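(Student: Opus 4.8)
The plan is to adapt the classical policy gradient argument~\citep{NIPS1999_464d828b,Sutton1998} to the two-stage dependence of the action distribution on $v$ that is specific to our setting: since $\cF_{v,u}(x)$ depends on $v$ only through the intermediate object $\cF_v$, every occurrence of $\nabla_v\cF_{v,u}(x)$ is to be expanded by the chain rule as $\frac{\partial \cF_{v,u}(x)}{\partial \cF_v}\frac{\partial \cF_v(x)}{\partial v}$, which is precisely what the two differentiability hypotheses in the statement supply.

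First I would express the value function through the action-value function of $\cF_v$, writing $V_{\cF_v}(x_0) = \sum_{u\in\setA}\cF_{v,u}(x_0)\,\tilde q_u(x_0,\cF_v)$ and using the Bellman expectation identity $\tilde q_u(x,\cF_v) = q^v(x,u) = r(x,u) + \gamma\sum_{y\in\setX}\pr(y\,|\,x,u)\,V_{\cF_v}(y)$, in which $r(x,u)$ and the environment kernel $\pr(\cdot\,|\,x,u)$ carry no $v$-dependence because the action $u$ is held fixed. Differentiating the product, expanding $\nabla_v\tilde q_u$ with the Bellman identity, and factoring $\nabla_v\cF_{v,u}(x_0)$ by the chain rule gives the one-step recursion
\[
\nabla_v V_{\cF_v}(x_0) = \psi(x_0) + \gamma\sum_{y\in\setX}\pr(y,1\,|\,x_0,\cF_v)\,\nabla_v V_{\cF_v}(y),
\]
where $\psi(x) := \sum_{u\in\setA}\frac{\partial \cF_{v,u}(x)}{\partial \cF_v}\frac{\partial \cF_v(x)}{\partial v}\,\tilde q_u(x,\cF_v)$ collects the local term and $\pr(y,1\,|\,x_0,\cF_v) = \sum_{u\in\setA}\cF_{v,u}(x_0)\,\pr(y\,|\,x_0,u)$ is the one-step transition kernel induced by $\cF_v$.

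Next I would unroll this recursion: substituting it into itself $K$ times and repeatedly invoking Chapman--Kolmogorov, $\sum_{y}\pr(y,m\,|\,x_0,\cF_v)\pr(z,1\,|\,y,\cF_v) = \pr(z,m{+}1\,|\,x_0,\cF_v)$, yields $\nabla_v V_{\cF_v}(x_0) = \sum_{k=0}^{K-1}\gamma^k\sum_{x\in\setX}\pr(x,k\,|\,x_0,\cF_v)\,\psi(x) + \gamma^K\sum_{x\in\setX}\pr(x,K\,|\,x_0,\cF_v)\,\nabla_v V_{\cF_v}(x)$. Letting $K\to\infty$, the remainder term vanishes because $\gamma\in(0,1)$, $\sum_{x}\pr(x,K\,|\,x_0,\cF_v)=1$, and $\nabla_v V_{\cF_v}$ is uniformly bounded; after reindexing the double sum this is exactly~\eqref{eqn:nablavp}.

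The step that needs the most care — and the one I expect to be the main obstacle — is the a priori regularity that licenses differentiating under the infinite-horizon sum and closing the limit $K\to\infty$, namely that $v\mapsto V_{\cF_v}(x)$ is differentiable with a gradient bounded uniformly in $x$. I would obtain this by the same contraction mechanism as in Theorem~\ref{contraction}: boundedness of $r$ together with $\gamma<1$ make $V_{\cF_v}$ and $q^v$ uniformly bounded, so $\psi$ is bounded (bounding its factors via the differentiability hypotheses together with the compactness/continuity available in the surrounding development and the summability of $\setX$ implicit in the notation); the affine Bellman-type map $w\mapsto \psi + \gamma P_{\cF_v}w$, with $(P_{\cF_v}w)(x) := \sum_{y\in\setX}\pr(y,1\,|\,x,\cF_v)w(y)$, is then a $\gamma$-contraction on bounded functions in the supremum norm and so has a unique bounded fixed point; since $\nabla_v V_{\cF_v}$ satisfies the displayed recursion and the absolutely convergent series on the right-hand side of~\eqref{eqn:nablavp} defines a bounded function satisfying the same recursion, the two coincide. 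Everything else — the product rule, the chain-rule factorization of $\nabla_v\cF_{v,u}(x)$, and the Chapman--Kolmogorov collapse of the unrolled terms — is a direct adaptation of the classical policy gradient calculation.
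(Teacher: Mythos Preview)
Your proposal is correct and follows essentially the same route as the paper's proof: express $V_{\cF_v}$ through the action-value function, apply the product rule and the Bellman expectation identity to obtain a one-step recursion, unroll into the discounted $k$-step visitation sum, and factor $\nabla_v\cF_{v,u}(x)$ via the chain rule. The only difference is that you front-load the chain-rule factorization into the definition of $\psi$ and add a contraction argument to justify differentiating under the infinite-horizon sum and controlling the remainder, whereas the paper performs the unrolling formally and applies the chain rule as the final step without discussing regularity.
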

The corresponding gradient ascent result for the 
case of deterministic control policies follows directly from Theorem~\ref{thm:PGT} with the conditional probability distribution $\cF_{v,u}(x)$ given by the associated indicator function $\bbI_{\cF_v(x)}(u)$, returning $1$ if and only if $u = \cF_v(x)$ and zero otherwise.

Following along
similar 
lines 
of the various forms of policy-gradient ascent methods based on the standard policy gradient theorem, we devise control-policy-variable gradient ascent methods within the context of our 
CBRL
approach based on Theorem~\ref{thm:PGT}.
One such 
gradient ascent method comprises an iterative process for directly learning the unknown 
independent-variable 
vector $v$ of the optimal control policy 
w.r.t.\
the value function $V_{\cF_v}$
whose iterations
proceed according to
\begin{equation}\label{eq:policy-gradient}
    v_{t+1} = v_{t}+\eta  \nabla_v V_{{\cF_{v}}} (x_t) ,
\end{equation}
where $\eta$ is the step size and
$\nabla_v V_{{\cF_{v}}}$ is given by~\eqref{eqn:nablavp}.
Note
that standard policy gradient methods are special cases of~\eqref{eq:policy-gradient} where
$v_t$ is directly replaced by the policy
$\pi_t$;
identity map for $\cal G$ and $\frac{\partial V}{\partial {\cF}_{v_t}} \frac{\partial {\cF}_{v_t}}{\partial {v_t}}$ replaced by $\frac{\partial V}{\partial \pi_t}$ corresponds to the direct policy gradient parameterization
in~\cite{JMLR:v22:19-736}.

More precisely, consider the iterative process of our control-policy-variable gradient ascent method above within the context of the LQR control-theoretic framework where at iteration $t$ the system is in state $x_t$ and the variable vector estimate is $v_t$.
The optimal control policy $\cF_{v_t}$ is derived by solving for $P$ in the corresponding CARE~\eqref{eqn:care} and then setting $u_t^\ast = \cF_{v_t}(x_t) = -K x_t$,
where $K = R_{v_t}^{-1} B_{v_t}^\top P$,
together with the change of coordinates $\tilde{x} = x-x^*$ whenever the target $x^*$ is not the origin.
Upon applying the control policy $\cF_{v_t}$ by taking action $u_t^\ast$, we subsequently update the estimate of the variable vector according to iteration~\eqref{eq:policy-gradient} where $\nabla_v V_{{\cF_{v}}}$ is obtained from~\eqref{eqn:nablavp} of Theorem~\ref{thm:PGT}.
In particular, the first partial derivative term on the right hand side of~\eqref{eqn:nablavp} is given by the standard policy gradient solution, and the second partial derivative term on the right hand side of~\eqref{eqn:nablavp} is obtained by differentiation of the
CARE~\eqref{eqn:care}, 
which in turn renders~\citep{kao2020automatic}
\begin{equation} \label{eqn:PDcare}
dP \tilde{A}_{v} + \tilde{A}_{v}^\top dP + (dZ + dZ^\top + d\underline{Q}_{v} + K^\top dR_{v} K) = 0,
\end{equation}
where $\tilde{A}_{v} = A_{v} - B_{v}K$
and $dZ = P (dA_{v} - dB_{v} K)$.

Our CBRL approach 
combined with
the LQR control-theoretic framework concerns the linear dynamics $\dot{x} = A_v x + B_v u$ where $A_v$ and $B_v$ contain elements of the unknown independent-variable vector $v$.
By leveraging known basic information about the LQR control problem at hand, only a relatively small number $d$ of unknown variables need to be learned.
For
a wide range of
applications where state variables are derivatives of each other w.r.t.\ time (e.g., position, velocity, acceleration, jerk), the corresponding rows in the $A_v$ matrix consist of a single $1$ and the corresponding rows in $B_v$ comprise zeros.
We exploit this basic information to consider general matrix forms for $A_v$ and $B_v$ that reduce the number of unknown variables to be learned. 
As a representative illustration, the system dynamics when there are two groups of such variables have the form
given by~\eqref{matrixAB} in 
Appendix~\ref{app:algorithm}.

\section{Experimental Results}
\label{sec:experiments}



In this section, we conduct numerical experiments to empirically evaluate the performance of our general CBRL approach
combined with
the
LQR control-theoretic framework
of Section~\ref{sec:CBRLapproach},
as summarized in Algorithm~\ref{alg:control_rl}
of Appendix~\ref{app:algorithm}.
The objective~\eqref{eqn:general_control:max} seeks to maximize the expected cumulative discounted reward,
simply referred to as the expected return.
Our numerical experiments consider several classical
RL
tasks from Gymnasium~\citep{towers_gymnasium_2023}, including 
\textit{Cart Pole}, 
\textit{Lunar Lander (Continuous)},
\textit{Mountain Car (Continuous)}, and
\textit{Pendulum}.
We compare our CBRL method with three state-of-the-art RL algorithms,
namely DQN~\citep{MnKaSi+13} for discrete actions, DDPG~\citep{lillicrap2015continuous} for continuous actions and PPO~\citep{schulman2017proximal}, together with a variant of PPO that solely replaces the nonlinear policy of PPO with a linear policy (since we know the optimal policy for some of the problems,
e.g., \textit{Cart Pole}, 
is linear).
These baselines are selected
as the state-of-the-art
algorithms for solving the 
RL
tasks under consideration.
Experimental details and additional results
are provided in 
Appendix~\ref{app:experiments}
both in general and
for each
RL
task.


We have chosen the LQR control-theoretic framework 
to be combined as part
of our CBRL approach
with the understanding that not all 
of the above
RL
tasks can be adequately 
solved using LQR even if the variable vector $v$ is known.
Recall, however, that our CBRL approach allows the domain of the control policies to span a subset of 
states in 
$\setX$, thus enabling partitioning of the state space so that 
properly
increased richness w.r.t.\ finer and finer granularity can provide improved approximations and 
asymptotic optimality according to Theorem~\ref{thm:PWL},
analogous to 
the class of 
canonical piecewise-linear function approximations~\citep{Lin1992}. 
While \textit{Cart Pole} and \textit{Lunar Lander
(Continuous)} 
can be directly addressed within 
the context of 
LQR, this is not the case for \textit{Mountain Car
(Continuous)} 
and \textit{Pendulum}
which 
require a nonlinear controller.
We therefore partition the state space in the case of such 
RL
tasks and consider a corresponding piecewise-LQR controller where the learned variable vectors may differ or be shared across the partitions.

All variables of our CBRL approach are randomly initialized uniformly within $(0,1)$ and then
learned using our 
control-policy-variable 
gradient ascent iteration~\eqref{eq:policy-gradient}.
%
We consider four sets of initial variables to validate the robustness of our CBRL approach for each RL task (see Appendix~\ref{app:experiments}).

\subsection{\textit{Cart Pole} under CBRL LQR} 
\label{ssec:cartpole}
%

\textit{Cart Pole},
as depicted in Fig.~\ref{cartpole_env} of 
Appendix~\ref{app:cartpole},
consists
of a pole connected to a horizontally moving cart with the goal of balancing the pole by applying a force on the cart to the left or right.
The state of the system is $x=[p, \dot{p}, \theta, \dot{\theta}]$ in terms of the position of the cart $p$, the velocity of the cart $\dot{p}$, the angle of the pole $\theta$, and the angular velocity of the pole $\dot{\theta}$.
With upright initialization of the pole on the cart, each episode comprises $200$ steps and the problem is considered ``solved'' upon achieving an average return of $195$.

The LQR controller can be used to solve the \textit{Cart Pole} problem provided that all the variables of the system are known (e.g., mass of cart, mass and length of pole, gravity), where the angle is kept small by our CBRL approach
combined with
the LQR framework.
We address the problem within the context of our CBRL approach by exploiting the 
general matrix
form for the LQR dynamics given by~\eqref{matrixAB_cartpole} 
in 
Appendix~\ref{app:cartpole},
solely taking into account general physical relationships
(e.g., the derivative of the angle of the pole is equivalent to its angular velocity) and laws
(e.g., the force can only affect the acceleration), with the $d=10$ unknown variables $a_0, \ldots, a_7, b_0, b_1$ to be learned; refer to Fig.~\ref{fig_cp_variable}.
%


Fig.~\ref{fig_cp_episode}~--~\ref{fig_cp_variable} and Table~\ref{tab_mean_cp}
(Appendix~\ref{app:cartpole})
present numerical results for the three state-of-the-art baselines (discrete actions) and our CBRL approach, with each run over five independent random seeds;
%
%
see Table~\ref{tab_init_para_cp} and Fig.~\ref{fig_all_curve_cartpole} in 
Appendix~\ref{app:cartpole}
for the four sets of initial variables in~\eqref{matrixAB_cartpole}.
Fig.~\ref{fig_cp_episode}~--~\ref{fig_cp_second}, Table~\ref{tab_mean_cp} and Fig.~\ref{fig_all_curve_cartpole}(a)~--~\ref{fig_all_curve_cartpole}(b) 
clearly demonstrate that our CBRL approach 
provides far superior performance (both mean and standard deviation) over all baselines w.r.t.\ both the number of episodes and running time, in addition to demonstrating a more stable training process.
Fig.~\ref{fig_all_curve_cartpole}(c)~--~\ref{fig_all_curve_cartpole}(f) illustrates the learning behavior of CBRL variables given different initialization.

\subsection{\textit{Lunar Lander
} under CBRL LQR} 
\label{ssec:lunarlander}
%

\textit{Lunar Lander (Continuous)}, as depicted in Fig.~\ref{lunarlander_env} of 
Appendix~\ref{app:lunarlander},
is a classical spacecraft trajectory optimization problem with the goal to land softly and fuel-efficiently on a landing pad by applying thrusters to the left, to the right, and upward. The state of the system is $x=[p_x, v_x, p_y, v_y, \theta, \dot{\theta}]$ in terms of the $(x,y)$ positions $p_x$ and $p_y$, two linear velocities $v_x$ and $v_y$, angle $\theta$, and angular velocity $\dot{\theta}$.
Starting at the top center of the viewport with a random initial force applied to its center of mass, the lander (spacecraft) is subject to gravity, friction and turbulence while surfaces on the ``moon'' are randomly generated in each episode with the target landing pad centered at $(0,0)$. The problem is considered ``solved'' upon achieving an average return of $200$.
The LQR controller can be used to solve the \textit{Lunar Lander (Continuous)} problem provided that all variables of the system are known (e.g., mass of lander, gravity, friction, etc.), where the angle is kept small by our CBRL approach 
combined with
the LQR framework.
We address the problem within the context of our CBRL approach by exploiting the 
general matrix
form for the LQR dynamics given by~\eqref{matrixAB_lunarlander} 
in 
Appendix~\ref{app:lunarlander},
solely taking into account general physical relationships (akin to \textit{Cart Pole}) and mild physical information from the system state (e.g., the acceleration is independent of the position), with the $d=15$ unknown variables $a_0, \ldots, a_{11}, b_0, b_1, b_2$ to be learned; refer to Fig.~\ref{fig_ll_variable}.

Fig.~\ref{fig_ll_episode}~--~\ref{fig_ll_variable}  and Table~\ref{tab_mean_ll} 
(Appendix~\ref{app:lunarlander})
present numerical results for the three state-of-the-art baselines (continuous actions) and our CBRL approach, with each run over five independent random seeds;
%
%
see Table~\ref{tab_init_para_ll} and Fig.~\ref{fig_all_curve_lunarlander} in 
Appendix~\ref{app:lunarlander}
for the four sets of initial variables in~\eqref{matrixAB_lunarlander}.
Fig.~\ref{fig_ll_episode}~--~\ref{fig_ll_second}, Table~\ref{tab_mean_ll} and Fig.~\ref{fig_all_curve_lunarlander}(a)~--~\ref{fig_all_curve_lunarlander}(b)
clearly demonstrate that our CBRL approach 
provides far superior performance (both mean and standard deviation) over
all baselines w.r.t.\ both the number of episodes and running time, in addition to demonstrating a more stable training process.
We note that the baseline algorithms often crash and terminate sooner than the more successful landings of our CBRL approach, resulting in the significantly worse performance exhibited in 
Fig.~\ref{fig_ll_episode}~--~\ref{fig_ll_second}, Table~\ref{tab_mean_ll} and Fig.~\ref{fig_all_curve_lunarlander}.
Finally, Fig.~\ref{fig_all_curve_lunarlander}(c)~--~\ref{fig_all_curve_lunarlander}(f) illustrates the learning behavior of CBRL variables given different initialization.
\begin{figure}[h]
\centering 
\setcounter{subfigure}{0}
\subfigure[Return vs.\ Episode]
{
\begin{minipage}{0.30\linewidth}
\centering    
\includegraphics[width=0.9\columnwidth]{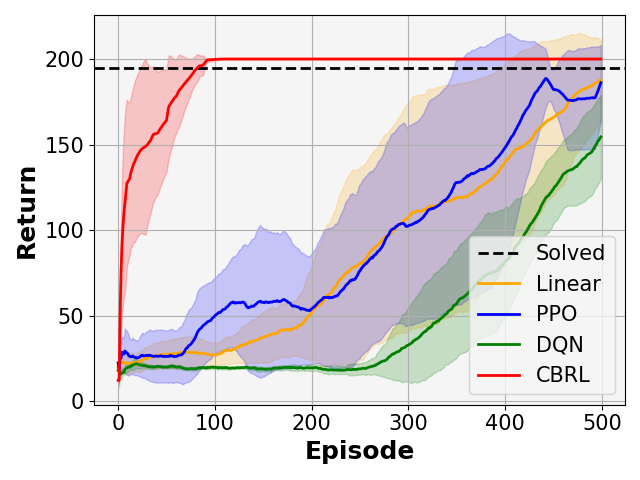}  
\label{fig_cp_episode}
\end{minipage}
}
\subfigure[Return vs.\ Time]
{
\begin{minipage}{0.30\linewidth}
\centering    
\includegraphics[width=0.9\columnwidth]{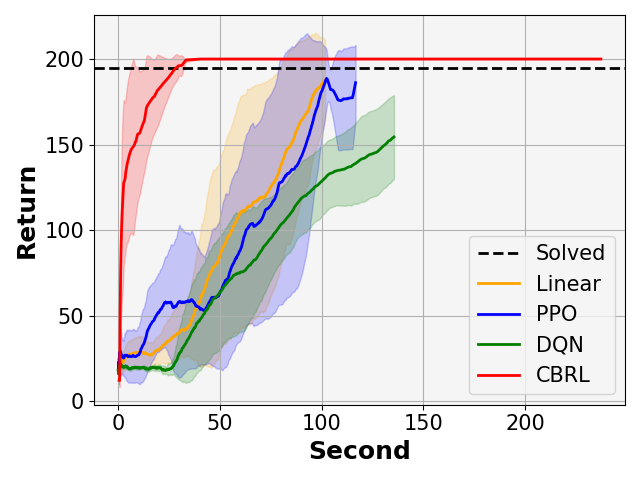}  
\label{fig_cp_second}
\end{minipage}
}
\subfigure[Variable of CBRL]
{
	\begin{minipage}{0.30\linewidth}
	\centering 
	\includegraphics[width=0.9\columnwidth]{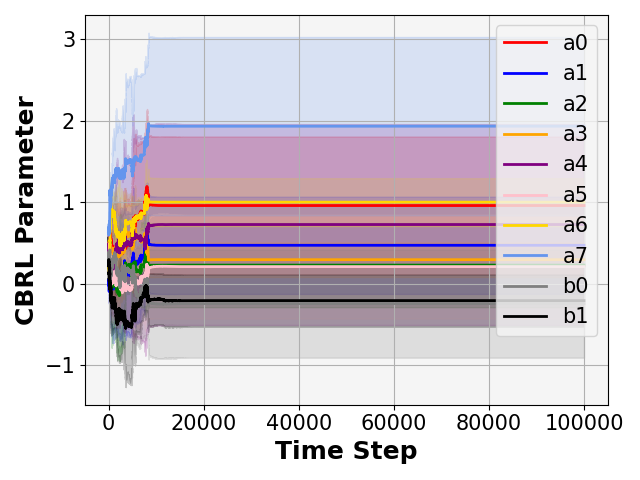}  
    \label{fig_cp_variable}
	\end{minipage}
}
%
%
\subfigure[Return vs.\ Episode]
{
\begin{minipage}{0.30\linewidth}
\centering    
\includegraphics[width=0.9\columnwidth]{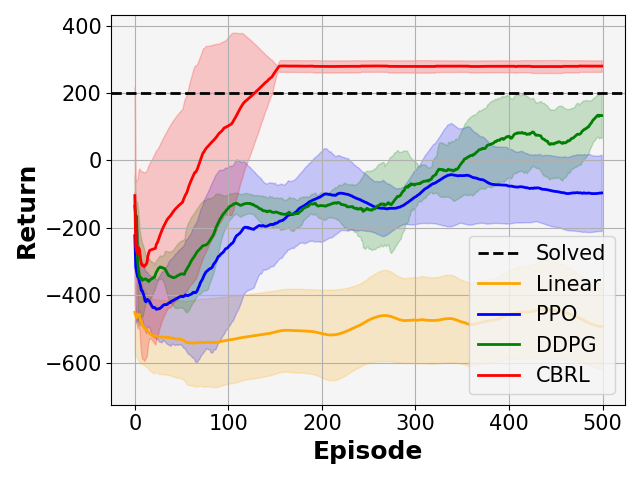}  
\label{fig_ll_episode}
\end{minipage}
}
\subfigure[Return vs.\ Time]
{
\begin{minipage}{0.30\linewidth}
\centering    
\includegraphics[width=0.9\columnwidth]{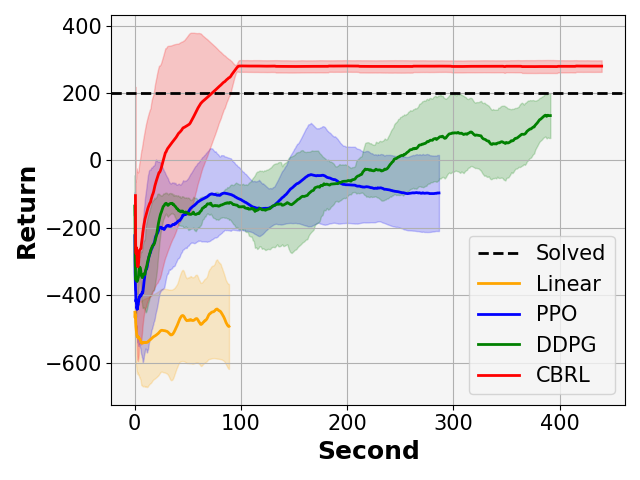}  
\label{fig_ll_second}
\end{minipage}
}
\subfigure[Variable of CBRL]
{
	\begin{minipage}{0.30\linewidth}
	\centering 
	\includegraphics[width=0.9\columnwidth]{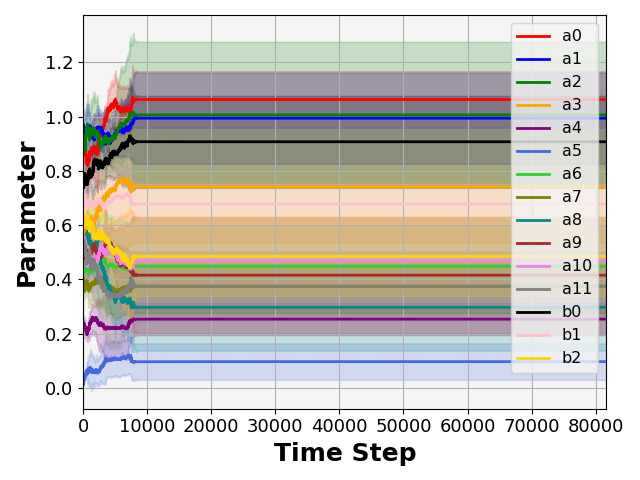} 
    \label{fig_ll_variable}
	\end{minipage}
}
%
%
\subfigure[Return vs.\ Episode]
{
\begin{minipage}{0.30\linewidth}
\centering    
\includegraphics[width=0.9\columnwidth]{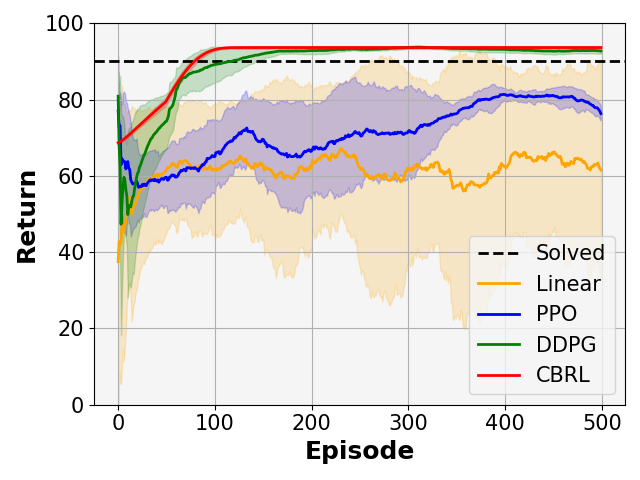}  
\label{fig_mc_episode}
\end{minipage}
}
\subfigure[Return vs.\ Time]
{
\begin{minipage}{0.30\linewidth}
\centering    
\includegraphics[width=0.9\columnwidth]{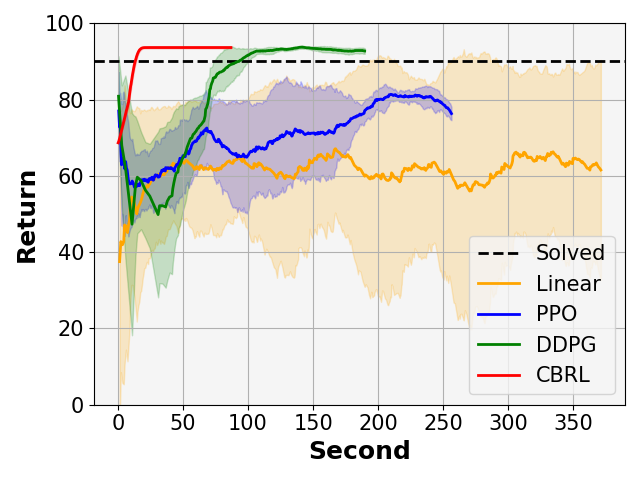}  
\label{fig_mc_second}
\end{minipage}
}
\subfigure[Variable of CBRL]
{
	\begin{minipage}{0.30\linewidth}
	\centering 
	\includegraphics[width=0.9\columnwidth]{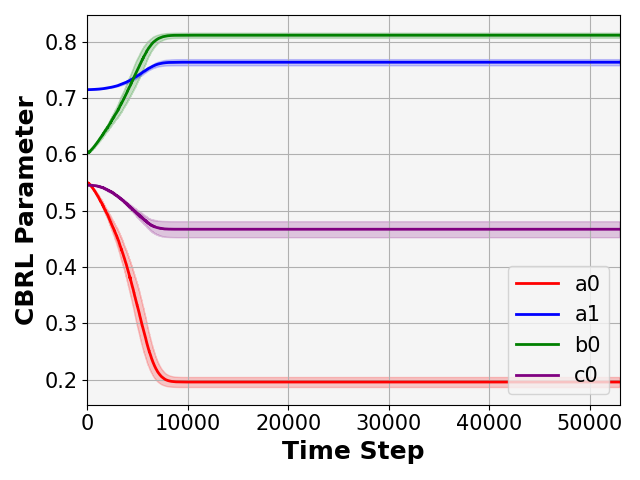}  
    \label{fig_mc_variable}
	\end{minipage}
}
%
%
\subfigure[Return vs.\ Episode]
{
\begin{minipage}{0.30\linewidth}
\centering    
\includegraphics[width=0.9\columnwidth]{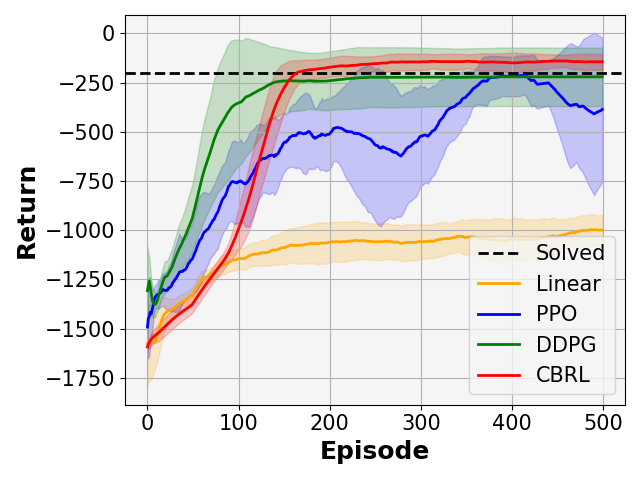}  
\label{fig_pd_episode}
\end{minipage}
}
\subfigure[Return vs.\ Time]
{
\begin{minipage}{0.30\linewidth}
\centering    
\includegraphics[width=0.9\columnwidth]{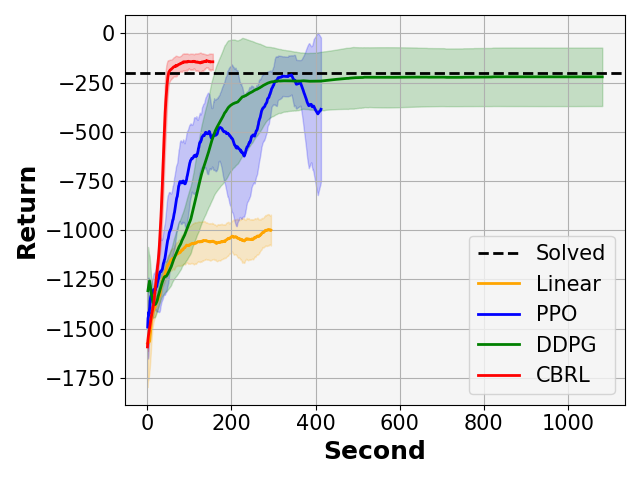} 
\label{fig_pd_second}
\end{minipage}
}
\subfigure[Variable of CBRL]
{
	\begin{minipage}{0.30\linewidth}
	\centering 
	\includegraphics[width=0.9\columnwidth]{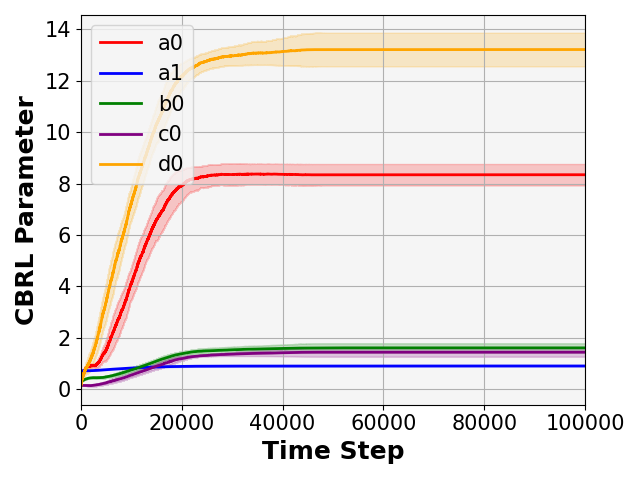}  
    \label{fig_pd_variable}
	\end{minipage}
}
\caption{
Learning curves over five independent runs comparing our CBRL approach with the Linear policy, PPO, DQN (discrete actions), and DDPG (continuous actions), where the solid line shows the mean and the shaded area depicts the standard deviation for CartPole ($a$)-($c$), LunarLander ($d$)-($f$), MountainCar ($g$)-($i$), and Pendulum ($j$)-($l$). 
}
\end{figure}

\subsection{\textit{Mountain Car
} under CBRL Piecewise-LQR}
\label{ssec:mountaincar}

\textit{Mountain Car (Continuous)}, as depicted in Fig.~\ref{mountaincar_env} of 
Appendix~\ref{app:mountaincar},
consists of the placement of a car in a valley with the goal of accelerating the car to reach the target at the top of the hill on the right by applying a force on the car to the left or right. The system state is $x=[p, v]$ in terms of the position of the car $p$ and the velocity of the car $v$. With random initialization at the bottom of the valley, the problem is considered ``solved'' upon achieving an average return of $90$.

Recall that the LQR controller is not sufficient to solve the \textit{Mountain Car (Continuous)} problem,
even if all the variables of the system are known (e.g., mass of the car and gravity), 
because a nonlinear controller is required.
Consequently, 
we consider a piecewise-LQR controller that partitions the state space into two regions (LQR~1 and LQR~2 in Fig.~\ref{mountaincar_env} of
Appendix~\ref{app:mountaincar}).
The target state $x^*=[p^*, v^*]$ is selected to be $x^*=[-1.2, 0]$ if $v<0$ (LQR~1), and to be $x^*=[0.6, 0]$ otherwise (LQR~2), where $-1.2$ and $0.6$ represent the position of the left hill and the right hill, respectively.
We address the problem within the context of our CBRL approach by exploiting the 
general matrix
form for the piecewise-LQR dynamics given by~\eqref{matrixAB_mountaincar} 
in 
Appendix~\ref{app:mountaincar},
solely taking into account general physical relationships and laws, with the $d=4$ unknown variables $a_0, a_1, b_0, c_0$ to be learned; refer to Fig.~\ref{fig_mc_variable}.
%
%
%
%

Fig.~\ref{fig_mc_episode}~--~\ref{fig_mc_variable} and Table~\ref{tab_mean_mc}
(Appendix~\ref{app:mountaincar})
present numerical results for the three state-of-the-art baselines (continuous actions) and our CBRL approach, with each run over five independent random seeds;
%
%
refer to Table~\ref{tab_init_para_mc} and Fig.~\ref{fig_all_curve_mountaincar} in 
Appendix~\ref{app:mountaincar}
for the four sets of initial variables in~\eqref{matrixAB_mountaincar}.
Fig.~\ref{fig_mc_episode}~--~\ref{fig_mc_second}, Table~\ref{tab_mean_mc} and Fig.~\ref{fig_all_curve_mountaincar}(a)~--~\ref{fig_all_curve_mountaincar}(b) 
clearly demonstrate that our CBRL approach 
provides superior performance (both mean and standard deviation) over
all baselines w.r.t.\ both the number of episodes and running time, in addition to demonstrating a more stable training process.
Fig.~\ref{fig_all_curve_mountaincar}(c)~--~\ref{fig_all_curve_mountaincar}(f) illustrates the learning behavior of CBRL variables given different initialization.


\subsection{\textit{Pendulum} under CBRL Piecewise-LQR}
\label{ssec:pendulum}

\textit{Pendulum}, as depicted in Fig.~\ref{pendulum_env} of 
Appendix~\ref{app:pendulum},
consists of a link attached at one end to a fixed point and the other end being free, with the goal of swinging up to an upright position by applying a torque on the free end. The state of the system is $x=[\theta, \dot{\theta}]$ in terms of the angle of the link $\theta$ and the angular velocity of the link $\dot{\theta}$. 
With random initial 
link position,
each episode comprises $200$ steps and the problem is considered 
``solved'' upon achieving an average return of $-200$.


Recall that the LQR controller is not sufficient to solve the \textit{Pendulum} problem, even if all the variables of the system are known (e.g., mass of the link $m$, length of the link $l$, moment of inertia of the link $J$, and gravity $g$), because a nonlinear controller is required. 
Consequently, we consider a piecewise-LQR controller that partitions the state space into four regions (LQR~1~--~4 in Fig.~\ref{pendulum_env} of 
Appendix~\ref{app:pendulum}). 
In terms of the target state $x^*=[\theta^*, \dot{\theta}^*]$, the angle $\theta^*$ is selected based on the boundary angle in each partition (counter-clockwise boundary angle if $\dot{\theta} > 0$, and clockwise boundary angle otherwise), while the angular velocity $\dot{\theta}^*$ is selected based on the energy conservation law; see
Appendix~\ref{app:pendulum} 
for more details.
We address the problem within the context of our CBRL approach by exploiting the 
general matrix
form for the piecewise-LQR dynamics given by~\eqref{matrixAB_pendulum} 
in 
Appendix~\ref{app:pendulum},
solely taking into account general physical relationships and laws, with the $d=5$ unknown variables $a_0, a_1, b_0, c_0, d_0$ to be learned; refer to Fig.~\ref{fig_pd_variable}.
%
%

Fig.~\ref{fig_pd_episode}~--~\ref{fig_pd_variable}  and Table~\ref{tab_mean_pd}
(Appendix~\ref{app:pendulum})
present numerical results for the three state-of-the-art baselines (continuous actions) and our CBRL approach, with each run over five independent random seeds;
%
%
see Table~\ref{tab_init_para_pd} and Fig.~\ref{fig_all_curve_pendulum} in 
Appendix~\ref{app:pendulum}
for the four sets of initial variables in~\eqref{matrixAB_pendulum}.
%
Fig.~\ref{fig_pd_episode}~--~\ref{fig_pd_second}, Table~\ref{tab_mean_pd} and Fig.~\ref{fig_all_curve_pendulum}(a)~--~\ref{fig_all_curve_pendulum}(b) clearly demonstrate that our CBRL approach provides far superior performance (both mean and standard deviation) over all baselines w.r.t.\ the number of episodes upon convergence of our CBRL algorithm after a relatively small number of episodes and w.r.t.\ the running time across all episodes.
In particular, even with only four partitions for the difficult nonlinear \textit{Pendulum} RL task, our CBRL approach provides significantly better mean performance and a much lower standard deviation than the closest competitor DDPG after around $150$ episodes.
Moreover, Fig.~\ref{fig_pd_second} clearly demonstrates that our CBRL approach outperforms all other baselines w.r.t.\ running time.
These numerical results also demonstrate
a more stable training process for our CBRL approach.
Finally,
Fig.~\ref{fig_all_curve_pendulum}(c)~--~\ref{fig_all_curve_pendulum}(f) illustrates the learning behavior of CBRL variables given different initialization.



\subsection{Discussion}
Our numerical results demonstrate an important form of robustness exhibited by the optimal control policy of our CBRL approach w.r.t.\ the learned variable values.
In particular, we performed additional comparative numerical experiments to evaluate the performance of LQR with the known true variables for \textit{Cart Pole} and the performance of piecewise-LQR with the known true variables for \textit{Mountain Car} and \textit{Pendulum}
(\textit{Lunar Lander} is omitted because its underlying true variables are not known to us).
We then compare the relative difference between these numerical results for LQR with the true variables and the corresponding numerical results of our CBRL approach in Fig.~\ref{fig_cp_episode}~--~\ref{fig_cp_second}, Fig.~\ref{fig_mc_episode}~--~\ref{fig_mc_second}, and Fig.~\ref{fig_pd_episode}~--~\ref{fig_pd_second}.
It is important to note that the variable values learned by our CBRL approach and presented in Fig.~\ref{fig_cp_variable}, Fig.~\ref{fig_mc_variable}, and Fig.~\ref{fig_pd_variable} differ considerably from the corresponding true variables.
Despite these non-negligible variable value differences, the comparative relative performance differences for \textit{Cart Pole} and \textit{Pendulum} respectively show that LQR and piecewise-LQR with the true variables provide no improvement in return over the corresponding return of our CBRL approach in Fig.~\ref{fig_cp_episode}~--~\ref{fig_cp_second} and Fig.~\ref{fig_pd_episode}~--~\ref{fig_pd_second}.
In addition, the 
return
of our CBRL approach for \textit{Mountain Car} in Fig.~\ref{fig_mc_episode}~--~\ref{fig_mc_second} is within
$2\%$
of the corresponding
return
of piecewise-LQR with the true variables.


The supremum over the variable space in the definition of our CBRL operator~\eqref{eqn:ContractionOp}, together with its
unique 
fixed point~\eqref{eqn:fixed_point}, 
and the $Q$-learning update rule~\eqref{eqn:q-learn-new}
are primarily used for theoretical purposes in this paper.
However, one might be concerned about potential computational challenges associated with the supremum when the variable space is continuous.
We first note that this issue is not fundamentally different from the corresponding challenges in standard RL when the action space is continuous, and thus similar steps can be taken to address the issue.
Another important factor that mitigates such concerns is the foregoing robustness of the optimal control policy w.r.t.\ the learned variable values.
Lastly, we note that this issue did not arise in our numerical experiments using Algorithm~\ref{alg:control_rl}. 

One important implementation issue is stepsize selection, just like in nonconvex optimization and RL. We address this issue by using similar methods employed in standard RL, i.e., the adaptive stepsize selection in our implementation of Algorithm~\ref{alg:control_rl}.
Specifically,
we
reduce the stepsize $\eta$ in \eqref{eq:policy-gradient} by a factor of $0.99$ each time achieving the ``solved'' return.
Another important implementation issue concerns variable-vector initialization,
again like in nonconvex optimization and RL. 
As discussed for each of the above RL tasks, the numerical results in 
Figs.~\ref{fig_all_curve_cartpole},~\ref{fig_all_curve_lunarlander},~\ref{fig_all_curve_mountaincar},~\ref{fig_all_curve_pendulum} 
demonstrate the robustness of our CBRL approach w.r.t.\ variable-vector initialization.
An additional important factor here is the aforementioned robustness of the optimal control policy w.r.t.\ the learned variable values.
Moreover, given the efficiency of our CBRL approach relative to state-of-the-art methods, additional computations for stepsize selection and variable-vector initialization can be employed when issues are encountered while retaining overall computational efficiencies.


\section{Conclusion}
\label{sec:conclusion}
In this paper we devise a CBRL approach to support direct learning of the optimal policy.
We establish various theoretical properties of our approach, including convergence and optimality of our CBRL operator and $Q$-learning, a
new 
control-policy-variable gradient theorem, and a
gradient ascent algorithm based on this theorem
within the context of the LQR control-theoretic framework as a representative example.
We then
conduct numerical experiments to empirically evaluate the performance of our general CBRL approach
on several classical RL tasks.
These numerical results demonstrate the significant benefits of our CBRL approach over state-of-the-art methods in terms of improved quality and robustness of the solution and reduced sample complexity and running time. 

\bibliography{l4dc2025}

@book{Sutton1998,
  author = {Sutton, Richard S. and Barto, Andrew G.},
  edition = {Second},
  keywords = {},
  publisher = {The MIT Press},
  title = {{Reinforcement Learning: An Introduction}},
  year = {2020}
}

@book{BerTsi96,
  title={Neuro-Dynamic Programming},
  author={D.P. Bertsekas and J.N. Tsitsiklis},
  publisher={Athena Scientific},
  year={1996}
}

@phdthesis{Watk89,
  author = "C. Watkins",
  title = "Learning from Delayed Rewards",
  school = "University of Cambridge",
  address = "Cambridge, U.K.",
  year = "1989"
}

@inproceedings{RanAls98,
 author = {J. Randlov and P. Alstrom},
 title = {Learning to drive a bicycle using reinforcement learning and shaping},
 booktitle = {Proc.\ International Conference on Machine Learning},
 year = {1998}
}

@inproceedings{MnKaSi+13,
 author = {V. Mnih and K. Kavukcuoglu and D. Silver and A. Graves and I. Antonoglou and D. Wierstra and R. Riedmiller},
 title = {Playing {Atari} with deep reinforcment learning},
 booktitle = {NIPS Workshop on Deep Learning},
 year = {2013}
}

@inproceedings{DeiRas11,
 author = {M.P. Deisenroth and C.E. Rasmussen},
 title = {{PILCO:} A model-based and data-efficient approach to policy search},
 booktitle = {Proc.\ International Conference on Machine Learning},
 year = {2011}
}

@inproceedings{MeHiXu+15,
 author = {D. Meger and J. Higuera and A. Xu and P. Giguere and G. Dudek},
 title = {Learning legged swimming gaits from experience},
 booktitle = {Proc.\ International Conference on Robotics and Automation},
 year = {2015}
}

@article{Schn97,
 author = {J.G. Schneider},
 title = {Exploiting Model Uncertainty Estimates for Safe Dynamic Control Learning},
 journal = "Advances in Neural Information Processing Systems",
 year = "1997"
}

@article{Scha97,
 author = {S. Schaal},
 title = {Learning From Demonstration},
 journal = "Advances in Neural Information Processing Systems",
 year = "1997"
}

@inproceedings{AtkSan97,
 author = {Christopher G. Atkeson and Juan Carlos Santamaria},
 title = {A Comparison of Direct and Model-Based Reinforcement Learning},
 booktitle = {Proc.\ International Conference on Robotics and Automation},
 year = "1997"
}

@inproceedings{NaKaFe+18,
author = {Anusha Nagabandi and Gregory Kahn and Ronald S. Fearing and Sergey Levine},
title = {Neural Network Dynamics for Model-Based Deep Reinforcement Learning with Model-Free Fine-Tuning},
booktitle = {IEEE International Conference on Robotics and Automation (ICRA)},
month = {May},
pages = {7559–7566},
year = {2018}
}

@article{KaLiMo96,
 author = {Kaelbling, L. and Littman, M. and Moore, A.},
 title = {Reinforcement Learning: A Survey},
 journal = "Journal of Artificial Intelligence Research",
 volume = "4",
 pages = "237–285",
 year = {1996}
}

@incollection{Szep10,
 author = {Csaba Szepesvari},
 title = {Algorithms for Reinforcement Learning},
 booktitle = "Synthesis Lectures on Artificial Intelligence and Machine Learning",
 volume = "4.1",
 publisher = "Morgan \& Claypool",
 pages = "1-103",
 year = "2010"
}

@article{jaakkola:1994,
    author = {Jaakkola, Tommi and Jordan, Michael I. and Singh, Satinder P.},
    title = {On the Convergence of Stochastic Iterative Dynamic Programming Algorithms},
    publisher = {MIT Press},
    journal = {Neural Computation},
    volume = 6,
    issue = 6,
    year = 1994,
    pages = {1185-1201}
}

@Article{Hale1970,
  author = {J. K. Hale and M. A. Cruz},
  title  = {Existence, uniqueness and continuous dependence for hereditary systems},
  journal = {Annali di Matematica Pura ed Applicata},
  year   = {1970},
  issn   = {0373-3114},
  pages  = {63-81},
  volume = {85},
  doi    = {10.1007/bf02413530},
}

@Article{Lin1992,
  author = {J.-N. Lin and R. Unbehauen},
  title  = {Canonical piecewise-linear approximations},
  year   = {1992},
  journal = {IEEE Transactions on Circuits and Systems-I: Fundamental Theory and Applications},
  issn   = {1057-7122},
  pages  = {697-699},
  volume = {39},
  number = 8,
  doi    = {10.1109/81.168933},
}

@Article{Tsitsiklis94,
  author = {John N. Tsitsiklis},
  title  = {Asynchronous Stochastic Approximation and {Q}-Learning},
  journal = {Machine Learning},
  year = {1994},
  pages  = {185–202},
  volume = {16},
  publisher = {Kluwer}
}

@inproceedings{NIPS1999_464d828b,
 author = {Sutton, Richard S and McAllester, David and Singh, Satinder and Mansour, Yishay},
 booktitle = {Advances in Neural Information Processing Systems},
 editor = {S. Solla and T. Leen and K. M\"{u}ller},
 pages = {},
 publisher = {MIT Press},
 title = {Policy Gradient Methods for Reinforcement Learning with Function Approximation},
 url = {https://proceedings.neurips.cc/paper_files/paper/1999/file/464d828b85b0bed98e80ade0a5c43b0f-Paper.pdf},
 volume = {12},
 year = {1999}
}

@article{JMLR:v22:19-736,
  author  = {Alekh Agarwal and Sham M. Kakade and Jason D. Lee and Gaurav Mahajan},
  title   = {On the Theory of Policy Gradient Methods: Optimality, Approximation, and Distribution Shift},
  journal = {Journal of Machine Learning Research},
  year    = {2021},
  volume  = {22},
  number  = {98},
  pages   = {1-76},
  url     = {http://jmlr.org/papers/v22/19-736.html}
}

@article{kao2020automatic,
author = {Ta-Chu Kao and Guillaume Hennequin},
title = {Automatic differentiation of {Sylvester}, {Lyapunov}, and algebraic {Riccati} equations},
journal = {ArXiv e-prints},
archivePrefix = {arXiv},
eprint = {2011.11430v2},
primaryClass={math.OC},
month = {November},
year = {2020}
}

@book{riccati:1995,
    title={Algebraic Riccati Equations},
    author = {Peter Lancaster and Leiba Rodman},
    publisher={Clarendon Press},
    location={Oxford},
    year = 1995}

@article{chen2024probabilistic,
  title={Probabilistic constraint for safety-critical reinforcement learning},
  author={Chen, Weiqin and Subramanian, Dharmashankar and Paternain, Santiago},
  journal={IEEE Transactions on Automatic Control},
  year={2024},
  publisher={IEEE}
}

@inproceedings{zhang2024implicit,
  title={An Implicit Trust Region Approach to Behavior Regularized Offline Reinforcement Learning},
  author={Zhang, Zhe and Tan, Xiaoyang},
  booktitle={Proceedings of the AAAI Conference on Artificial Intelligence},
  volume={38},
  number={15},
  pages={16944--16952},
  year={2024}
}

@misc{towers_gymnasium_2023,
        title = {Gymnasium},
        url = {https://zenodo.org/record/8127025},
        abstract = {An API standard for single-agent reinforcement learning environments, with popular reference environments and related utilities (formerly Gym)},
        urldate = {2023-07-08},
        publisher = {Zenodo},
        author = {Towers, Mark and Terry, Jordan K. and Kwiatkowski, Ariel and Balis, John U. and Cola, Gianluca de and Deleu, Tristan and Goulão, Manuel and Kallinteris, Andreas and KG, Arjun and Krimmel, Markus and Perez-Vicente, Rodrigo and Pierré, Andrea and Schulhoff, Sander and Tai, Jun Jet and Shen, Andrew Tan Jin and Younis, Omar G.},
        month = mar,
        year = {2023},
        doi = {10.5281/zenodo.8127026},
}

@article{lillicrap2015continuous,
  title={Continuous control with deep reinforcement learning},
  author={Lillicrap, Timothy P and Hunt, Jonathan J and Pritzel, Alexander and Heess, Nicolas and Erez, Tom and Tassa, Yuval and Silver, David and Wierstra, Daan},
  journal={arXiv preprint arXiv:1509.02971},
  year={2015}
}

@article{schulman2017proximal,
  title={Proximal policy optimization algorithms},
  author={Schulman, John and Wolski, Filip and Dhariwal, Prafulla and Radford, Alec and Klimov, Oleg},
  journal={arXiv preprint arXiv:1707.06347},
  year={2017}
}

@inproceedings{peng2023weighted,
  title={Weighted policy constraints for offline reinforcement learning},
  author={Peng, Zhiyong and Han, Changlin and Liu, Yadong and Zhou, Zongtan},
  booktitle={Proceedings of the AAAI Conference on Artificial Intelligence},
  volume={37},
  number={8},
  pages={9435--9443},
  year={2023}
}

@inproceedings{feng2024suf,
  title={SUF: Stabilized Unconstrained Fine-Tuning for Offline-to-Online Reinforcement Learning},
  author={Feng, Jiaheng and Feng, Mingxiao and Song, Haolin and Zhou, Wengang and Li, Houqiang},
  booktitle={Proceedings of the AAAI Conference on Artificial Intelligence},
  volume={38},
  number={11},
  pages={11961--11969},
  year={2024}
}

@inproceedings{shen2024multi,
  title={Multi-world Model in Continual Reinforcement Learning},
  author={Shen, Kevin},
  booktitle={Proceedings of the AAAI Conference on Artificial Intelligence},
  volume={38},
  number={21},
  pages={23757--23759},
  year={2024}
}

@article{kumar2020conservative,
  title={Conservative {Q}-learning for offline reinforcement learning},
  author={Kumar, Aviral and Zhou, Aurick and Tucker, George and Levine, Sergey},
  journal={Advances in Neural Information Processing Systems},
  volume={33},
  pages={1179--1191},
  year={2020}
}

@inproceedings{mcmahan2024optimal,
  title={Optimal Attack and Defense for Reinforcement Learning},
  author={McMahan, Jeremy and Wu, Young and Zhu, Xiaojin and Xie, Qiaomin},
  booktitle={Proceedings of the AAAI Conference on Artificial Intelligence},
  volume={38},
  number={13},
  pages={14332--14340},
  year={2024}
}

@inproceedings{dong2023model,
  title={Model-based offline reinforcement learning with local misspecification},
  author={Dong, Kefan and Flet-Berliac, Yannis and Nie, Allen and Brunskill, Emma},
  booktitle={Proceedings of the AAAI Conference on Artificial Intelligence},
  volume={37},
  number={6},
  pages={7423--7431},
  year={2023}
}

@inproceedings{zheng2023adaptive,
  title={Adaptive policy learning for offline-to-online reinforcement learning},
  author={Zheng, Han and Luo, Xufang and Wei, Pengfei and Song, Xuan and Li, Dongsheng and Jiang, Jing},
  booktitle={Proceedings of the AAAI Conference on Artificial Intelligence},
  volume={37},
  number={9},
  pages={11372--11380},
  year={2023}
}

@inproceedings{schulman2015trust,
  title={Trust region policy optimization},
  author={Schulman, John and Levine, Sergey and Abbeel, Pieter and Jordan, Michael and Moritz, Philipp},
  booktitle={International conference on machine learning},
  pages={1889--1897},
  year={2015},
  organization={PMLR}
}

@book{Puterman2005,
  title={Markov Decision Processes: Discrete Stochastic Dynamic Programming},
  author={Martin L. Puterman},
  year={2005},
  publisher={John Wiley \& Sons}
}

\appendix

\newpage
\section{Control-Based Reinforcement Learning 
Approach}
\label{app:proofs}
In this appendix, we present the proofs of our main theoretical results 
together with additional results and technical details related to our CBRL approach
combined with
the LQR control-theoretic framework 
as presented
in 
Section~\ref{sec:CBRLapproach}.
We also provide the algorithmic details of our control-policy-variable gradient ascent method used for
our numerical results of 
Section~\ref{sec:experiments}
and
Appendix~\ref{app:experiments}.



\subsection{Proof of Theorem~\ref{contraction}.}
%

\noindent\textbf{Assumption~\ref{asm:richness}.}
\textit{
There exist a policy function $\cG$ in the family $\bbF$ and a unique 
variable 
vector $v^\ast$ in the 
independent-variable set $\setV$ such that, for any state $x\in\setX$, 
$\pi^\ast(x) = \cF_{v^\ast}(x)=\cG(v^\ast)(x)$.
} \\

\noindent\textbf{Theorem~\ref{contraction}.}
\textit{
For any $\gamma \in(0,1)$, the operator $\ContractionOp$ in~\eqref{eqn:ContractionOp} is a contraction in the supremum norm.
Moreover, the sequence defined by the iterative process $\tilde{q}_{t+1}= \ContractionOp (\tilde{q}_{t})$ converges with rate $\gamma$ to $\tilde{q}^\ast$ as $t\rightarrow\infty$ such that
\begin{equation*}
\| \tilde{q}_{t} - \tilde{q}^\ast \|_\infty \leq \gamma^t \| \tilde{q}_{0} - \tilde{q}^\ast \|_\infty .
\end{equation*}
Supposing Assumption~\ref{asm:richness} holds for the family of policy functions $\bbF$ and its variable set $\setV$,
the
contraction operator $\ContractionOp$ achieves the same asymptotically optimal outcome as that of
the Bellman operator
$\BellmanOp$.
}\\

\begin{proof}
Recall that $\tilde{\Pi}\subseteq \Pi$ denotes the set of all stationary policies that are directly determined by the control-policy functions $\cG(v)$ over all $v\in\setV$.
Then, for
the operator $\ContractionOp$ defined on $\tilde{\setQ}(\setX\times \tilde{\Pi})$ in~\eqref{eqn:ContractionOp} and for any two functions
$\tilde{q}_1(x, \cF_{v})\in\tilde{\setQ}(\setX\times \tilde{\Pi})$ and $\tilde{q}_2(x, \cF_{v})\in \tilde{\setQ}(\setX\times \tilde{\Pi})$
with $v\in\setV$, we obtain
\begin{align*}
 \|\ContractionOp \tilde{q}_1-\ContractionOp \tilde{q}_2 \|_{\infty} & \stackrel{(a)}{=} \sup_{x\in\setX, v\in\setV} \Big| \sum_{y\in \setX} 
 \pr(y \, | \, x, \cF_v)
 \Big[r(x,\cF_v(x)) + \gamma \sup_{v_1\in\setV} \tilde{q}_1(y,\cF_{v_1}) \\
 & \qquad\qquad\qquad\qquad\qquad\qquad\quad - r(x,\cF_v(x)) - \gamma \sup_{v_2\in\setV} \tilde{q}_2(y,\cF_{v_2}) \Big] \Big|  
 \\
& \stackrel{(b)}{=} \sup_{x\in\setX,v\in\setV} \gamma \Big| \sum_{y\in \setX}
\pr(y \, | \, x, \cF_v)
\Big[ \sup_{v_1\in\setV} \tilde{q}_1(y,\cF_{v_1}) - \sup_{v_2\in\setV} \tilde{q}_2(y,\cF_{v_2}) \Big] \Big| 
\\
& \stackrel{(c)}{\leq} \sup_{x\in\setX,v\in\setV} \gamma \sum_{y\in \setX}
\pr(y \, | \, x, \cF_v)
\Big| 
\sup_{v_1\in\setV} \tilde{q}_1(y,\cF_{v_1}) - \sup_{v_2\in\setV} \tilde{q}_2(y,\cF_{v_2}) \Big| 
\\
& \stackrel{(d)}{\leq} \sup_{x\in\setX,v\in\setV} \gamma \sum_{y\in \setX}
\pr(y \, | \, x, \cF_v)
\sup_{z\in\setX,v^\prime\in\setV} \left| \tilde{q}_1(z,\cF_{v^\prime}) - \tilde{q}_2(z,\cF_{v^\prime}) \right| \\
& \stackrel{(e)}{\leq} \sup_{x\in\setX,v\in\setV} \gamma \sum_{y\in \setX}
\pr(y \, | \, x, \cF_v)
\| \tilde{q}_1 - \tilde{q}_2 \|_{\infty} \\
& \stackrel{(f)}{=} \; \gamma \| \tilde{q}_1 - \tilde{q}_2 \|_{\infty},
\end{align*}
where
(a) is by definition in~\eqref{eqn:ContractionOp}
noting that taking the supremum over $\setV$ is equivalent to taking the supremum over $\tilde{\Pi}$,
(b) follows from straightforward algebra, 
(c) and (d) are due to the triangle inequality,
and (e) and (f) directly follow by definition.
For any $\gamma \in(0,1)$, this establishes that the operator $\ContractionOp$ in~\eqref{eqn:ContractionOp} is a contraction in the supremum norm, thus rendering the desired result for the first part of the theorem.

Next, for the rate of convergence,
since the operator $\ContractionOp$ is a contraction mapping,
we have from above
\begin{equation*}
    \| \tilde{q}_{t+1} - \tilde{q}^\ast \|_\infty = \| \ContractionOp \tilde{q}_{t} - \ContractionOp \tilde{q}^\ast \|_\infty \leq \gamma \| \tilde{q}_{t} - \tilde{q}^\ast \|_\infty , \qquad t \in \Ints_+ .    
\end{equation*}
Upon recursively applying the above inequality, we obtain
\begin{equation*}
    \| \tilde{q}_{t} - \tilde{q}^\ast \|_\infty \leq \gamma^t \| \tilde{q}_{0} - \tilde{q}^\ast \|_\infty , \qquad t \in \Ints_+  
\end{equation*}
which asymptotically tends to zero as $t\rightarrow\infty$ implying that $\tilde{q}_{t}$ converges exponentially to $\tilde{q}^\ast$ with rate $\gamma$.



For the second part of the theorem, 
under the stated supposition, we know that the optimal policy $\pi^{\ast}(x)$ realized by the Bellman operator holds for a unique vector $v^\ast$ in the variable set $\setV$ and any $x\in \setX$.
We also know that the Bellman operator $\BellmanOp$ in~\eqref{eqn:BellmanOperator}
and our CBRL operator $\ContractionOp$ in~\eqref{eqn:ContractionOp} are both contractions in supremum norm with unique fixed points, where the fixed point of the Bellman operator is the optimal solution of the Bellman equation.
In particular,
repeatedly applying the Bellman operator $\BellmanOp$ in~\eqref{eqn:BellmanOperator} to an action-value function $q(x,u)\in\bbQ(\setX \times \setA)$ is well-known to asymptotically yield the unique fixed point equation $\BellmanOp q_B^\ast = q_B^\ast$ 
where
\begin{align}
q_B^{\ast}(x,\pi^{\ast}(x)) &= \sum_{y\in \setX} 
\pr(y \, | \, x, \pi^{\ast})
\Big[ r(x,\pi^{\ast}(x)) + \gamma \sup_{u^\prime\in\setA} q_B^{\ast}(y,u^\prime)  \Big]
\nonumber \\
& 
= \sum_{y\in \setX} 
\pr(y \, | \, x, \pi^{\ast})
\Big[ r(x,\pi^{\ast}(x)) + \gamma q_B^{\ast}(y,\pi^{\ast}(y))  \Big] ,
\label{eqn:Bellmanfixed_point}
\end{align}
with the second equality following from the definition of $\pi^\ast(x)$,
and $q_B^\ast$ is the optimal action-value function~\citep{Szep10}.
Likewise, repeatedly applying our CBRL operator $\ContractionOp$ in~\eqref{eqn:ContractionOp} to an action-value function $\tilde{q}(x,\cF_v) \in\tilde{\bbQ}(\setX \times \tilde{\Pi})$ asymptotically renders the unique fixed point equation $\ContractionOp \tilde{q}_C^{\ast} = \tilde{q}_C{^\ast}$ where
\begin{align}
\tilde{q}_C^{\ast}(x,\cF_{v^\ast}) & = \sum_{y\in \setX} 
\pr(y \, | \, x, \cF_{v^\ast})
\Big[ r(x,\cF_{v^\ast}(x)) + \gamma \sup_{v^\prime \in \setV} \tilde{q}_C^{\ast}(y,\cF_{v^\prime})  \Big] 
\nonumber \\
& 
= \sum_{y\in \setX} 
\pr(y \, | \, x, \cF_{v^\ast})
\Big[ r(x,\cF_{v^\ast}(x)) + \gamma \tilde{q}_C^{\ast}(y,\cF_{v^\ast})  \Big] ,
\label{eqn:CBRLfixed_point}
\end{align}
with the second equality following from the relationship between $\tilde{q}(x,\cF_{v^\ast})$ and $\tilde{q}(x,\pi^\ast)$, given below in~\eqref{eqn:SuppositionEquivalence}, under the supposition of the theorem together with the optimality of $\pi^\ast$.
By showing that these two fixed points $q_B^{\ast}(x,\pi^{\ast}(x))$ and $\tilde{q}_C^{\ast}(x,\cF_{v^\ast})$ in~\eqref{eqn:Bellmanfixed_point} and~\eqref{eqn:CBRLfixed_point} coincide, the desired result follows.

To this end, from the definition of $\tilde{q}(x,\cF_{v^\ast})$ and under the stated supposition of the theorem, we have for any $\tilde{q}(x,\cF_{v^\ast})  \in\tilde{\bbQ}(\setX \times \tilde{\Pi})$ and any $x\in \setX$
\begin{equation}
    \tilde{q}(x,\cF_{v^\ast}) := q(x,\cF_{v^\ast}(x)) = q(x,\pi^\ast(x)) =: \tilde{q}(x,\pi^\ast) ,
    \label{eqn:SuppositionEquivalence}
\end{equation}
noting that $v^\ast$ is the unique vector in the variable set $\setV$ associated with $\pi^\ast(x)$.
Let ${q}_C^\ast(x,\cF_{v^\ast}(x)) = \tilde{q}_C^\ast(x,\cF_{v^\ast})$ denote the action-value function that corresponds by definition to $\tilde{q}_C^\ast(x,\cF_{v^\ast})$ of~\eqref{eqn:SuppositionEquivalence} in accordance with~\eqref{eqn:CBRLfixed_point}.
Then, in one direction to show $\tilde{q}_C^{\ast}(x,\cF_{v^\ast}) \leq q_B^{\ast}(x,\pi^{\ast}(x))$, we derive
\begin{align}
q_B^{\ast}(x,\pi^{\ast}(x)) & \stackrel{(a)}{=} 
\sum_{y\in \setX} 
\pr(y \, | \, x, \pi^{\ast})
\Big[ r(x,\pi^{\ast}(x)) + \gamma q_B^{\ast}(y,\pi^{\ast}(y))  \Big]
\nonumber \\
& 
\stackrel{(b)}{\geq} \sum_{y\in \setX} 
\pr(y \, | \, x, \pi^{\ast})
\Big[ r(x,\pi^{\ast}(x)) + \gamma {q}_C^{\ast}(y,\pi^{\ast}(y))  \Big]
\nonumber \\
& 
\stackrel{(c)}{=} \sum_{y\in \setX} 
\pr(y \, | \, x, \cF_{v^{\ast}})
\Big[ r(x,\cF_{v^{\ast}}(x)) + \gamma {q}_C^{\ast}(y,\cF_{v^{\ast}}(y))  \Big]
\nonumber \\
& 
\stackrel{(d)}{=} \sum_{y\in \setX} 
\pr(y \, | \, x, \cF_{v^{\ast}})
\Big[ r(x,\cF_{v^{\ast}}(x)) + \gamma \tilde{q}_C^{\ast}(y,\cF_{v^{\ast}}) \Big]
\stackrel{(e)}{=} \tilde{q}_C^{\ast}(x,\cF_{v^{\ast}}) ,
\end{align}
where:
(a) is directly from the definition of the fixed point in~\eqref{eqn:Bellmanfixed_point};
(b) follows from the definition of the Bellman operator $\BellmanOp$ for any action-value function $q(x,u) \in \bbQ(\setX \times \setA)$, the unique fixed point~\eqref{eqn:Bellmanfixed_point}, and the optimality of $q_B^\ast$ and $\pi^\ast$;
(c) follows upon the substitution of policy $\cF_{v^\ast}$ for $\pi^\ast$ and ${q}_C^\ast(x,\pi^\ast(x)) = {q}_C^\ast(x,\cF_{v^\ast}(x))$ from~\eqref{eqn:SuppositionEquivalence};
(d) follows upon the substitution of ${q}_C^\ast(x,\cF_{v^\ast}(x)) = \tilde{q}_C^\ast(x,\cF_{v^\ast})$ from~\eqref{eqn:SuppositionEquivalence};
and (e) follows from \eqref{eqn:CBRLfixed_point}.
%
In the other direction to show $q_B^{\ast}(x,\pi^{\ast}(x)) \leq \tilde{q}_C^{\ast}(x,\cF_{v^\ast})$, we derive
\begin{align}
\tilde{q}_C^{\ast}(x,\cF_{v^\ast}) & \stackrel{(a)}{=} \sum_{y\in \setX} 
\pr(y \, | \, x, \cF_{v^\ast})
\Big[ r(x,\cF_{v^\ast}(x)) + \gamma \tilde{q}_C^{\ast}(y,\cF_{v^\ast})  \Big]
\nonumber \\
& 
\stackrel{(b)}{=} \sum_{y\in \setX} 
\pr(y \, | \, x, \cF_{v^\ast})
\Big[ r(x,\cF_{v^\ast}(x)) + \gamma {q}_C^{\ast}(y,\cF_{v^\ast}(y)) \Big] \nonumber \\
& \stackrel{(c)}{\geq} \sum_{y\in \setX} 
\pr(y \, | \, x, \cF_{v^\ast})
\Big[ r(x,\cF_{v^\ast}(x)) + \gamma {q}_B^{\ast}(y,\cF_{v^\ast}(y)) \Big]
\nonumber \\
&
\stackrel{(d)}{=} \sum_{y\in \setX} 
\pr(y \, | \, x, \pi^\ast)
\Big[ r(x,\pi^\ast(x)) + \gamma {q}_B^{\ast}(y,\pi^{\ast}(y)) \Big] \nonumber \\
& \stackrel{(e)}{=} q_B^{\ast}(x,\pi^{\ast}(x)) ,
\end{align}
where:
(a) is directly from the definition of the fixed point in~\eqref{eqn:CBRLfixed_point};
(b) follows upon the substitution of ${q}_C^\ast(x,\cF_{v^\ast}(x)) = \tilde{q}_C^\ast(x,\cF_{v^\ast})$ from~\eqref{eqn:SuppositionEquivalence};
(c) follows from the definition of the CBRL operator $\ContractionOp$ for any action-value function $q(x,u) \in \bbQ(\setX \times \setA)$, the unique fixed point~\eqref{eqn:CBRLfixed_point}, and the application of the Bellman optimality conditions on $\tilde{q}_C^\ast$ in~\eqref{eqn:CBRLfixed_point} for $\cF_{v^\ast}$,
all under the supposition of the theorem;
(d) follows upon the substitution of policy $\pi^\ast$ for $\cF_{v^\ast}$ and $q_B^\ast(x,\cF_{v^\ast}(x)) = q_B^\ast(x,\pi^\ast(x))$ from~\eqref{eqn:SuppositionEquivalence};
and (e) follows from \eqref{eqn:Bellmanfixed_point}.
We therefore have coincidence of the fixed points~\eqref{eqn:Bellmanfixed_point} and~\eqref{eqn:CBRLfixed_point}, 
thus completing the proof.

\end{proof}

\subsection{Proof of Theorem~\ref{thm:QLearning}.} 
\noindent\textbf{Theorem~\ref{thm:QLearning}.}
\textit{
Suppose 
Assumption~\ref{asm:richness} holds for the family of policy functions $\bbF$ and its independent-variable set $\setV$ with a contraction operator $\ContractionOp$ as defined in~\eqref{eqn:ContractionOp}.
If $\sum_t \alpha_t = \infty$, $\sum_t \alpha_t^2 < \infty$, and $r_t$ are bounded, then $\tilde{q}_{t}$ under the $Q$-learning update rule~\eqref{eqn:q-learn-new} converges to the optimal fixed point $\tilde{q}^*$ as $t\rightarrow\infty$
and
the optimal policy function is obtained from a unique variable vector $v^* \in \setV$.
}\\

\begin{proof}
We first rewrite \eqref{eqn:q-learn-new} as a convex combination of
$$
\tilde{q}_{t}(x_t,\cF_{v,t}) \qquad \mbox{and} \qquad r_t + \gamma \sup_{v^\prime\in\setV} \tilde{q}_t(x_{t+1},\cF_{v^\prime}) ,
$$
where
\begin{equation*}\tilde{q}_{t+1}(x_t,\cF_{v,t}) = \left(1-\alpha_t(x_t,\cF_{v,t})\right)\tilde{q}_{t}(x_t,\cF_{v,t}) + \alpha_t(x_t,\cF_{v,t})\left[ r_t + \gamma \sup_{v^\prime\in\setV} \tilde{q}_t(x_{t+1},\cF_{v^\prime}) \right].
\end{equation*}
Define $\Delta_t := \tilde{q}_t - \tilde{q}^*$ to be the difference between $\tilde{q}_t$ and $\tilde{q}^*$, which satisfies
\begin{equation*}\Delta_{t+1}(x_t,\cF_{v,t}) = \left(1-\alpha_t(x_t,\cF_{v,t})\right){\Delta}_{t}(x_t,\cF_{v,t}) + \alpha_t(x_t,\cF_{v,t})\left[ r_t + \gamma \sup_{v^\prime\in\setV} \tilde{q}_t(x_{t+1},\cF_{v^\prime}) - \tilde{q}^*\right].
\end{equation*}
Further define
\begin{equation*}
H_t(x,\cF_v) := r(x,\cF_v(x)) + \gamma \sup_{v^\prime\in\setV} \tilde{q}_t(X(x,\cF_v),\cF_{v^\prime}) - \tilde{q}^*(x,\cF_v), 
\end{equation*}
where $X(x,\cF_v)$ is a randomly sampled state from the MDP under an initial state $x$ and policy $\cF_v$.
We then derive
\begin{align*}
\ex[H_t(x,\cF_v)|P_t] & = \sum_{y\in \setX} \pr(y | x, \cF_v)\Big[r(x,\cF_v(x)) +\gamma \sup_{v^\prime\in\setV}\tilde{q}_t(y,\cF_{v^\prime}) - \tilde{q}^*(x,\cF_v)\Big] \\
& = (\ContractionOp \tilde{q}_t)(x,\cF_v) - \tilde{q}^*(x,\cF_v) = (\ContractionOp \tilde{q}_t)(x,\cF_v) - (\ContractionOp \tilde{q}^*)(x,\cF_v) ,
\end{align*}
where $P_t = \{\Delta_t, \Delta_{t-1}, \ldots, H_{t-1},\ldots, \alpha_{t-1},\ldots \}$ represents the 
past history of the process~\citep{jaakkola:1994,Tsitsiklis94},
and
the last equality follows from $\tilde{q}^*(x,\cF_v) = (\ContractionOp \tilde{q}^*)(x,\cF_v)$.
From this equation together with Theorem~\ref{contraction}, we conclude
\begin{equation*}
\left\|\ex[H_t(x,\cF_v)|P_t]\right\|_\infty = \left\|\ContractionOp \tilde{q}_t - \ContractionOp \tilde{q}^*\right\|_\infty
\leq \gamma \left\|\tilde{q}_t - \tilde{q}^*\right\|_\infty = \gamma \|\Delta_t\|_{\infty} .
\end{equation*}
Similarly, we derive
\begin{align*}
    \var[H_t(x,\cF_v)|P_t] & = \ex\left[\Big( r(x,\cF_v(x)) +\gamma\sup_{v^\prime\in\setV}\tilde{q}_t(X(x,\cF_v),\cF_{v^\prime}) - \tilde{q}^*(x,\cF_v) \right. \\
    & \qquad\qquad\qquad\qquad\qquad\qquad\qquad \left. - (\ContractionOp \tilde{q}_t)(x,\cF_v) + \tilde{q}^*(x,\cF_v) \Big)^2\right] \\
    & = \ex\left[\Big(r(x,\cF_v(x)) +\gamma\sup_{v^\prime\in\setV}\tilde{q}_t(X(x,\cF_v),\cF_{v^\prime}) - (\ContractionOp \tilde{q}_t)(x,\cF_v)\Big)^2\right] \\
    & = \var \left[r(x,\cF_v(x))\left. +\gamma\sup_{v^\prime\in\setV}\tilde{q}_t(X(x,\cF_v),\cF_{v^\prime})\right|P_t\right] \\ 
    & \leq \kappa(1+\|\Delta_t\|_{\infty}^2) , \quad \mbox{for some } \kappa > 0 , 
\end{align*}
where the inequality follows from $r(\cdot,\cdot)$ being bounded.

The desired result 
for the convergence of the $Q$-learning algorithm to the optimal fixed point then follows from~\cite[Theorem 3]{Tsitsiklis94}, \cite[Theorem 1]{jaakkola:1994}.
Finally, as a consequence of Assumption~\ref{asm:richness}, we conclude that the optimal policy function is obtained from a unique variable vector $v^* \in \setV$,
thus completing the proof. 
\end{proof}

\subsection{Proof of Theorem~\ref{thm:PWL}.}
Suppose $\bbF$
satisfies
Assumption~\ref{asm:richness},
and consider 
a sequence of less rich families $\bbF_1,\ldots, \bbF_k$ 
of policy functions $\cG^{(i)}\in\bbF_i$ obtained from independent-variable vectors of the corresponding 
independent variable 
sets $\setV_i$, 
further defining the operators $\ContractionOp_i:\tilde{\setQ}(\setX\times \tilde{\Pi}_i) \rightarrow \tilde{\setQ}(\setX\times \tilde{\Pi}_i)$ as in~\eqref{eqn:ContractionOp} for any function
$\tilde{q}_i(x,\cF_v^{(i)})\in \tilde{\setQ}(\setX\times \tilde{\Pi}_i)$, $i\in [k] := \{ 1, \ldots, k \}$.
From Theorem~\ref{contraction}, for $i\in [k]$, we have that the contraction operators 
$\ContractionOp_i:\setQ(\setX\times \tilde{\Pi}_i) \rightarrow \setQ(\setX\times \tilde{\Pi}_i)$
under the variable sets $\setV_i$ converge to the unique fixed points $\tilde{q}_i^\ast(x,\cF_v^{(i)})$ which satisfy,
for all $x\in\setX$ and $v\in \setV_i$,
\begin{equation}
\tilde{q}_i^\ast(x,\cF^{(i)}_v) =  \sum_{y\in \setX}
\pr(y \, | \, x, \cF^{(i)}_v)
\bigg[r(x,\cF^{(i)}_v(x)) + \gamma \sup_{v^\prime\in\setV_i} \tilde{q}_i^\ast(y,\cF^{(i)}_{v^\prime}) \bigg] 
\end{equation}
corresponding to~\eqref{eqn:fixed_point}.
Let us first consider two such families $\bbF_1$ and $\bbF_2$,
for which we introduce the following lemma used in the proof of Theorem~\ref{thm:PWL}.
\begin{lemma}\label{lem:PWL}
Assume the state and action spaces are compact and $\cF_v$ is uniformly continuous for each $v$.
For two variable sets $\setV_1$ and $\setV_2$ and any two variable vectors $v_1\in\setV_1$ and $v_2\in\setV_2$, let
$d(\cdot,\cdot)$ be a $\sup$-norm distance function defined over the policy space $\tilde{\Pi}$, i.e., $d(\cF^{(1)}_{v_1},\cF^{(2)}_{v_2}) = \sup_{x\in\setX} \| \cF^{(1)}_{v_1}(x) - \cF^{(2)}_{v_2}(x) \|$.
Then, for all $\epsilon > 0$ there exists $\delta >0$ such that,
if for all $v_1\in \setV_1$ there exists $v_2\in \setV_2$ with  $d(\cF^{(1)}_{v_1},\cF^{(2)}_{v_2})< \delta$
and
if for all $v_2\in \setV_2$ there exists $v_1\in \setV_1$ with  $d(\cF^{(1)}_{v_1},\cF^{(2)}_{v_2})< \delta$, 
we have $\sup_{x\in\setX}\|\tilde{q}_1^\ast-\tilde{q}_2^\ast\|<\epsilon$.
\end{lemma}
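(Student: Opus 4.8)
The plan is to reduce the comparison of the two action-value fixed points to a comparison of the associated value functions and then to run a standard operator-perturbation argument. Introduce $V_i(x) := \sup_{v\in\setV_i}\tilde{q}_i^\ast(x,\cF_v^{(i)})$. Since $r(x,\cF_v^{(i)}(x))$ does not depend on the next state, the fixed-point equation for $\tilde{q}_i^\ast$ rewrites as $\tilde{q}_i^\ast(x,\cF_v^{(i)}) = r(x,\cF_v^{(i)}(x)) + \gamma\,(P^{\cF_v^{(i)}}V_i)(x)$, where $P^{\cF_v^{(i)}}$ is the one-step transition operator under the policy $\cF_v^{(i)}$; taking the supremum over $v\in\setV_i$ shows that $V_i$ is the unique fixed point of the $\gamma$-contraction $(\mathcal{B}_i W)(x) := \sup_{v\in\setV_i}\big[r(x,\cF_v^{(i)}(x)) + \gamma\,(P^{\cF_v^{(i)}}W)(x)\big]$, and in particular $\|V_i\|_\infty \le \|r\|_\infty/(1-\gamma)$.

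The core step is to bound $\|V_1-V_2\|_\infty$ by an explicit modulus of the closeness parameter $\delta$. Fix a bounded function $W$ and a state $x$; pick $v_1\in\setV_1$ (nearly) attaining the supremum defining $(\mathcal{B}_1 W)(x)$, and then, by hypothesis, a $v_2\in\setV_2$ with $d(\cF_{v_1}^{(1)},\cF_{v_2}^{(2)})<\delta$. Using uniform continuity of $r$ in the action and total-variation continuity of $\pr$ in the action on the compact set $\setA$, we get $|r(x,\cF_{v_1}^{(1)}(x)) - r(x,\cF_{v_2}^{(2)}(x))| + \gamma\,|(P^{\cF_{v_1}^{(1)}}W)(x) - (P^{\cF_{v_2}^{(2)}}W)(x)| \le \omega_r(\delta) + \gamma\,\omega_\pr(\delta)\,\|W\|_\infty =: \omega(\delta)$, where $\omega_r,\omega_\pr$ are moduli of continuity that are finite and vanish as $\delta\to 0$; hence $(\mathcal{B}_1 W)(x)\le(\mathcal{B}_2 W)(x)+\omega(\delta)$, and the symmetric half of the hypothesis gives the reverse inequality, so $\|\mathcal{B}_1 W - \mathcal{B}_2 W\|_\infty\le\omega(\delta)$. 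Applying this with $W=V_2$ together with $\|V_1-V_2\|_\infty = \|\mathcal{B}_1 V_1 - \mathcal{B}_2 V_2\|_\infty \le \gamma\|V_1-V_2\|_\infty + \|\mathcal{B}_1 V_2 - \mathcal{B}_2 V_2\|_\infty$ yields $\|V_1-V_2\|_\infty\le\omega(\delta)/(1-\gamma)$.

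It remains to transfer this back to the action-value fixed points. For any pair $\cF_{v_1}^{(1)},\cF_{v_2}^{(2)}$ with $d(\cF_{v_1}^{(1)},\cF_{v_2}^{(2)})<\delta$, subtracting the rewritten fixed-point equations and inserting $\pm\,(P^{\cF_{v_2}^{(2)}}V_1)(x)$ gives $|\tilde{q}_1^\ast(x,\cF_{v_1}^{(1)}) - \tilde{q}_2^\ast(x,\cF_{v_2}^{(2)})| \le \omega_r(\delta) + \gamma\,\omega_\pr(\delta)\,\|V_1\|_\infty + \gamma\,\|V_1-V_2\|_\infty$, and an elementary supremum-of-suprema estimate (using the two-sided $\delta$-closeness of the families) shows that $\|V_1-V_2\|_\infty$ is in turn controlled by the worst such difference over $\delta$-close pairs, so the two natural readings of the conclusion coincide up to constants. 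Every term on the right is at most a fixed multiple of $\omega(\delta)$, which tends to $0$ as $\delta\to 0$; thus, given $\epsilon>0$, choosing $\delta$ small enough that the bound is $<\epsilon$ establishes $\sup_{x\in\setX}\|\tilde{q}_1^\ast-\tilde{q}_2^\ast\|<\epsilon$.

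The step I expect to be the main obstacle is precisely the regularity of the one-step model. The stated hypotheses supply compactness of the state and action spaces and uniform continuity of each $\cF_v$ in the state, but the perturbation bounds above also need $r$ and the transition kernel $\pr$ to be uniformly continuous in the action argument, so that nearby actions induce nearby rewards and nearby next-state distributions; without such continuity, $\delta$-closeness of the policy families need not force closeness of the operators $\mathcal{B}_1,\mathcal{B}_2$. I would therefore either invoke standard continuity assumptions on the MDP primitives or state them explicitly alongside the compactness assumption; with that in place, the remainder is routine contraction-mapping bookkeeping and careful $\epsilon$-management of the suprema over $\setV_1$ and $\setV_2$.
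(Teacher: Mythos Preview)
Your argument is correct and is a genuinely different route from the paper's. The paper's proof is a one-liner: it invokes Lemma~2.2 of Hale (1970), which states that the fixed point of a contraction depends continuously on the contraction mapping, and then asserts that the operator $\ContractionOp_i$ depends continuously on the policy set $\setV_i$, hence so does $\tilde{q}_i^\ast$. You instead work everything out by hand: pass to the value functions $V_i=\sup_v\tilde{q}_i^\ast(\cdot,\cF_v^{(i)})$, show they are fixed points of $\gamma$-contractions $\mathcal{B}_i$, bound $\|\mathcal{B}_1 W-\mathcal{B}_2 W\|_\infty$ via moduli of continuity of $r$ and $\pr$ in the action, and then unwind the standard perturbation inequality $\|V_1-V_2\|_\infty\le\omega(\delta)/(1-\gamma)$ before transferring back to $\tilde{q}_i^\ast$.

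What each approach buys: the paper's citation is terse and hides the mechanics, whereas your argument is self-contained, gives an explicit rate in terms of $\omega(\delta)$ and $(1-\gamma)^{-1}$, and---importantly---surfaces the regularity requirement you flag in your last paragraph. That concern is well placed and applies equally to the paper's proof: to conclude that $\ContractionOp_i$ varies continuously with $\setV_i$ (the hypothesis needed to invoke Hale's lemma), one still needs $r$ and the transition kernel to be continuous in the action argument, which the lemma's stated hypotheses (compactness of $\setX,\setA$ and uniform continuity of each $\cF_v$ in the state) do not by themselves supply. Your explicit argument makes this dependence visible rather than leaving it implicit. Your handling of the ambiguous conclusion $\sup_{x}\|\tilde{q}_1^\ast-\tilde{q}_2^\ast\|$ (the two functions live on different domains) via $\delta$-matched pairs and the value-function bound is also a reasonable way to give the statement a definite meaning.
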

\begin{proof}
    Since the state and action spaces are compact and $\cF_v$ is uniformly continuous for each $v$, it then follows from Lemma~2.2 in~\cite{Hale1970} that the unique fixed point depends continuously on the contraction mapping $\ContractionOp$ and thus we find that $\tilde{q}^\ast$ w.r.t.\ its second argument depends continuously on the sets $\setV_i$, which implies the desired result.
\end{proof}

Intuitively, Lemma~\ref{lem:PWL} shows that, for any policy families $\bbF_1$ and $\bbF_2$ sufficiently close to each other, the fixed points $\tilde{q}_1$ and $\tilde{q}_2$ of the corresponding operators $\ContractionOp_1$ and $\ContractionOp_2$ are also close to each other.
When the policy family $\bbF_k$ is sufficiently rich and approaches $\bbF$, then the fixed point of the corresponding operator $\ContractionOp_k$ approaches the unique fixed point of $\bbF$ satisfying~\eqref{eqn:fixed_point}, and therefore they approach the optimal $q$-value as promised by Bellman from Theorem~\ref{contraction}.
We formally characterize this asymptotic convergence of approximate optimality to global optimality in Theorem~\ref{thm:PWL}.\\

\noindent\textbf{Theorem~\ref{thm:PWL}.}
\textit{
Assume the state and action spaces are compact and $\cF_v$ is uniformly continuous for each $v$. 
Consider $\bbF$ and a sequence of families of policy functions $\bbF_1,  \bbF_2, \ldots, \bbF_{k-1}, \bbF_k$, with $\setV$ and $\setV_i$ respectively denoting the independent-variable sets corresponding to $\bbF$ and $\bbF_i$, $i\in [k]$.
Let $d(\cdot,\cdot)$ be a $\sup$-norm distance function defined over the policy space $\tilde{\Pi}$, i.e., $d(\cF^{(i)}_{v_i},\cF^{(j)}_{v_j}) = \sup_{x\in\setX} \| \cF^{(i)}_{v_i}(x) - \cF^{(j)}_{v_j}(x) \|$, $v_i\in\setV_i, v_j\in\setV_j$.
Further let $d^\prime(\cdot,\cdot)$ be an $\inf$-norm distance function defined over the policy function space $\bbF$, i.e., $d^\prime(\cG^{(i)},\cG^{(j)}) = \inf_{v_i\in\setV_i,v_j\in\setV_j} d(\cG^{(i)}(v_i) , \cG^{(j)}(v_j))$, $\cG^{(i)}\in\bbF_i, \cG^{(j)}\in\bbF_j$.
Suppose, for all $\cG \in \bbF$, there exists a $\cG^{(k)}\in\bbF_k$ such that $d^\prime(\cG^{(k)},\cG) \rightarrow 0$ as $k\rightarrow \infty$.
Then,
$\|\tilde{q}_k^\ast-\tilde{q}^\ast\|_\infty \rightarrow 0$ 
as $k\rightarrow \infty$.
}\\

\begin{proof}
First, in the definition of the contraction mapping $\ContractionOp$, the result of the supremum depends continuously on the set $\setV$ and thus the corresponding argument of $\ContractionOp$ depends continuously on $\setV$. 
The desired result then follows from Lemma~\ref{lem:PWL}.
\end{proof}

\subsection{Proof of Theorem~\ref{thm:PGT}.}
%
\noindent\textbf{Theorem~\ref{thm:PGT}.}
\textit{
Consider a family of control policy functions $\bbF$, its independent-variable set $\setV$ with contraction operator $\ContractionOp$ in the form of~\eqref{eqn:ContractionOp}, and
the value function $V_{{\cF_{v}}}$
under the control policy $\cF_v$.
Assuming $\cF_{v,u}(x)$ is differentiable 
w.r.t.\
$\cF_v$ and $\cF_v(x)$ is differentiable 
w.r.t.\
$v$
(i.e., both
$\partial \cF_{v,u}(x) / \partial \cF_v$ and $\partial \cF_v(x) / \partial v$
exist),
we then have
\begin{equation}
    \nabla_v V_{{\cF_{v}}} (x_0) = \sum_{x\in \setX} \: \sum_{k=0}^\infty \gamma^k \,
    \pr(x, k \, | \, x_0, \cF_v)
    \; \left[ \sum_{u \in \setA} \frac{\partial \cF_{v,u}(x)}{\partial \cF_v} \frac{\partial\cF_v(x)}{\partial v} \tilde{q}_u(x,{\cF}_{v}) \right] .
    \tag{\ref{eqn:nablavp}}
\end{equation}
}

\begin{proof}
We derive
\begin{align*}
\nabla_v V_{{\cF_{v}}} (x_0) 
& \stackrel{(a)}{=} \nabla_v \left[ \sum_u \cF_{v,u}(x_0)\tilde{q}_{u}(x_0,\cF_v)\right] \\
& \stackrel{(b)}{=} \sum_u \left[ \nabla_v \cF_{v,u}(x_0)\tilde{q}_{u}(x_0,\cF_v) + \cF_{v,u}(x_0)\nabla_v \tilde{q}_{u}(x_0,\cF_v) \right] \\
& \stackrel{(c)}{=} \sum_u \Bigg[\nabla_v \cF_{v,u}(x_0)\tilde{q}_{u}(x_0,\cF_v) +  \cF_{v,u}(x_0)\nabla_v\sum_{x,r}
\pr(x,r \, | \, x_0,\cF_v,u)
(r+\gamma V_{\cF_v}(x))\Bigg] \\
& \stackrel{(d)}{=} \sum_u \left[\nabla_v \cF_{v,u}(x_0)\tilde{q}_{u}(x_0,\cF_v) + \gamma \cF_{v,u}(x_0)\sum_{x}
\pr(x \, | \, x_0,\cF_v,u)
\nabla_{v}V_{\cF_v}(x)\right] \\
& \stackrel{(e)}{=} \sum_u \Bigg[\nabla_v \cF_{v,u}(x_0)\tilde{q}_{u}(x_0,\cF_v) + \gamma \cF_{v,u}(x_0)\sum_{x}
\pr(x \, | \, x_0,\cF_v,u)
\sum_{u'}\bigg[\nabla_{v}\cF_{v,u'}(x) \tilde{q}_{u'}(x,\cF_v) \\
&\qquad \qquad \qquad \qquad \qquad \qquad \qquad \qquad\qquad
+ \gamma \cF_{v,u'}(x)\sum_{x'}
\pr(x' \, | \, x,\cF_v,u')
\nabla_v V_{\cF_v}(x')\bigg]\Bigg] \\
& \stackrel{(f)}{=} \sum_{x\in \setX} \sum_{k=0}^\infty \gamma^k \,
\pr(x,k \, | \, x_0,\cF_v)
\sum_u \nabla_v \cF_{v,u}(x)\tilde{q}_{u}(x,\cF_v) ,
\end{align*}
where
(a) is by the definition of the value function for state $x_0$,
(b) follows from the product rule,
(c) follows by the definition of $\tilde{q}_u(x,\cF_v)$,
(d) follows by applying the gradient w.r.t.\ $v$ summed over all~$r$,
(e) follows by repeating each of the preceding steps for $\nabla_{v}V_{\cF_v}(x')$,
and 
(f) follows from repeated unrolling along the lines of 
(e) 
and upon recalling
$\pr(x,k \, | \, x_0,\cF_v)$
to be the probability of going from state $x_0$ to state $x$ in $k$ steps under the control policy $\cF_v$.

The result then follows since 
$\nabla_v \cF_{v,u}(x) = \frac{\partial \cF_{v,u}(x)}{\partial \cF_v}
    \frac{\partial\cF_v(x)}{\partial v}$, by the chain rule.
\end{proof}

\subsection{Control-Policy-Variable Gradient Ascent Algorithm}
\label{app:algorithm}
Based on our new CBRL gradient theorem (Theorem~\ref{thm:PGT}), we devise control-policy-variable gradient ascent methods within the context of our general CBRL approach.
One such gradient ascent method for directly learning the unknown variable vector $v$ of the optimal control policy 
w.r.t.\
the value function $V$ comprises the iterative process according to~\eqref{eq:policy-gradient} with stepsize $\eta$.
Here $\nabla_v V_{{\cF_{v}}}$ is as given by~\eqref{eqn:nablavp}, where the first gradient term $\partial \cF_{v,u}(x) / \partial \cF_v$ is essentially the standard policy gradient, and the second gradient term $\partial \cF_v(x) / \partial v$ is specific to the control-theoretic framework employed in our general CBRL approach;
refer to 
Section~\ref{sec:CBRLapproach}
regarding the LQR control-theoretic framework combined as part of our CBRL approach.
We note that standard policy gradient ascent methods are a special case of~\eqref{eq:policy-gradient} where the independent-variable vector $v$ is directly replaced by the policy $\pi$.
In particular, the special case of $\cal G$ being an identity map and $\frac{\partial V}{\partial {\cF}_{v_t}} \frac{\partial {\cF}_{v_t}}{\partial {v_t}}$ replaced by $\frac{\partial V}{\partial \pi_t}$ corresponds to the direct policy gradient parameterization case in~\cite{JMLR:v22:19-736}.

Our algorithmic implementation of the above control-policy-variable gradient ascent method is summarized in Algorithm~\ref{alg:control_rl}.
This algorithm,
together with
our general CBRL approach 
combined with
the LQR control-theoretic framework
as presented
in Section~\ref{sec:CBRLapproach},
is used to obtain the numerical results for our CBRL approach presented in 
Section~\ref{sec:experiments}
and
Appendix~\ref{app:experiments}.

%
%
%

\begin{algorithm}
\caption{Control-Policy-Variable Gradient Ascent Algorithm}\label{alg:control_rl}
\begin{algorithmic}
\Require $\mathcal{E}, \mathbb{X}$, $\mathbb{U}, \mathbb{F}, \cF_v, \mathbf{u}(\cdot, \cdot), D_0 = \Phi, \eta, N$ \coolgrey{\Comment{Environment, State Set, Control Action Set, Control-Policy Family, Control-Policy, Control Action Generator (CBRL-LQR), Empty Rollout Dataset, Stepsize, Number of Iterations}}
\State \textbf{Initialize} $v_0 \mid \cF_{v_0} \in \mathbb{F}$, $\mathbf{X}_0 \in \mathbb{X}$ \\
\textbf{for} $t=0, 1, \cdots, N$ \textbf{do}\\
\;\;\;\;\textbf{1.} Generate the control action given the current state and control-policy-variable
\begin{equation*}
    \mathbf{u_t} = \mathbf{u}(\mathbf{X}_t, v_t)
\end{equation*}
\;\;\;\;\textbf{2.} Collect the rollout transition data by applying the control action to the environment $\mathcal{E}$
\begin{align*}
    &\mathcal{E} \stackrel{\mathbf{u_t}}{\longrightarrow} \textbf{rollout} := (\mathbf{X}_{t}, \mathbf{u_t}, \rewardR, \mathbf{X}_{t+1})  \\
    &D_{t+1} = D_t \cup \textbf{rollout}
\end{align*}
\;\;\;\;\textbf{3.} Compute the control-policy-variable gradient~\eqref{eqn:nablavp} by using
the standard policy gradient \State \;\;\;\;\;\;\; solution for the
first partial derivative and 
\eqref{eqn:PDcare} for the second partial derivative, given \State \;\;\;\;\;\;\; the current rollout dataset $D_{t+1}$ \\

\;\;\;\;\textbf{4.} Update the control-policy-variable using the gradient ascent iteration \eqref{eq:policy-gradient}
\begin{equation*}
    v_{t+1} = v_t + \eta  \nabla_v V_{{\cF_{v}}} (\mathbf{X}_t)
\end{equation*}
\end{algorithmic}
\end{algorithm}

Our CBRL approach
combined with
the LQR control-theoretic framework concerns the linear dynamics $\dot{x} = A_v x + B_v u$ where $A_v$ and $B_v$ contain elements of the unknown variable vector $v$.
By leveraging known basic information about the LQR control problem at hand, only a relatively small number $d$ of unknown variables need to be learned.
For
a wide range of
applications where state variables are derivatives of each other w.r.t.\ time (e.g., position, velocity, acceleration, jerk), the corresponding rows in the $A_v$ matrix consist of a single $1$ and the corresponding rows in $B_v$ comprise zeros.
We exploit this basic information to consider general matrix forms for $A_v$ and $B_v$ that reduce the number of unknown variables to be learned. 
As a representative illustration, the system dynamics when there are two groups of such variables have the form given by
\begin{align}\label{matrixAB}
    \underbrace{ \begin{bmatrix} \frac{dx_1}{dt} \\ \vdots \\ \frac{d^{k_1}x_1}{dt^{k_1}}  \\ \frac{dx_2}{dt} \\ \vdots \\ \frac{d^{k_2}x_2}{dt^{k_2}} \end{bmatrix} }_{\dot{x}} \!\!=\!\! \underbrace{\begin{bmatrix} 0 & 1 & 0 & \cdots & & &0 \\ 
    0 & 0 & 1 & \cdots &&\\
    a_{10} & \cdots && && & a_{1n} \\ 0 & \cdots &&0& 1 &\dots & 0 \\ 
    0 & \cdots &&& 0 & 1 &\cdots \\ 
    a_{20} & \cdots && & & & a_{2n} \end{bmatrix} }_{A_v} x + \underbrace{\begin{bmatrix} 0 \\ \vdots \\ b_0 \\ 0 \\ \vdots \\ b_1 \end{bmatrix}}_{B_v} u.
\end{align}

\clearpage

\section{Experimental Results}
\label{app:experiments}
In this appendix we provide additional details and results w.r.t.\ our numerical experiments to evaluate the performance of our general CBRL approach 
combined with
the LQR control-theoretic framework 
as presented
in Section~\ref{sec:CBRLapproach}
and using  Algorithm~\ref{alg:control_rl}.
Recall that we consider the following classical
RL
tasks from Gymnasium~\citep{towers_gymnasium_2023}:
\textit{Cart Pole}, 
\textit{Lunar Lander (Continuous)},
\textit{Mountain Car (Continuous)}, and
\textit{Pendulum}.
Our CBRL approach is compared against the three state-of-the-art RL algorithms
DQN~\citep{MnKaSi+13} for discrete actions, DDPG~\citep{lillicrap2015continuous} for continuous actions and PPO~\citep{schulman2017proximal}, together with a variant of PPO that solely replaces the nonlinear policy of PPO with a linear policy (since we know the optimal policy for problems such as \textit{Cart Pole} is linear).
These baselines are selected as the state-of-the-art RL algorithms for solving the 
RL
tasks under consideration.

Our CBRL approach depends in part upon the control-theoretic framework 
with which it is combined, 
where we have chosen LQR as a representative example with the understanding that not all of the above
RL
tasks can be adequately solved using LQR even if the variable vector $v$ is known.
Recall, however, that our CBRL approach allows the domain of the control policies to span a subset of 
states in $\setX$, thus enabling the partitioning of the state space so that properly increased richness w.r.t.\ finer and finer granularity can provide improved approximations and asymptotic optimality according to Theorem~\ref{thm:PWL} 
of Section~\ref{sec:CBRLapproach}, 
analogous to the class of canonical piecewise-linear function approximations~\citep{Lin1992}. 
While \textit{Cart Pole} and \textit{Lunar Lander (Continuous)} can be directly addressed within the context of LQR, this is not the case for \textit{Mountain Car (Continuous)} and \textit{Pendulum} which require a nonlinear controller.
We therefore partition the state space in the case of these 
RL
tasks and consider a corresponding piecewise-LQR controller where the learned variable vectors may differ or be shared across the partitions.
Such details are provided below for \textit{Mountain Car (Continuous)} and \textit{Pendulum}.

\subsection{\textit{Cart Pole} under CBRL LQR}
\label{app:cartpole}

\paragraph{Environment.}
As depicted in Figure~\ref{cartpole_env} and described in 
Section~\ref{ssec:cartpole},
\textit{Cart Pole} consists of a pole connected to a horizontally moving cart with the goal of balancing the pole by applying a force on the cart to the left or right.
The state of the system is given by $x=[p, \dot{p}, \theta, \dot{\theta}]$ in terms of the position of the cart $p$, the velocity of the cart $\dot{p}$, the angle of the pole $\theta$, and the angular velocity of the pole $\dot{\theta}$.
%
\begin{figure}[H]
    \centering
    \includegraphics[width=0.25\columnwidth]{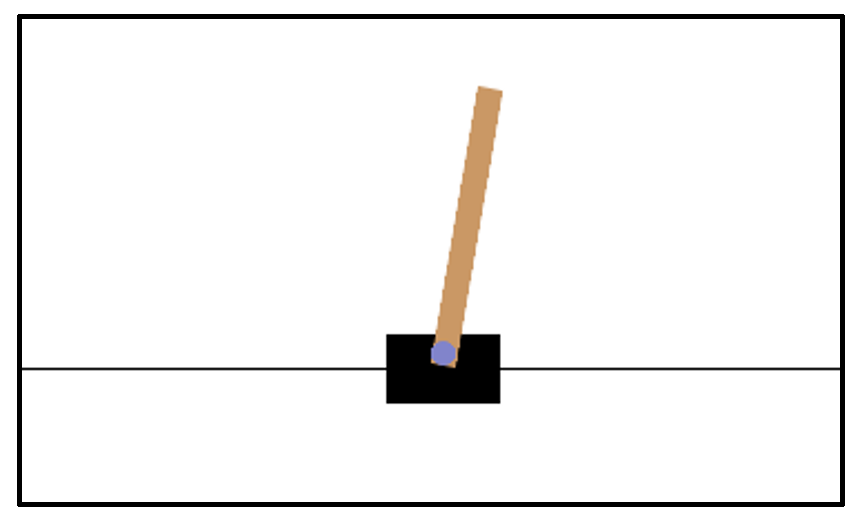}
\caption{The \textit{CartPole-v0} environment.}
\label{cartpole_env}
\end{figure}

\paragraph{Controller.} 
We address the \textit{Cart Pole} problem within the context of our CBRL approach by exploiting
the following general matrix form for the LQR dynamics
\begin{align}\label{matrixAB_cartpole}
    \dot{x} = \begin{bmatrix} \dot{p} \\ \ddot{p} \\ \dot{\theta} \\ \ddot{\theta} \end{bmatrix} = \underbrace{\begin{bmatrix} 0 & 1 & 0 & 0 \\ a_0 & a_1 & a_2 & a_3 \\ 0 & 0 & 0 & 1 \\ a_4 & a_5 & a_6 & a_7 \end{bmatrix} }_{A_{v}} x + \underbrace{\begin{bmatrix} 0 \\ b_0 \\ 0 \\  b_1 \end{bmatrix}}_{B_{u}} u,
\end{align}
which solely takes into account general physical relationships (e.g., the derivative of the angle of the pole is equivalent to its angular velocity) and laws (e.g., the force can only affect the acceleration), with the $d=10$ unknown variables $a_0, \ldots, a_7, b_0, b_1$ to be learned.

\paragraph{Numerical Results.} 
Table~\ref{tab_mean_cp}  and Figure~\ref{fig_all_curve_cartpole} present numerical results for the three state-of-the-art baselines (discrete actions) and our CBRL approach, each run over five independent random seeds.
All variables in~\eqref{matrixAB_cartpole} are initialized uniformly within $(0,1)$ and then learned using our control-policy-variable gradient ascent iteration~\eqref{eq:policy-gradient}. 
We consider four sets of initial variables (see Table~\ref{tab_init_para_cp}) to validate the robustness of our CBRL approach.
Figure~\ref{fig_all_curve_cartpole}(c)~--~\ref{fig_all_curve_cartpole}(f) illustrates the learning behavior of CBRL variables given different initialization.

Table~\ref{tab_mean_cp} and Figure~\ref{fig_all_curve_cartpole}(a) and~\ref{fig_all_curve_cartpole}(b) clearly demonstrate that our CBRL approach 
provides far superior performance, in terms of both mean and standard deviation, over all baselines w.r.t.\ both the number of episodes and running time, in addition to demonstrating a more stable training process. 
More specifically, our CBRL approach outperforms all baselines with a significant improvement in mean return over independent random environment runs from Gymnasium~\citep{towers_gymnasium_2023}, together with a significant reduction in the standard deviation of the return over these independent runs, thus rendering a more robust solution approach where the performance of each run is much closer to the mean than under the corresponding baseline results.
In fact, all independent random environment runs under our CBRL approach yield returns equal to the mean (i.e., standard deviation of $0$) for episodes $150$ and beyond.
Moreover, the relative percentage difference\footnote{For the relative comparison of improved mean return performance of our CBRL method over the next-best RL method,
since all the performance values are positive, 
we use the standard relative percentage difference formula $\frac{M_{\scriptscriptstyle CBRL} - M_{\scriptscriptstyle RL}}{M_{\scriptscriptstyle RL}} \times 100$. Similarly, for the relative comparison of reduced standard deviation of return performance from the next-best RL method to our CBRL method, 
since all the performance values are positive,
we typically use the standard relative percentage difference formula $\frac{S_{\scriptscriptstyle RL} - S_{\scriptscriptstyle CBRL}}{S_{\scriptscriptstyle CBRL}} \times 100$, with the sole exception of replacing the denominator with $(S_{\scriptscriptstyle RL} + S_{\scriptscriptstyle CBRL}) / 2$ based on the arithmetic mean when $S_{\scriptscriptstyle CBRL}$ is zero.
Refer to \url{https://en.wikipedia.org/wiki/Relative_change}}
in improved mean return performance of our CBRL approach over the next-best RL method Linear (based on mean performance at $500$ episodes) is $294\%$ and $6.6\%$ at $200$ and $500$ episodes, respectively;
and the relative percentage difference in reduced standard deviation of return performance from the next-best RL method Linear to our CBRL approach is $200\%$ and $200\%$ at $200$ and $500$ episodes, respectively.
In addition,
the relative percentage difference
in improved mean return performance of our CBRL approach over the DQN RL method is $927\%$ and $29.5\%$ at $200$ and $500$ episodes, respectively;
and the relative percentage difference in reduced standard deviation of return performance from the DQN RL method to our CBRL approach is $200\%$ and $200\%$ at $200$ and $500$ episodes, respectively.

We note an important difference between Fig.~\ref{fig_cp_episode}~--~\ref{fig_cp_second} and Fig.~\ref{fig_all_curve_cartpole}(a)~--~\ref{fig_all_curve_cartpole}(b), namely the shaded areas in Fig.~\ref{fig_cp_episode}~--~\ref{fig_cp_second} represent one form of variability across independent random seeds with the same variable initialization, whereas the shaded areas in Fig.~\ref{fig_all_curve_cartpole}(a)~--~\ref{fig_all_curve_cartpole}(b) represent the combination of two forms of variability---one across independent random seeds with the same variable initialization and the other across different random variable initializations.

\renewcommand{\arraystretch}{1.5}
\begin{table}[htbp]
	\centering
    \caption{Mean and Standard Deviation of Return of RL Methods for \emph{CartPole-v0}}
		\label{tab_mean_cp}
  \begin{tabular}{||c||c|c|c|c|c|c|c|c||}
  \hline\hline
  Episode & \multicolumn{2}{c|}{CBRL} & \multicolumn{2}{c|}{Linear} & \multicolumn{2}{c|}{PPO} & \multicolumn{2}{c||}{DQN} \\
   Number & Mean & Std.\ Dev.\ & Mean & Std.\ Dev.\ & Mean & Std.\ Dev.\ & Mean & Std.\ Dev.\ \\
  \hline
  50 & $163.49$ & $30.57$ & $27.15$ & $7.79$ & $26.19$ & $15.18$ & $19.99$ & $0.67$ \\
  \hline
  100 & $199.69$ & $ 0.63$ & $27.05$ & $7.21$ & $49.30$ & $22.28$ & $19.76$ & $1.08$ \\
  \hline
  150 & $200.00$ & $0.00$ & $37.20$ & $14.80$ & $57.84$ & $43.68$ & $18.85$ & $1.23$ \\
  \hline
  200 & $200.00$ & $0.00$ & $50.77$ & $26.93$ & $53.58$ & $32.06$ & $19.47$ & $1.79$ \\
  \hline
  250 & $200.00$ & $0.00$ & $80.05$ & $56.27$ & $75.65$ & $50.67$ & $18.88$ & $3.35$ \\
  \hline
  300 & $200.00$ & $0.00$ & $107.18$ & $67.03$ & $102.31$ & $57.89$ & $32.59$ & $21.07$ \\
  \hline
  350 & $200.00$ & $0.00$ & $118.56$ & $66.73$ & $127.67$ & $71.26$ & $56.44$ & $31.43$ \\
  \hline
  400 & $200.00$ & $0.00$ & $138.79$ & $57.88$ & $147.52$ & $66.20$ & $80.85$ & $31.94$ \\
  \hline
  450 & $200.00$ & $0.00$ & $165.75$ & $45.35$ & $183.12$ & $10.49$ & $123.49$ & $20.59$ \\
  \hline
  500 & $200.00$ & $0.00$ & $187.70$ & $23.72$ & $186.16$ & $22.05$ & $154.46$ & $24.42$ \\
  \hline\hline 
  \end{tabular}
\end{table}


\renewcommand{\arraystretch}{1.5}
\begin{table}[htbp]
	\centering
		\caption{Initial Variables of \emph{CartPole-v0}}
		\label{tab_init_para_cp}
		\begin{tabular}{|c|cccccccccc|}
        \hline
        \hline
			 Name & $a_0$
			& $a_1$ & $a_2$
			& $a_3$ & $a_4$
			& $a_5$ & $a_6$
			& $a_7$ & $b_0$ & $b_1$  \cr
        \hline
   
          Initial Variables 1 (P1) & 0.436 & 0.026 & 0.55 &  0.435 & 0.42 & 0.33 & 0.205 & 0.619 & 0.3 &  0.267 \cr

         Initial Variables 2 (P2) & 0.076 & 0.78 & 0.438 & 0.723 & 0.978 & 0.538 & 0.501 & 0.072 & 0.268 & 0.5 \cr

          Initial Variables 3 (P3) & 0.154 & 0.74 & 0.263 & 0.534 & 0.015 & 0.919 & 0.901 & 0.033 & 0.957 & 0.137 \cr

         Initial Variables 4 (P4) & 0.295 & 0.531 & 0.192 & 0.068 & 0.787 & 0.656 & 0.638 & 0.576 & 0.039 & 0.358 \cr

        \hline
        \hline
		\end{tabular}
\end{table}

\begin{figure}[b]
\centering 
\setcounter{subfigure}{0}
\subfigure[Return vs.\ Episode]
{
\begin{minipage}{0.45\linewidth}
\centering    
\includegraphics[width=1\columnwidth]{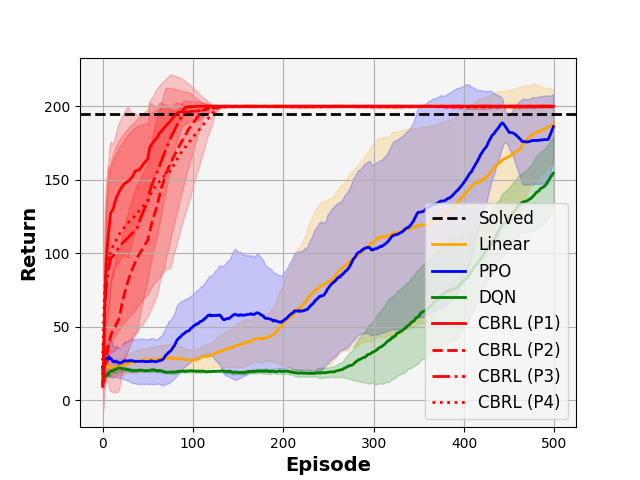}  
\end{minipage}
}
\subfigure[Return vs.\ Time]
{
\begin{minipage}{0.45\linewidth}
\centering    
\includegraphics[width=1\columnwidth]{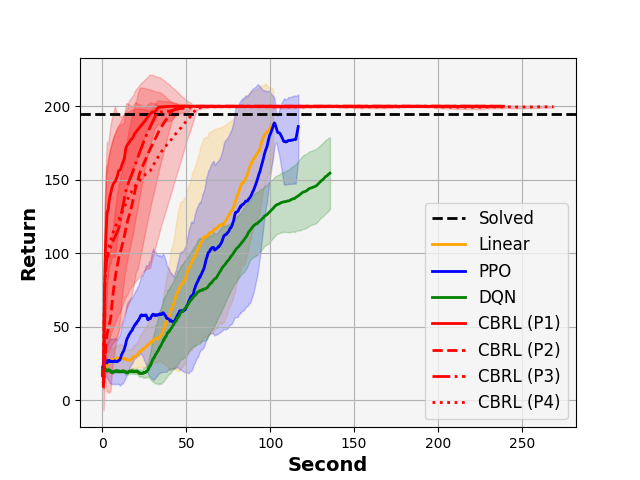}  
\end{minipage}
}\\
\subfigure[P1]
{
\begin{minipage}{0.45\linewidth}
\centering    
\includegraphics[width=1\columnwidth]{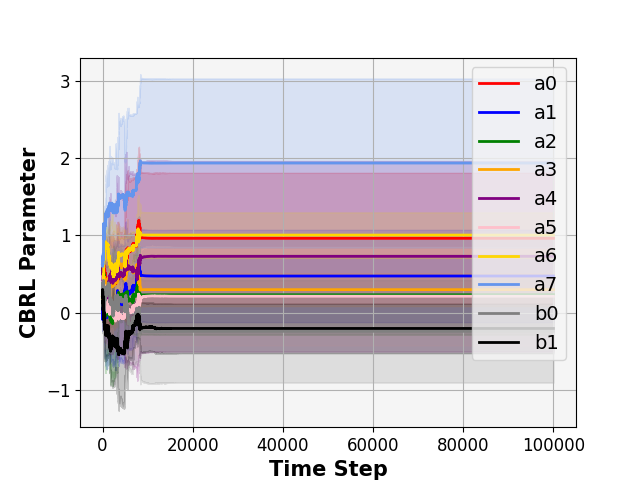}  
\end{minipage}
}
\subfigure[P2]
{
\begin{minipage}{0.45\linewidth}
\centering    
\includegraphics[width=1\columnwidth]{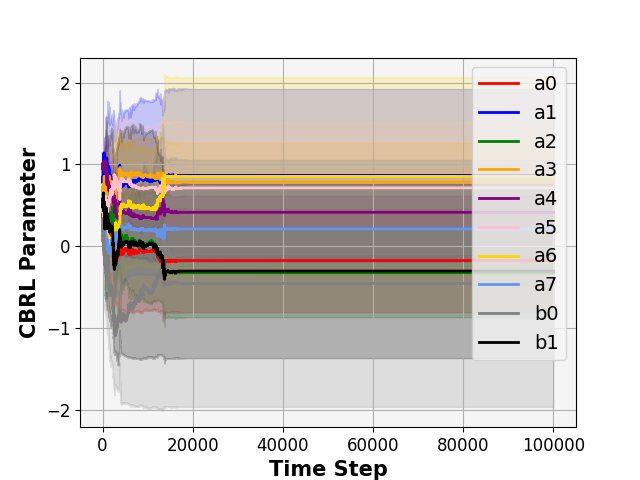}  
\end{minipage}
}\\
\subfigure[P3]
{
	\begin{minipage}{0.45\linewidth}
	\centering 
	\includegraphics[width=1\columnwidth]{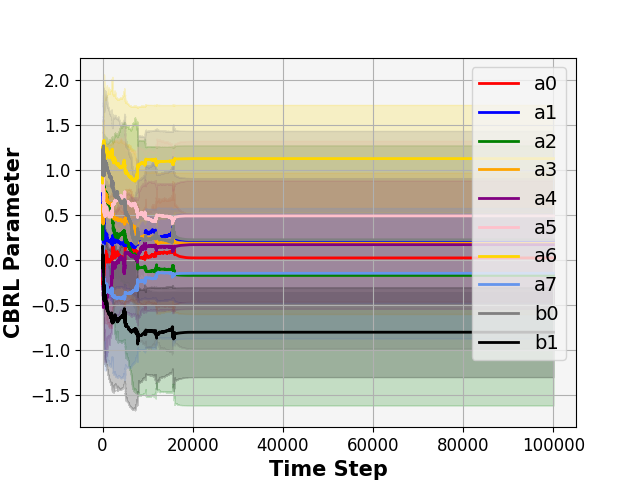}  
	\end{minipage}
}
\subfigure[P4]
{
	\begin{minipage}{0.45\linewidth}
	\centering 
	\includegraphics[width=1\columnwidth]{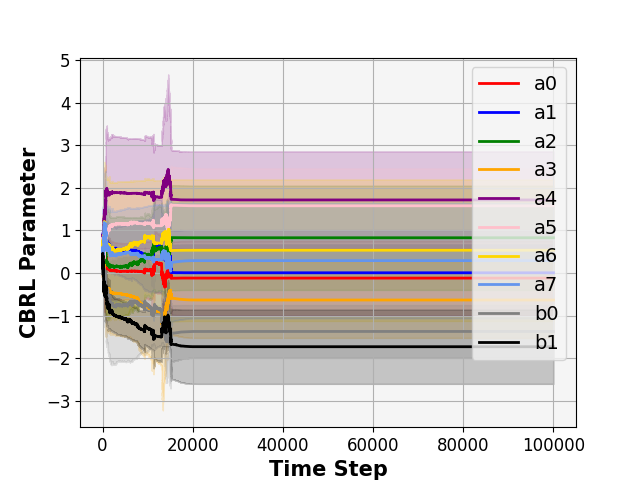}  
	\end{minipage}
}
\caption{Learning curves of \textit{CartPole-v0} over five independent runs. 
The solid line shows the mean and the shaded area depicts the standard deviation. 
(a) and (b): Return vs.\ number of episodes and running time, respectively, for our CBRL approach 
(over the five independent runs and the four initializations in Table~\ref{tab_init_para_cp})
in comparison with the Linear policy, PPO, and DQN
(over the five independent runs). 
(c) -- (f): Learning behavior of CBRL variables, initialized by Table~\ref{tab_init_para_cp}.}
\label{fig_all_curve_cartpole}
\end{figure}

\clearpage

\subsection{\textit{Lunar Lander (Continuous)} under CBRL LQR}
\label{app:lunarlander}

\paragraph{Environment.} 
As depicted in Figure~\ref{lunarlander_env} and described in 
Section~\ref{ssec:lunarlander},
\textit{Lunar Lander (Continuous)} is a classical spacecraft trajectory optimization problem with the goal to land softly and fuel-efficiently on a landing pad by applying thrusters to the left, to the right, and upward.
The state of the system is given by $x=[p_x, v_x, p_y, v_y, \theta, \dot{\theta}]$ in terms of the $(x,y)$ positions $p_x$ and $p_y$, two linear velocities $v_x$ and $v_y$, angle $\theta$, and angular velocity~$\dot{\theta}$.
%
\begin{figure}[H]
\centering
\includegraphics[width=0.4\columnwidth]{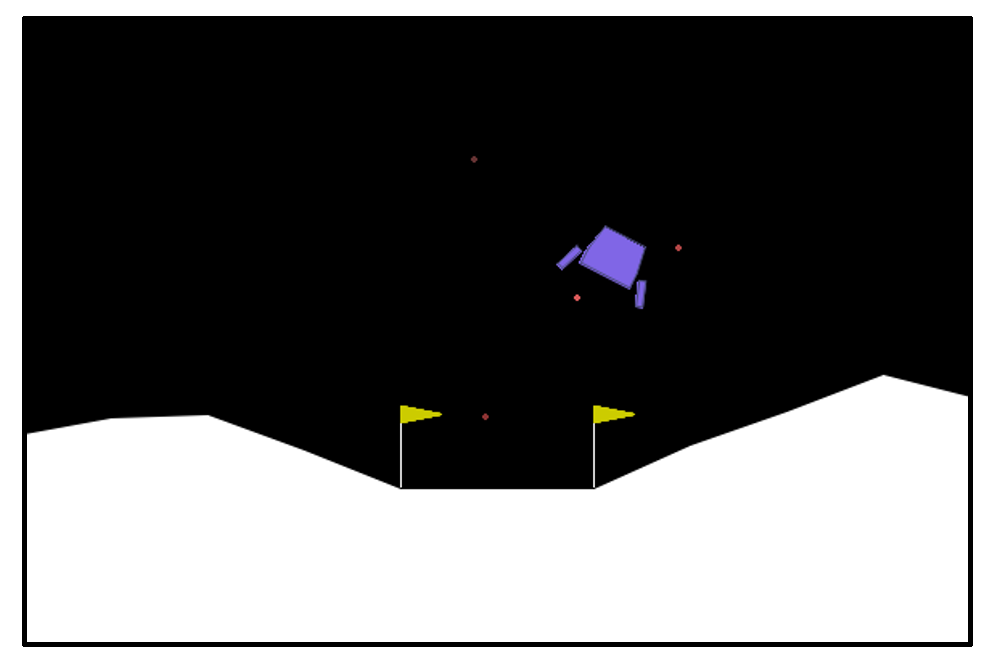}
\caption{The \textit{LunarLanderContinuous-v2} environment.}
\label{lunarlander_env}
\end{figure}

\paragraph{Controller.} 
We address the \textit{Lunar Lander (Continuous)} problem within the context of our CBRL approach by exploiting 
the following general matrix form for the LQR dynamics
\begin{align}\label{matrixAB_lunarlander}
    \dot{x} =  \begin{bmatrix} v_x \\ \ddot{p}_x \\  v_y  \\  \ddot{p}_y \\ \dot{\theta} \\ \ddot{\theta} \end{bmatrix} = \underbrace{ \begin{bmatrix} 0 & 1 & 0 & 0 & 0 & 0 \\ 0 & a_0 & 0 & a_1 & a_2 & a_3 \\ 0 & 0 & 0 & 1 & 0 & 0 \\ 0 & a_4 & 0 & a_5 & a_6 & a_7 \\ 0 & 0 & 0 & 0 & 0 & 1 \\ 0 & a_8 & 0 & a_9 & a_{10} & a_{11} \\ \end{bmatrix} }_{A_{v}} x + \underbrace{\begin{bmatrix} 0 & 0 \\ 0 & b_0 \\ 0 & 0 \\ b_1 & 0 \\ 0 & 0 \\ 0 & b_2\\ \end{bmatrix} }_{B_{v}} u,
\end{align}
which solely takes into account general physical relationships (akin to \textit{Cart Pole}) and mild physical information from the system state (e.g., the acceleration is independent of the position), with the $d=15$ unknown variables $a_0, \ldots, a_{11}, b_0, b_1, b_2$ to be learned.

\paragraph{Numerical Results.} 
Table~\ref{tab_mean_ll} and Figure~\ref{fig_all_curve_lunarlander} present numerical results for the three state-of-the-art baselines (continuous actions) and our CBRL approach, each run over five independent random seeds. 
All variables in~\eqref{matrixAB_lunarlander} are initialized uniformly within $(0,1)$ and 
then learned using our control-policy-variable gradient ascent iteration~\eqref{eq:policy-gradient}.
We consider four sets of initial variables (see Table~\ref{tab_init_para_ll}) to validate the robustness of our CBRL approach.
Figure~\ref{fig_all_curve_lunarlander}(c)~--~\ref{fig_all_curve_lunarlander}(f) illustrates the learning behavior of CBRL variables given different initialization.

Table~\ref{tab_mean_ll} and Figure~\ref{fig_all_curve_lunarlander}(a) and~\ref{fig_all_curve_lunarlander}(b) clearly demonstrate that our CBRL approach provides far superior performance, in terms of both mean and standard deviation, over all baselines w.r.t.\ both the number of episodes and running time, in addition to demonstrating a more stable training process.
More specifically, our CBRL approach outperforms all baselines with a significant improvement in mean return over independent random environment runs from Gymnasium~\citep{towers_gymnasium_2023}, together with a significant reduction in the standard deviation of the return over these independent runs, thus rendering a more robust solution approach where the performance of each run is much closer to the mean than under the corresponding baseline results.
In fact, the relative percentage difference\footnote{For the relative comparison of improved mean return performance of our CBRL method over the next-best RL method, since some of the performance values are negative and $|M_{\scriptscriptstyle RL}| < M_{\scriptscriptstyle CBRL}$, we use the standard relative percentage difference formula $\frac{M_{\scriptscriptstyle CBRL} - M_{\scriptscriptstyle RL}}{|M_{\scriptscriptstyle RL}|} \times 100$. Similarly, for the relative comparison of reduced standard deviation of return performance from the next-best RL method to our CBRL method, 
since all the performance values are positive, 
we use the standard relative percentage difference formula $\frac{S_{\scriptscriptstyle RL} - S_{\scriptscriptstyle CBRL}}{S_{\scriptscriptstyle CBRL}} \times 100$.
Refer to \url{https://en.wikipedia.org/wiki/Relative_change}}
in improved mean return performance of our CBRL approach over the next-best RL method DDPG (based on mean performance at $500$ episodes) is $309\%$ and $110\%$ at $200$ and $500$ episodes, respectively;
and the relative percentage difference in reduced standard deviation of return performance from the next-best RL method DDPG to our CBRL approach is $115\%$ and $288\%$ at $200$ and $500$ episodes, respectively.
Moreover, the relative percentage difference in improved mean return performance of our CBRL approach over the PPO RL method is $361\%$ and $390\%$ at $200$ and $500$ episodes, respectively;
and the relative percentage difference in reduced standard deviation of return performance from the PPO RL method to our CBRL approach is $639\%$ and $559\%$ at $200$ and $500$ episodes, respectively.
We note that the baseline algorithms often crash and terminate sooner than the more successful landings of our CBRL approach, resulting in the significantly worse performance exhibited in Fig.~\ref{fig_ll_episode}~--~\ref{fig_ll_variable}, Table~\ref{tab_mean_ll} and Figure~\ref{fig_all_curve_lunarlander}.

We also note an important difference between Fig.~\ref{fig_ll_episode}~--~\ref{fig_ll_second} and Fig.~\ref{fig_all_curve_lunarlander}(a)~--~\ref{fig_all_curve_lunarlander}(b), namely the shaded areas in Fig.~\ref{fig_ll_episode}~--~\ref{fig_ll_second} represent one form of variability across independent random seeds with the same variable initialization, whereas the shaded areas in Fig.~\ref{fig_all_curve_lunarlander}(a)~--~\ref{fig_all_curve_lunarlander}(b) represent the combination of two forms of variability---one across independent random seeds with the same variable initialization and the other across different random variable initializations.


   





   



         

\renewcommand{\arraystretch}{1.5}
\begin{table}[htbp]
	\centering
    \caption{Mean and Standard Deviation of Return of RL Methods for \emph{LunarLanderContinuous-v2}}
		\label{tab_mean_ll}
  \begin{tabular}{||c||c|c|c|c|c|c|c|c||}
  \hline\hline
  Episode & \multicolumn{2}{c|}{CBRL} & \multicolumn{2}{c|}{Linear} & \multicolumn{2}{c|}{PPO} & \multicolumn{2}{c||}{DDPG} \\
   Number & Mean & Std.\ Dev.\ & Mean & Std.\ Dev.\ & Mean & Std.\ Dev.\ & Mean & Std.\ Dev.\ \\
  \hline
  50 & $-128.81$ & $276.58$ & $-531.05$ & $115.63$ & $-403.56$ & $143.06$ & $-337.22$ & $98.64$ \\
  \hline
  100 & $101.33$ & $262.09$ & $-533.6$ & $130.04$ & $-260.08$ & $238.84$ & $-145.05$ & $42.29$ \\
  \hline
  150 & $258.84$ & $31.52$ & $-510.13$ & $123.56$ & $-188.50$ & $121.78$ & $-154.48$ & $43.2$ \\
  \hline
  200 & $279.24$ & $18.12$ & $-513.44$ & $129.48$ & $-106.87$ & $133.97$ & $-133.51$ & $38.96$ \\
  \hline
  250 & $280.36$ & $17.19$ & $-475.43$ & $125.81$ & $-127.29$ & $81.82$ & $-143.46$ & $112.20$ \\
  \hline
  300 & $279.19$ & $18.42$ & $-473.77$ & $127.78$ & $-109.72$ & $79.65$ & $-73.1$ & $70.24$ \\
  \hline
  350 & $280.08$ & $17.14$ & $-479.01$ & $124.67$ & $-45.12$ & $140.02$ & $-9.04$ & $98.49$ \\
  \hline
  400 & $279.51$ & $17.67$ & $-458.10$ & $131.61$ & $-75.52$ & $111.54$ & $70.04$ & $120.93$ \\
  \hline
  450 & $278.96$ & $19.22$ & $-445.52$ & $143.65$ & $-91.77$ & $105.02$ & $51.15$ & $99.21$ \\
  \hline
  500 & $280.09$ & $17.16$ & $-492.67$ & $126.01$ & $-96.50$ & $113.10$ & $133.20$ & $66.54$ \\
  \hline\hline 
  \end{tabular}
\end{table}

\renewcommand{\arraystretch}{1.5}
\begin{table}[htbp]
	\centering
		\caption{Initial Variables of \emph{LunarLanderContinuous-v2}}
		\label{tab_init_para_ll}
		\begin{tabular}{|c|cccccccc|}
        \hline
        \hline
			 Name & $a_0$
			& $a_1$ & $a_2$
			& $a_3$ & $a_4$
			& $a_5$ & $a_6$
			& $a_7$  \cr
        \hline
   
         Initial Variables 1 (P1) & 0.551 & 0.708 & 0.291 & 0.511 & 0.893 & 0.896 & 0.126 & 0.207  \cr

         Initial Variables 2 (P2) & 0.873 & 0.969 & 0.869 & 0.531 & 0.233 & 0.011 & 0.43 & 0.402  \cr

         Initial Variables 3 (P3) & 0.778 & 0.238 & 0.824 & 0.966 & 0.973 & 0.453 & 0.609 & 0.776  \cr

         Initial Variables 4 (P4) & 0.65 & 0.505 & 0.879 & 0.182 & 0.852 & 0.75 & 0.666 & 0.988  \cr

        \hline
			 Name & $a_8$
			& $a_9$ & $a_{10}$
			& $a_{11}$ & $b_0$ & $b_1$
			& $b_2$ & \\

           \hline
   
          Initial Variables 1 (P1) & 0.051 & 0.441 & 0.03 & 0.457 &
 0.649 & 0.278 & 0.676 & \\

        Initial Variables 2 (P2) & 0.523 & 0.478 & 0.555 & 0.543 &
 0.761 & 0.712 & 0.62 & \\

         Initial Variables 3 (P3) & 0.642 & 0.722 & 0.035 & 0.298 &
 0.059 & 0.857 & 0.373 & \\

        Initial Variables 4 (P4) & 0.257 & 0.028 & 0.636 & 0.847 &
 0.736 & 0.021 & 0.112 & \\
         
        \hline
        \hline
		\end{tabular}
\end{table}

\begin{figure}[b]
\centering 
\setcounter{subfigure}{0}
\subfigure[Return vs.\ Episode]
{
\begin{minipage}{0.45\linewidth}
\centering    
\includegraphics[width=1\columnwidth]{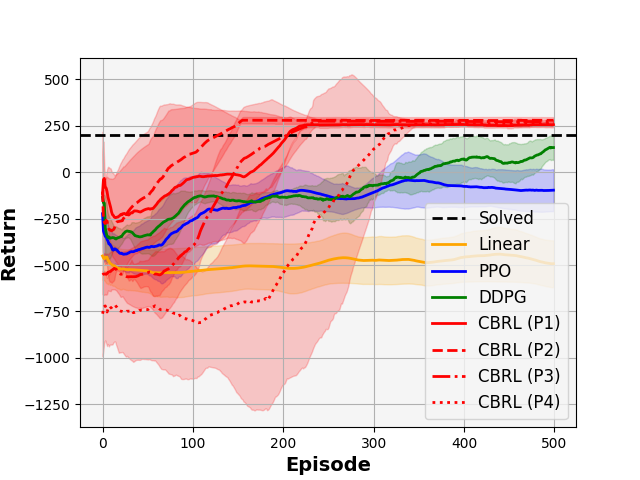}  
\end{minipage}
}
\subfigure[Return vs.\ Time]
{
\begin{minipage}{0.45\linewidth}
\centering    
\includegraphics[width=1\columnwidth]{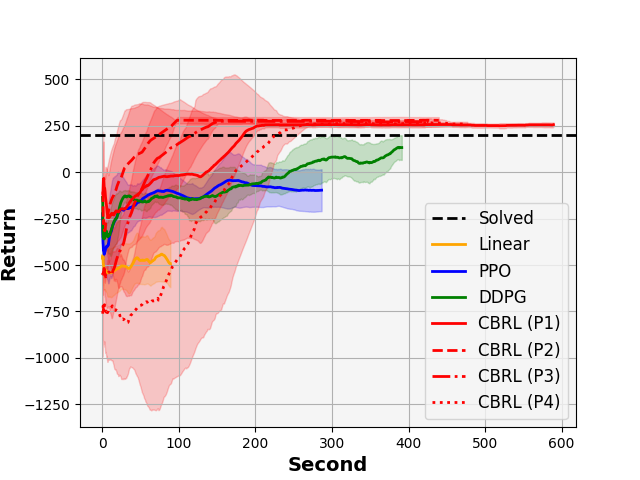}  
\end{minipage}
}\\
\subfigure[P1]
{
\begin{minipage}{0.45\linewidth}
\centering    
\includegraphics[width=1\columnwidth]{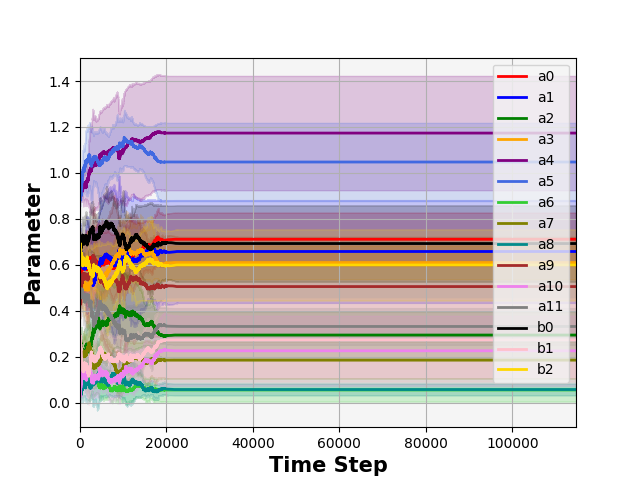}  
\end{minipage}
}
\subfigure[P2]
{
\begin{minipage}{0.45\linewidth}
\centering    
\includegraphics[width=1\columnwidth]{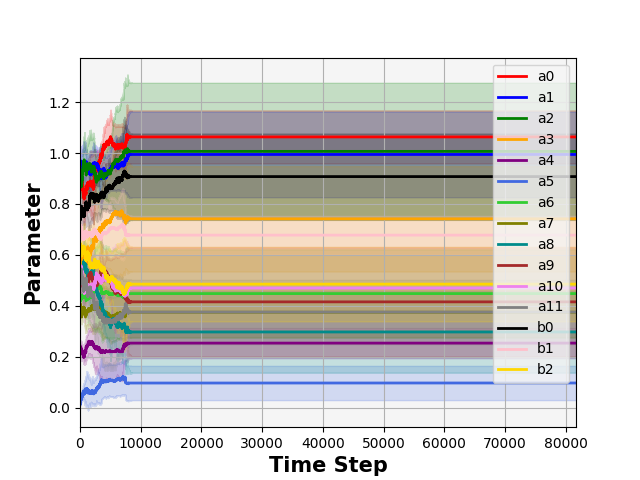}  
\end{minipage}
}\\
\subfigure[P3]
{
	\begin{minipage}{0.45\linewidth}
	\centering 
	\includegraphics[width=1\columnwidth]{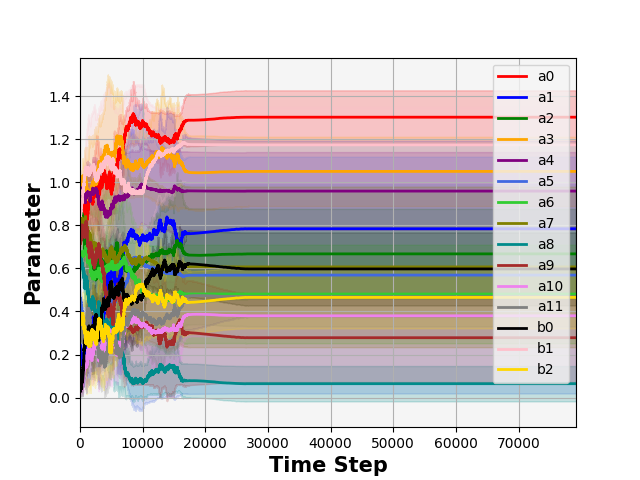}  
	\end{minipage}
}
\subfigure[P4]
{
	\begin{minipage}{0.45\linewidth}
	\centering 
	\includegraphics[width=1\columnwidth]{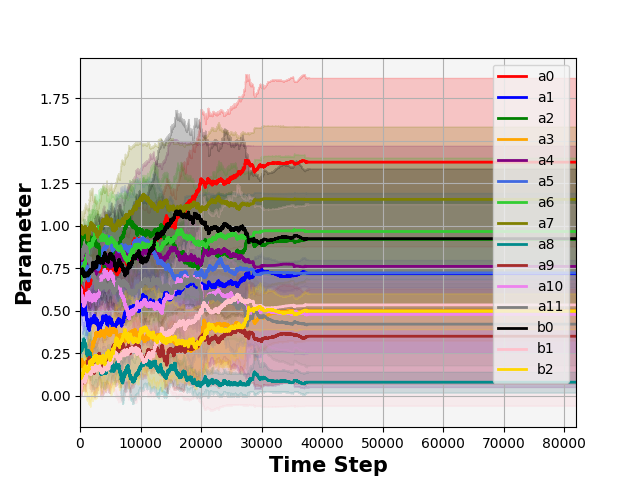}  
	\end{minipage}
}
\caption{Learning curves of \textit{LunarLanderContinuous-v2} over five independent runs. 
The solid line shows the mean and the shaded area depicts the standard deviation. 
(a) and (b): Return vs.\ number of episodes and running time, respectively, for our CBRL approach 
(over the five independent runs and the four initializations in Table~\ref{tab_init_para_ll})
in comparison with the Linear policy, PPO, and DDPG
(over the five independent runs). 
(c) -- (f): Learning behavior of CBRL variables, initialized by Table~\ref{tab_init_para_ll}.}
\label{fig_all_curve_lunarlander}
\end{figure}

\clearpage

\subsection{\textit{Mountain Car (Continuous)} under CBRL Piecewise-LQR}
\label{app:mountaincar}

\paragraph{Environment.}
As depicted in Figure~\ref{mountaincar_env} and described in 
Section~\ref{ssec:mountaincar},
\textit{Mountain Car (Continuous)} consists of a car placed in a valley with the goal of accelerating the car to reach the target at the top of the hill on the right by applying a force on the car to the left or right.
The state of the system is given by $x=[p, v]$ in terms of the position of the car $p$ and the velocity of the car $v$. 
%
%
\begin{figure}[H]
    \centering
    \includegraphics[width=0.4\columnwidth]{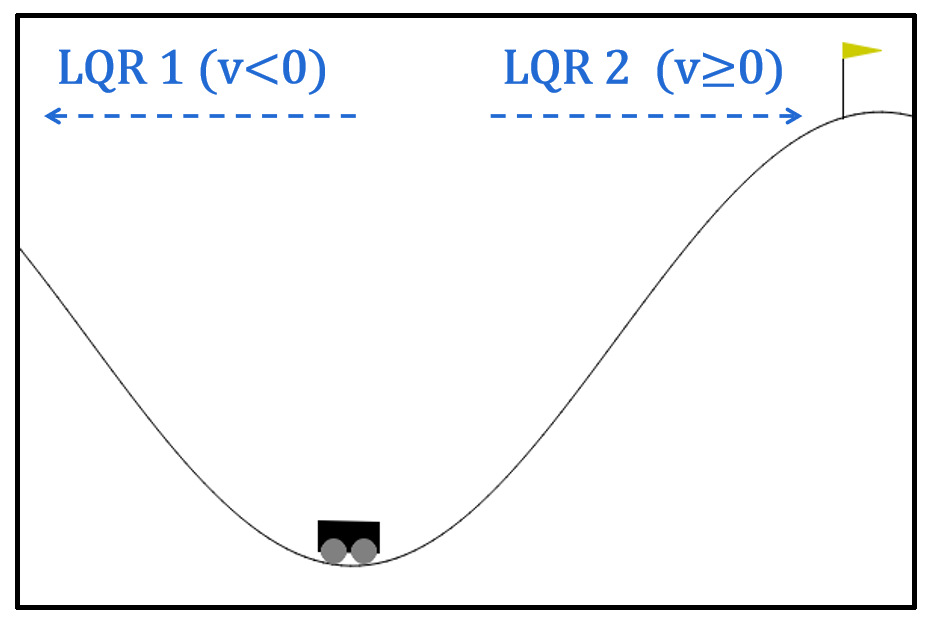}
\caption{The \textit{MountainCarContinuous-v0} environment and partitions for piecewise-LQR.}
\label{mountaincar_env}
\end{figure}

\paragraph{Controller.}
Recall that the LQR controller is not sufficient to solve the \textit{Mountain Car (Continuous)} problem,
even if all the variables of the system are known (e.g., mass of the car and gravity), because a nonlinear controller is required.
Consequently, we consider a piecewise-LQR controller
that partitions the state space into two regions: LQR 1 and LQR 2 (see Figure~\ref{mountaincar_env}).
The target state $x^*=[p^*, v^*]$ for LQR 1 and LQR 2 are respectively selected as $x^*=[-1.2, 0]$ and $x^*=[0.6, 0]$, where $-1.2$ and $0.6$ correspondingly represent the position of the left hill and the right hill.
We address the problem within the context of our CBRL approach by exploiting 
the following general matrix form for the piecewise-LQR dynamics
\begin{align}\label{matrixAB_mountaincar}
    \dot{e} = \begin{bmatrix} v - v^* \\ \dot{v} - \dot{v}^* \end{bmatrix} = \begin{bmatrix} 0 & 1 \\ a_0 \sin(3 p^*) & a_1 \end{bmatrix} e + \begin{bmatrix} 0 \\ b_0 \end{bmatrix} (u - c_0 \cos(3 p^*)),
\end{align}
which solely takes into account general physical relationships and laws, with the $d=4$ unknown variables $a_0, a_1, b_0, c_0$ to be learned
where $e=x-x^*$ and we select $v^*=0, \dot{v}^*=0$, and $p^*=-1.2$ or $0.6$.

\paragraph{Numerical Results. } 
Table~\ref{tab_mean_mc} and Figure~\ref{fig_all_curve_mountaincar} present numerical results for the three state-of-the-art baselines (continuous actions) and our CBRL approach, each run over five independent random seeds. 
All variables in~\eqref{matrixAB_mountaincar} are initialized uniformly within $(0,1)$ and then learned using our control-policy-variable gradient ascent iteration~\eqref{eq:policy-gradient}. 
We consider four sets of initial variables (see Table~\ref{tab_init_para_mc}) to validate the robustness of our CBRL approach.
Figure~\ref{fig_all_curve_mountaincar}(c)~--~\ref{fig_all_curve_mountaincar}(f) illustrates the learning behavior of CBRL variables given different initialization.

Table~\ref{tab_mean_mc} and Figure~\ref{fig_all_curve_mountaincar}(a) and~\ref{fig_all_curve_mountaincar}(b) clearly demonstrate that our CBRL approach provides superior performance, in terms of both mean and standard deviation, over all baselines w.r.t.\ both the number of episodes and running time, in addition to demonstrating a more stable training process.
More specifically, our CBRL approach outperforms all baselines with a significant improvement in mean return over independent random environment runs from Gymnasium~\citep{towers_gymnasium_2023}, together with a significant reduction in the standard deviation of the return over these independent runs, thus rendering a more robust solution approach where the performance of each run is much closer to the mean than under the corresponding baseline results.
In fact, the relative percentage difference\footnote{For the relative comparison of improved mean return performance of our CBRL method over the next-best RL method, 
since all the performance values are positive, 
we use the standard relative percentage difference formula $\frac{M_{\scriptscriptstyle CBRL} - M_{\scriptscriptstyle RL}}{M_{\scriptscriptstyle RL}} \times 100$. Similarly, for the relative comparison of reduced standard deviation of return performance from the next-best RL method to our CBRL method, 
since all the performance values are positive, 
we use the standard relative percentage difference formula $\frac{S_{\scriptscriptstyle RL} - S_{\scriptscriptstyle CBRL}}{S_{\scriptscriptstyle CBRL}} \times 100$.
Refer to \url{https://en.wikipedia.org/wiki/Relative_change}}
in improved mean return performance of our CBRL approach over the next-best RL method DDPG (based on mean performance at $500$ episodes) is $0.9\%$ and $1\%$ at $200$ and $500$ episodes, respectively;
and the relative percentage difference in reduced standard deviation of return performance from the next-best RL method DDPG to our CBRL approach is $324\%$ and $257\%$ at $200$ and $500$ episodes, respectively.
Moreover, the relative percentage difference
in improved mean return performance of our CBRL approach over the PPO RL method is $41\%$ and $23\%$ at $200$ and $500$ episodes, respectively;
and the relative percentage difference in reduced standard deviation of return performance from the PPO RL method to our CBRL approach is $5743\%$ and $729\%$ at $200$ and $500$ episodes, respectively.

We note an important difference between Fig.~\ref{fig_mc_episode}~--~\ref{fig_mc_second} and Fig.~\ref{fig_all_curve_mountaincar}(a)~--~\ref{fig_all_curve_mountaincar}(b), namely the shaded areas in Fig.~\ref{fig_mc_episode}~--~\ref{fig_mc_second} represent one form of variability across independent random seeds with the same variable initialization, whereas the shaded areas in Fig.~\ref{fig_all_curve_mountaincar}(a)~--~\ref{fig_all_curve_mountaincar}(b) represent the combination of two forms of variability---one across independent random seeds with the same variable initialization and the other across different random variable initializations.


   






\renewcommand{\arraystretch}{1.5}
\begin{table}[htbp]
	\centering
    \caption{Mean and Standard Deviation of Return of RL Methods for \emph{MountainCarContinuous-v0}}
		\label{tab_mean_mc}
  \begin{tabular}{||c||c|c|c|c|c|c|c|c||}
  \hline\hline
  Episode & \multicolumn{2}{c|}{CBRL} & \multicolumn{2}{c|}{Linear} & \multicolumn{2}{c|}{PPO} & \multicolumn{2}{c||}{DDPG} \\
   Number & Mean & Std.\ Dev.\ & Mean & Std.\ Dev.\ & Mean & Std.\ Dev.\ & Mean & Std.\ Dev.\ \\
  \hline
  50 & $79.34$ & $1.02$ & $61.09$ & $16.84$ & $59.62$ & $7.32$ & $74.20$ & $7.20$ \\
  \hline
  100 & $93.03$ & $0.37$ & $61.64$ & $17.19$ & $65.01$ & $9.69$ & $89.10$ & $4.78$ \\
  \hline
  150 & $93.63$ & $0.21$ & $63.12$ & $19.53$ & $69.58$ & $10.57$ & $91.98$ & $1.47$ \\
  \hline
  200 & $93.63$ & $0.21$ & $61.94$ & $21.83$ & $66.36$ & $12.27$ & $92.82$ & $0.89$ \\
  \hline
  250 & $93.63$ & $0.21$ & $62.30$ & $25.59$& $71.35$ & $13.15$ & $93.18$ & $0.41$ \\
  \hline
  300 & $93.63$ & $0.21$ & $62.00$ & $25.12$ & $71.25$ & $12.27$ & $93.62$ & $0.35$ \\
  \hline
  350 & $93.63$ & $0.21$ & $57.13$ & $34.55$ & $76.23$ & $3.03$ & $93.39$ & $0.51$ \\
  \hline
  400 & $93.63$ & $0.21$ & $62.42$ & $25.91$ & $81.32$ & $1.63$ & $93.15$ & $0.81$ \\
  \hline
  450 & $93.63$ & $0.21$ & $66.19$ & $20.54$ & $81.10$ & $2.02$ & $92.67$ & $0.85$ \\
  \hline
  500 & $93.63$ & $0.21$ & $61.52$ & $28.84$ & $76.31$ & $1.74$ & $92.72$ & $0.75$ \\
  \hline\hline 
  \end{tabular}
\end{table}

\renewcommand{\arraystretch}{1.5}
\begin{table}[htbp]
	\centering
		\caption{Initial Variables of \emph{MountainCarContinuous-v0}}
		\label{tab_init_para_mc}
		\begin{tabular}{|c|cccc|}
        \hline
        \hline
			 Name & $a_0$
			& $a_1$ & $b_0$ & $c_0$ \\
        \hline
   
          Initial Variables 1 (P1) & 0.549 & 0.715 & 0.603 & 0.545 \\

         Initial Variables 2 (P2) & 0.222 & 0.871 & 0.207 & 0.919 \\

         Initial Variables 3 (P3) & 0.771 & 0.021 & 0.634 & 0.749 \\


         Initial Variables 4 (P4) & 0.588 & 0.898 & 0.892 & 0.816 \\

        \hline
        \hline
		\end{tabular}
\end{table}

\begin{figure}[b]
\centering 
\setcounter{subfigure}{0}
\subfigure[Return vs.\ Episode]
{
\begin{minipage}{0.45\linewidth}
\centering    
\includegraphics[width=1\columnwidth]{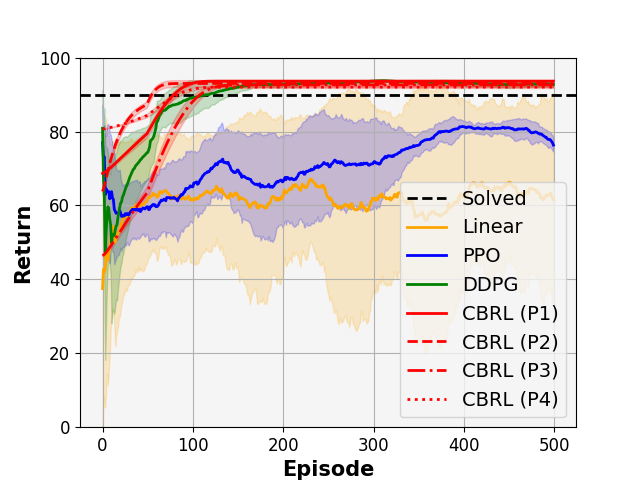}  
\end{minipage}
}
\subfigure[Return vs.\ Time]
{
\begin{minipage}{0.45\linewidth}
\centering    
\includegraphics[width=1\columnwidth]{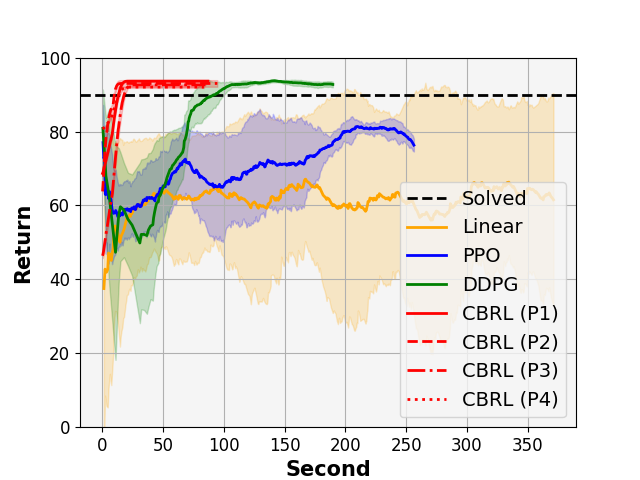}  
\end{minipage}
}\\
\subfigure[P1]
{
\begin{minipage}{0.45\linewidth}
\centering    
\includegraphics[width=1\columnwidth]{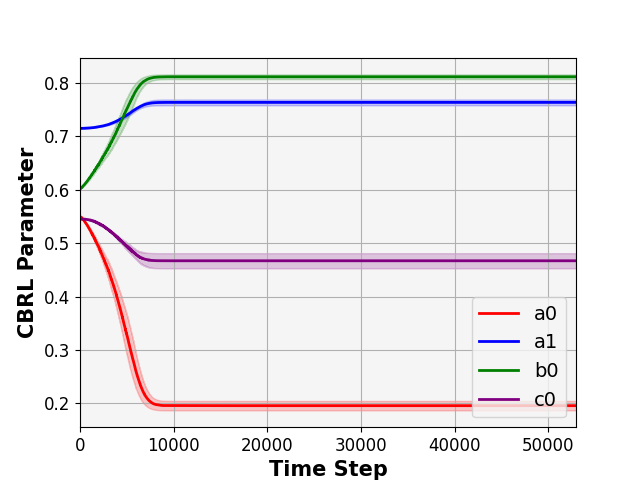}  
\end{minipage}
}
\subfigure[P2]
{
\begin{minipage}{0.45\linewidth}
\centering    
\includegraphics[width=1\columnwidth]{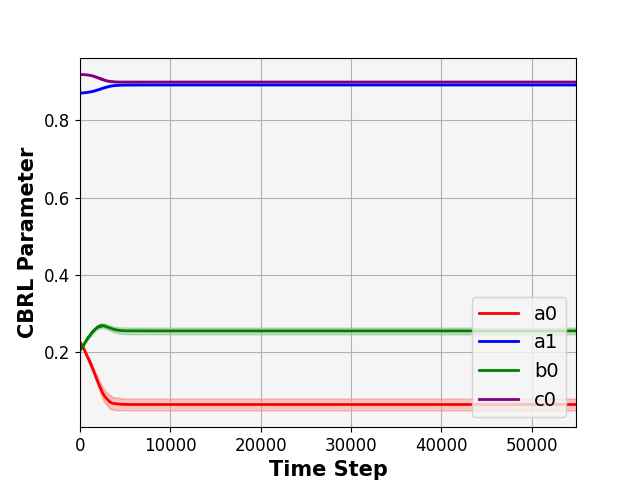}  
\end{minipage}
}\\
\subfigure[P3]
{
	\begin{minipage}{0.45\linewidth}
	\centering 
	\includegraphics[width=1\columnwidth]{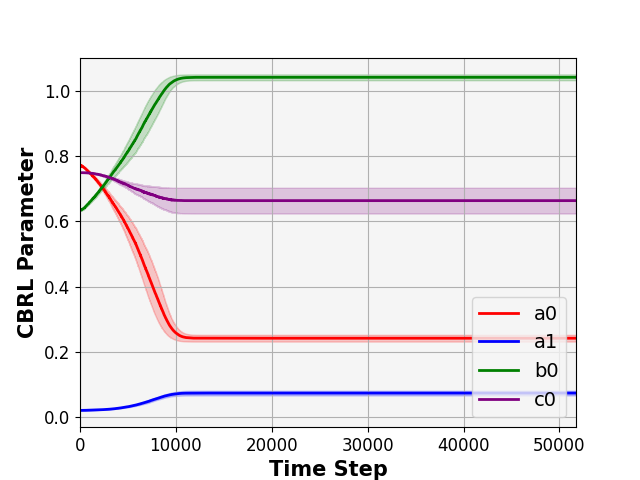}  
	\end{minipage}
}
\subfigure[P4]
{
	\begin{minipage}{0.45\linewidth}
	\centering 
	\includegraphics[width=1\columnwidth]{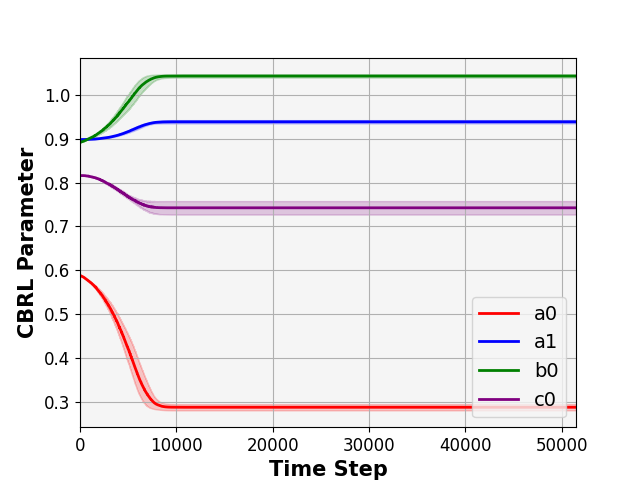}  
	\end{minipage}
}
\caption{Learning curves of \textit{MountainCarContinuous-v0} over five independent runs. 
The solid line shows the mean and the shaded area depicts the standard deviation. 
(a) and (b): Return vs.\ number of episodes and running time, respectively, for our CBRL approach 
(over the five independent runs and the four initializations in Table~\ref{tab_init_para_mc})
in comparison with the Linear policy, PPO, and DDPG
(over the five independent runs).
(c) -- (f): Learning behavior of CBRL variables, initialized by Table~\ref{tab_init_para_mc}.}
\label{fig_all_curve_mountaincar}
\end{figure}

\clearpage

\subsection{\textit{Pendulum} under CBRL Piecewise-LQR}
\label{app:pendulum}

\paragraph{Environment.}
As depicted in Figure~\ref{pendulum_env} and described in 
Section~\ref{ssec:pendulum},
\textit{Pendulum} consists of a link attached at one end to a fixed point and the other end being free, with the goal of swinging up to an upright position by applying a torque on the free end.
The state of the system is given by $x=[\theta, \dot{\theta}]$ in terms of the angle of the link $\theta$ and the angular velocity of the link $\dot{\theta}$. 
%
%
\begin{figure}[H]
    \centering
    \includegraphics[width=0.3\columnwidth]{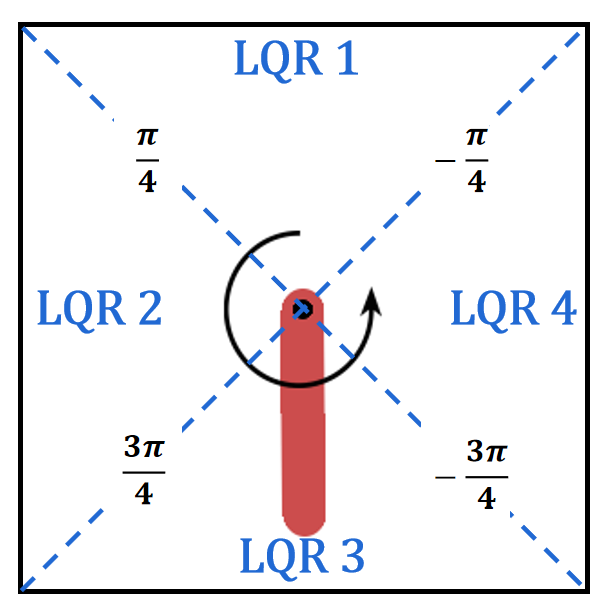}
\caption{The \textit{Pendulum-v1} environment and partitions for piecewise-LQR.}
\label{pendulum_env}
\end{figure}

\paragraph{Controller.}
Recall that the LQR controller is not sufficient to solve the \textit{Pendulum} problem, even if all the variables of the system are known (e.g., mass of the link $m$, length of the link $l$, moment of inertia of the link $J$, and gravity $g$), because a nonlinear controller is required. Consequently, we consider a piecewise-LQR controller that partitions the state space into four regions: LQR 1, LQR 2, LQR 3, LQR 4 (see Figure~\ref{pendulum_env}). 
In terms of the target state $x^*=[\theta^*, \dot{\theta}^*]$, $\theta^*$ is selected based on the boundary angle in each partition (counter-clockwise boundary angle if $\dot{\theta} > 0$, and clockwise boundary angle otherwise), while $\dot{\theta}^*$ is selected based on the energy conservation law. More specifically, $\theta^*$ in LQR 1 is selected to be $\theta^* = \pi/4$ if $\dot{\theta} > 0$, and to be $\theta^* = -\pi/4$ otherwise.
Given the assumption that the link reaches a zero velocity at the upright position, $\dot{\theta}^*$ is selected at any position to be $\dot{\theta}^* = \sqrt{m g l (1- \cos(\theta)) / J}$. We address the problem within the context of our CBRL approach by exploiting 
the following general matrix form for the piecewise-LQR dynamics
\begin{align}\label{matrixAB_pendulum}
    \dot{e} = \begin{bmatrix} \dot{\theta} - \dot{\theta}^* \\ \ddot{\theta} - \ddot{\theta}^* \end{bmatrix} = \begin{bmatrix} 0 & 1 \\ a_0 \cos(\theta^*) & a_1 \end{bmatrix} e + \begin{bmatrix} 0 \\ b_0 \end{bmatrix} (u + c_0 \sin(\theta^*)), \quad \dot{\theta}^* = \sqrt{d_0 ( 1-\cos(\theta) )},
\end{align}
which solely takes into account general physical relationships and laws, with the $d=5$ unknown variables $a_0, a_1, b_0, c_0, d_0$ to be learned
where $e=x-x^*$ and we select $\ddot{\theta}^*=0$.


\paragraph{Numerical Results.}
Table~\ref{tab_mean_pd} and Figure~\ref{fig_all_curve_pendulum} present numerical results for the three state-of-the-art baselines (continuous actions) and our CBRL approach, each run over five independent random seeds. 
All variables in~\eqref{matrixAB_pendulum} are initialized uniformly within $(0,1)$ and then learned using our control-policy-variable gradient ascent iteration~\eqref{eq:policy-gradient}. 
We consider four sets of initial variables (see Table~\ref{tab_init_para_pd}) to validate the robustness of our CBRL approach. 
Figure~\ref{fig_all_curve_pendulum}(c)~--~\ref{fig_all_curve_pendulum}(f) illustrates the learning behavior of CBRL variables given different initialization.

Table~\ref{tab_mean_pd} and Figure~\ref{fig_all_curve_pendulum}(a) and~\ref{fig_all_curve_pendulum}(b) clearly demonstrate that our CBRL approach provides far superior performance, in terms of both mean and standard deviation, over all baselines w.r.t.\ both the number of episodes and running time upon convergence of our CBRL algorithm after a relatively small number of episodes,
in addition to demonstrating a more stable training process.
More specifically, even with only four partitions for the difficult nonlinear \textit{Pendulum} RL task, our CBRL approach outperforms all baselines after $150$ episodes with a significant improvement in mean return over independent random environment runs from Gymnasium~\citep{towers_gymnasium_2023}, together with a significant reduction in the standard deviation of the return over these independent runs, thus rendering a more robust solution approach where the performance of each run is much closer to the mean than under the corresponding baseline results.
In fact, the relative percentage difference\footnote{For the relative comparison of improved mean return performance of our CBRL method over the next-best RL method, 
since all the performance values are negative,
we use the reversed standard relative percentage difference formula $\frac{|M_{\scriptscriptstyle RL} - M_{\scriptscriptstyle CBRL}|}{|M_{\scriptscriptstyle CBRL}|} \times 100$. Similarly, for the relative comparison of reduced standard deviation of return performance from the next-best RL method to our CBRL method, 
since all the performance values are positive, 
we use the standard relative percentage difference formula $\frac{S_{\scriptscriptstyle RL} - S_{\scriptscriptstyle CBRL}}{S_{\scriptscriptstyle CBRL}} \times 100$.
Refer to \url{https://en.wikipedia.org/wiki/Relative_change}}
in improved mean return performance of our CBRL approach over the next-best RL method DDPG (based on mean performance at $500$ episodes) is $40\%$ and $53\%$ at $200$ and $500$ episodes, respectively;
and the relative percentage difference in reduced standard deviation of return performance from the next-best RL method DDPG to our CBRL approach is $210\%$ and $263\%$ at $200$ and $500$ episodes, respectively.
Moreover, the relative percentage difference
in improved mean return performance of our CBRL approach over the PPO RL method is $197\%$ and $168\%$ at $200$ and $500$ episodes, respectively;
and the relative percentage difference in reduced standard deviation of return performance from the PPO RL method to our CBRL approach is $275\%$ and $793\%$ at $200$ and $500$ episodes, respectively.

We note an important difference between Fig.~\ref{fig_pd_episode}~--~\ref{fig_pd_second} and Fig.~\ref{fig_all_curve_pendulum}(a)~--~\ref{fig_all_curve_pendulum}(b), namely the shaded areas in Fig.~\ref{fig_pd_episode}~--~\ref{fig_pd_second} represent one form of variability across independent random seeds with the same variable initialization, whereas the shaded areas in Fig.~\ref{fig_all_curve_pendulum}(a)~--~\ref{fig_all_curve_pendulum}(b) represent the combination of two forms of variability---one across independent random seeds with the same variable initialization and the other across different random variable initializations.


   





\renewcommand{\arraystretch}{1.5}
\begin{table}[htbp]
	\centering
    \caption{Mean and Standard Deviation of Return of RL Methods for \emph{Pendulum-v1}}
		\label{tab_mean_pd}
  \begin{tabular}{||c||c|c|c|c|c|c|c|c||}
  \hline\hline
  Episode & \multicolumn{2}{c|}{CBRL} & \multicolumn{2}{c|}{Linear} & \multicolumn{2}{c|}{PPO} & \multicolumn{2}{c||}{DDPG} \\
   Number & Mean & Std.\ Dev.\ & Mean & Std.\ Dev.\ & Mean & Std.\ Dev.\ & Mean & Std.\ Dev.\ \\
  \hline
  50 & $-1378.78$ & $43.89$ & $-1331.73$ & $36.25$ & $-1147.35$ & $181.20$ & $-942.18$ & $170.83$ \\
  \hline
  100 & $-1004.75$ & $101.66$ & $-1145.30$ & $53.57$ & $-757.11$ & $205.66$ & $-355.09$ & $324.16$ \\
  \hline
  150 & $-284.26$ & $130.17$ & $-1088.67$ & $83.73$ & $-559.65$ & $151.39$ & $-241.16$ & $162.85$ \\
  \hline
  200 & $-172.48$ & $48.32$ & $-1062.06$ & $102.63$ & $-511.85$ & $181.28$ & $-240.94$ & $149.57$ \\
  \hline
  250 & $-152.80$ & $37.43$ & $-1058.88$ & $92.55$ & $-561.78$ & $384.15$ & $-222.18$ & $151.71$ \\
  \hline
  300 & $-145.56$ & $44.16$ & $-1057.02$ & $90.29$ & $-524.10$ & $259.46$ & $-221.78$ & $150.40$ \\
  \hline
  350 & $-141.80$ & $36.63$ & $-1036.65$ & $88.37$ & $-325.66$ & $133.77$ & $-222.83$ & $146.68$ \\
  \hline
  400 & $-149.34$ & $51.70$ & $-1044.34$ & $100.90$ & $-218.37$ & $100.83$ & $-220.28$ & $148.80$ \\
  \hline
  450 & $-139.65$ & $32.36$ & $-1032.82$ & $87.11$ & $-296.94$ & $184.32$ & $-220.50$ & $148.68$ \\
  \hline
  500 & $-143.95$ & $40.91$ & $-999.41$ & $78.75$ & $-385.50$ & $365.39$ & $-220.38$ & $148.54$ \\
  \hline\hline 
  \end{tabular}
\end{table}

\renewcommand{\arraystretch}{1.5}
\begin{table}[htbp]
	\centering
		\caption{Initial Variables of \emph{Pendulum-v1}}
		\label{tab_init_para_pd}
		\begin{tabular}{|c|ccccc|}
        \hline
        \hline
			 Name & $a_0$
			& $a_1$ & $b_0$ & $c_0$ & $d_0$ \\
        \hline
   
         Initial Variables 1 (P1) & 0.417 & 0.72 & 0.302 & 0.147 & 0.092 \\

        Initial Variables 2 (P2) & 0.893 & 0.332 & 0.821 & 0.042 & 0.108 \\

         Initial Variables 3 (P3) & 0.18 & 0.019 & 0.463 & 0.725 & 0.42 \\

        Initial Variables 4 (P4) & 0.223 & 0.523 & 0.551 & 0.046 & 0.361 \\

        \hline
        \hline
		\end{tabular}
\end{table}

\begin{figure}[b]
\centering 
\setcounter{subfigure}{0}
\subfigure[Return vs.\ Episode]
{
\begin{minipage}{0.45\linewidth}
\centering    
\includegraphics[width=1\columnwidth]{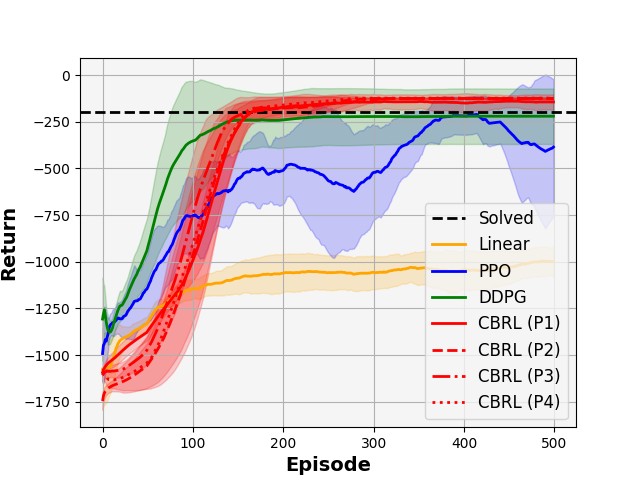}  
\end{minipage}
}
\subfigure[Return vs.\ Time]
{
\begin{minipage}{0.45\linewidth}
\centering    
\includegraphics[width=1\columnwidth]{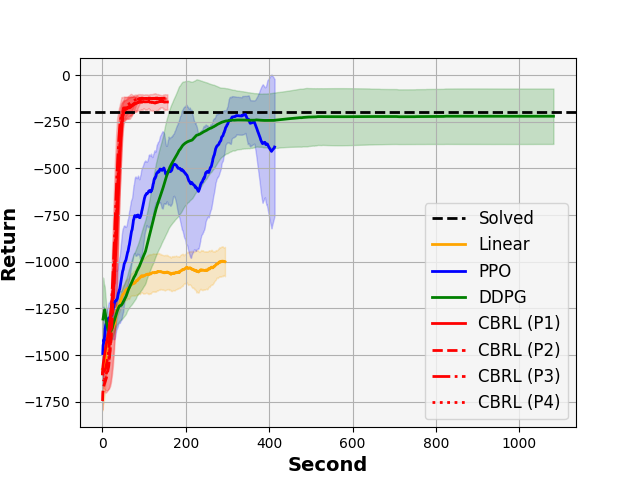}  
\end{minipage}
}
\subfigure[P1]
{
\begin{minipage}{0.45\linewidth}
\centering    
\includegraphics[width=1\columnwidth]{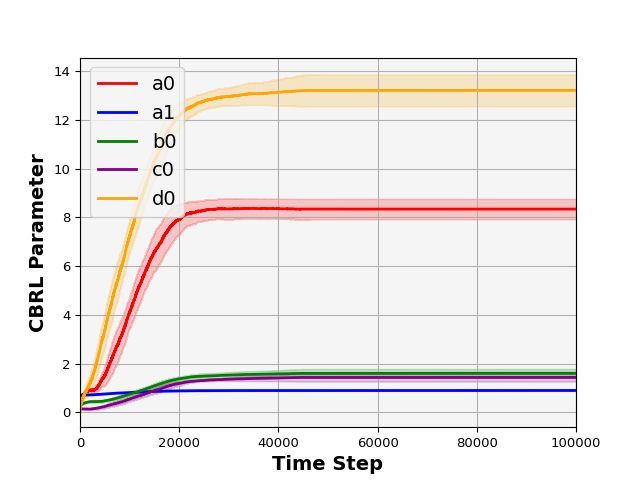}  
\end{minipage}
}
\subfigure[P2]
{
\begin{minipage}{0.45\linewidth}
\centering    
\includegraphics[width=1\columnwidth]{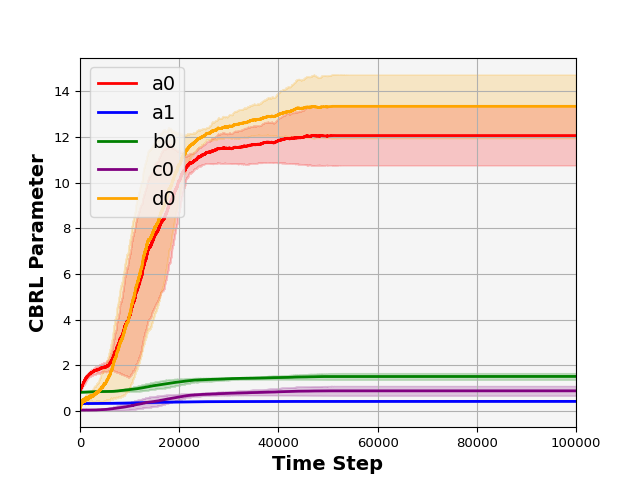}  
\end{minipage}
}\\
\subfigure[P3]
{
	\begin{minipage}{0.45\linewidth}
	\centering 
	\includegraphics[width=1\columnwidth]{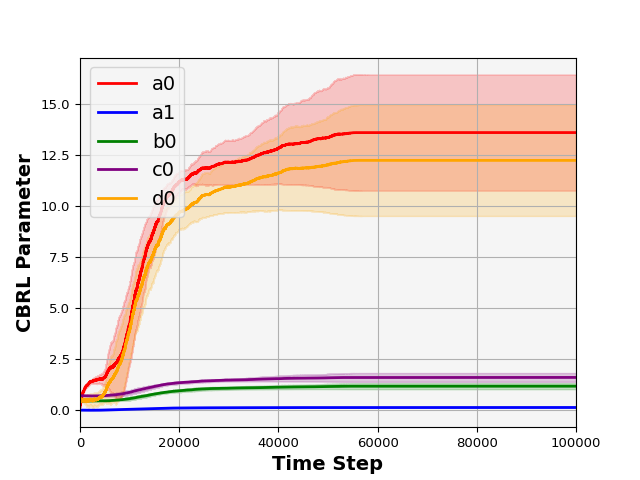}  
	\end{minipage}
}
\subfigure[P4]
{
	\begin{minipage}{0.45\linewidth}
	\centering 
	\includegraphics[width=1\columnwidth]{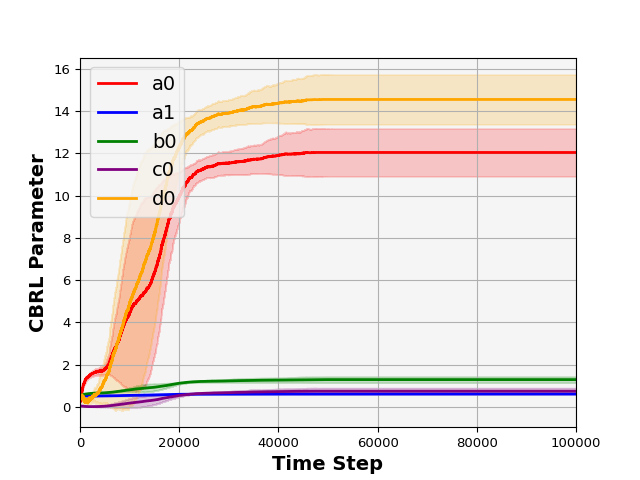}  
	\end{minipage}
}
\caption{Learning curves of \textit{Pendulum-v1} over five independent runs. 
The solid line shows the mean and the shaded area depicts the standard deviation. 
(a) and (b): Return vs.\ number of episodes and running time, respectively, for our CBRL approach 
(over the five independent runs and the four initializations in Table~\ref{tab_init_para_pd})
in comparison with the Linear policy, PPO, and DDPG
(over the five independent runs). 
(c) -- (f): Learning behavior of CBRL variables, initialized by Table~\ref{tab_init_para_pd}.}
\label{fig_all_curve_pendulum}
\end{figure}

\clearpage

\subsection{Implementation}

\paragraph{Baselines.}
As previously noted, the evaluation of our CBRL approach includes comparisons against the three baselines of DQN for discrete actions, DDPG for continuous actions and PPO, which are selected as the state-of-the-art RL algorithms for solving the 
RL
tasks under consideration.
The additional baseline of replacing the nonlinear policy of PPO with a linear policy was again included because the optimal policy for some problems, e.g., \textit{Cart Pole}, is known to be linear.
This Linear algorithm solely replaces the policy network of PPO with a linear policy while maintaining all other components unchanged. More specifically, we decrease the number of hidden layers in the policy network by $1$, reduce the hidden layer size of the policy network to $16$, and remove all (nonlinear) activation functions in the policy network.

\paragraph{Hyperparameters.}
We summarize in Table~\ref{tab_hyper_nn} the hyperparameters that have been used in our numerical experiments. To ensure a fair comparison, we maintain consistent hyperparameters (e.g., hidden layer size of the value network and the activation function) across all algorithms wherever applicable.


\renewcommand{\arraystretch}{1.5}
\begin{table}[h]
	\centering
		\caption{Summary of Experimental Hyperparameters }
		\label{tab_hyper_nn}
		\begin{tabular}{|c|ccccc|}
        \hline
        \hline
			 Algorithm & Linear & PPO
			& DDPG & DQN & CBRL \\
        \hline
            Action Generator & NN & NN & NN & NN & CBRL-LQR \\
   
            \# of Hidden Layers (Policy) & 1 & 2 & 2 & 2 & N/A \\

            \# of Hidden Layers (Value) & 2 & 2 & 2 & 2 & 2 \\

            Hidden Layer Size (Policy) & 16 & 128 & 128 & 128 & N/A \\

           Hidden Layer Size (Value) & 128 & 128 & 128 & 128 & 128 \\

            Activation (Policy) & N/A & ReLU \& Tanh & ReLU \& Tanh & ReLU \& Tanh & N/A \\

            Activation (Value) & ReLU & ReLU & ReLU & ReLU & ReLU \\

            Discount Factor & 0.99 & 0.99 & 0.99 & 0.99 & 0.99 \\

            Clip Ratio & 0.2 & 0.2 & N/A & N/A & N/A \\

           Soft Update Ratio & N/A & N/A & 0.005 & N/A & N/A \\

        \hline
        \hline
		\end{tabular}
\end{table}



\end{document}